\def\eqref#1{equation~\ref{#1}}
\def\1{\bm{1}}
\def\mX{{\bm{X}}}
\DeclareMathAlphabet{\mathsfit}{\encodingdefault}{\sfdefault}{m}{sl}
\SetMathAlphabet{\mathsfit}{bold}{\encodingdefault}{\sfdefault}{bx}{n}
\def\sA{{\mathbb{A}}}
\def\sF{{\mathbb{F}}}
\def\sO{{\mathbb{O}}}
\def\sR{{\mathbb{R}}}
\def\sS{{\mathbb{S}}}
\def\sX{{\mathbb{X}}}
\def\sY{{\mathbb{Y}}}
\theoremstyle{plain}
\newtheorem{theorem}{Theorem}[section]
\newtheorem{lemma}[theorem]{Lemma}
\newtheorem{corollary}[theorem]{Corollary}
\newtheorem{definition}[theorem]{Definition}
\newcommand{\dotp}[1]{\left\langle #1\right\rangle}
\title{Neural networks are \textit{a priori} biased towards Boolean functions with low entropy}
\author{Chris Mingard\\
Hertford College\\
University of Oxford\\
Oxford, UK \\
\texttt{christopher.mingard@hertford.ox.ac.uk} \\
\And
Joar Skalse\\
Hertford College\\
University of Oxford\\
Oxford, UK \\
\texttt{joar.skalse@hertford.ox.ac.uk} \\
\And
Guillermo Valle P\'erez\\
Department of Physics \\
University of Oxford \\
Oxford, UK \\
\texttt{guillermo.valle@dtc.ox.ac.uk}
\And
David Mart\'inez-Rubio \\
Department of Computer Science \\
University of Oxford \\
Oxford, UK\\
\texttt{david.martinez@cs.ox.ac.uk}
\And
Vladimir Mikulik\\
Hertford College\\
University of Oxford\\
Oxford, UK \\
\texttt{vladimir.mikulik@hertford.ox.ac.uk} \\
\And
Ard A Louis\\
Department of Physics \\
University of Oxford \\
Oxford, UK \\
\texttt{ard.louis@physics.ox.ac.uk}
}
\begin{document}

\maketitle

\begin{abstract}
Understanding the inductive bias of neural networks is critical to explaining their ability to generalise.  Here,  
for one of the simplest neural networks -- a single-layer perceptron with $n$ input neurons,  one output neuron, and no threshold bias term -- we prove that upon random initialisation of weights, the \textit{a priori} probability  $P(t)$ that it represents a Boolean function that classifies $t$ points in $\{0,1\}^n$ as $1$ has a remarkably simple form: $
P(t) = 2^{-n} \,\, {\rm for} \,\, 0\leq t < 2^n$. %

Since a perceptron can express far fewer Boolean functions with small or large values of $t$ (low \enquote{entropy}) than with intermediate values of $t$ (high \enquote{entropy}) there is, \emph{on average}, a strong intrinsic \textit{a-priori} bias towards individual functions with low entropy. 
Furthermore, within a class of functions with fixed $t$, we often observe a further intrinsic bias towards functions of lower complexity.
Finally, we prove that, regardless of the distribution of inputs, the bias towards low entropy becomes monotonically stronger upon adding ReLU layers, and empirically show that increasing the variance of the bias term has a similar effect.
\end{abstract}

\section{Introduction}\label{intro}

In order to generalise beyond training data, learning algorithms need some sort of inductive bias. The particular form of the inductive bias dictates the performance of the algorithm.  For one of the most important machine learning techniques, deep neural networks (DNNs)~\citep{lecun2015deep}, sources of inductive bias can include the architecture of the networks, e.g.\ the number of layers, how they are connected, say as a fully connected network (FCN) or as a convolutional neural net (CNN), and the type of optimisation algorithm used, e.g. stochastic gradient descent (SGD) versus full gradient descent (GD). Many further methods such as dropout \citep{srivastava2014dropout}, weight decay \citep{krogh1992simple} and early stopping \citep{morgan1990generalization} have been proposed as techniques to improve the inductive bias towards desired solutions that generalise well.  What is particularly surprising about DNNs is that they are highly expressive and work well in the heavily overparameterised regime where traditional learning theory would predict poor generalisation due to overfitting~\citep{zhang2016understanding}.  DNNs must therefore have a strong intrinsic bias that allows for good generalisation, in spite of being in the overparameterised regime.

Here we study the intrinsic bias of the parameter-function map for neural networks, defined in~\citep{GVP} as the map between a set of parameters and the function that the neural network represents.  In particular, we define the \textit{a-priori}  probability $P(f)$ of a DNN as the probability that a particular function $f$ is produced upon random sampling (or initialisation) of the weight and threshold bias parameters. The prior at initialization, $P(f)$, should inform the inductive bias of SGD-trained neural networks, as long as SGD approximates Bayesian inference with $P(f)$ as prior sufficiently well \cite{GVP}. We explain this connection further, and give some evidence supporting this behavior of SGD, in \cref{initialization_vs_induction}. This supports the idea studying neural networks with random parameters \cite{poole2016exponential,lee2018deep,schoenholz2016deep,garriga2018deep,novak2018bayesian} is not just relevant to find good initializations for optimization, but also to understand their generalization.

A naive null-model for $P(f)$ might suggest that without further information, one should expect that all functions are equally likely. However, recent very general arguments~\citep{KamaludinEtAl} based on the coding theorem from Algorithmic Information Theory (AIT) \citep{li2008introduction} have instead suggested that for a wide range of maps $M$ that obey a number of conditions such as being simple (they have a low Kolmogorov complexity $K(M)$) and redundancy (multiple inputs map to the same output) then \emph{if they are sufficiently biased}, they  will be exponentially biased towards outputs of low Kolmogorov complexity.  The parameter-function map of neural networks satisfies these conditions, and it was found empirically~\citep{GVP} that, as predicted in~\citep{KamaludinEtAl},  the probability $P(f)$ of obtaining a function $f$ upon random sampling of parameter weights satisfies the following \emph{simplicity-bias} bound
\begin{equation}\label{eq:SB}
    P(f) \lesssim  2^{-(b\widetilde{K}(f)+a)},
\end{equation}
where $\widetilde{K}(f)$ is a computable approximation of the true Kolmogorov complexity $K(f)$, and $a$ and $b$ are constants that depend on the network, but not on the functions. 

It is widely expected that real world data is highly structured, and so has a relatively low Kolmogorov complexity~\citep{hinton1993keeping,schmidhuber1997discovering}. The simplicity bias described above may 
therefore be an important source of the inductive bias that allows DNNs to generalise so well (and not overfit) in the highly over-parameterised regime~\citep{GVP}. 

Nevertheless, this bound has limitations. Firstly, the only rigorously proven result is for the true Kolmogorov complexity version of the bound in the case of large enough $K(f)$. Although it has been found to work remarkably well for small systems and computable approximations to Kolmogorov complexity \citep{GVP,KamaludinEtAl}, this success is not yet fully understood theoretically. Secondly, it does not explain why models like DNNs are biased; it only explains that, if they are biased, they should be biased towards simplicity.  Also, the AIT bound is very general -- it predicts a probability $P(f)$ that depends mainly on the function, and only weakly on the network.   It may therefore not capture some variations in the bias that are due to details of the network architecture, and which may be important for practical applications.

For these reasons it is of interest to  obtain a finer quantitative understanding of the simplicity bias of neural networks. Some work has been done in this direction, showing that infinitely wide neural networks are biased towards functions which are robust to changes in the input~\citep{SethLloyd}, showing that ``flatness'' is connected to function smoothness~\citep{WuEtAl}, or arguing that low Fourier frequencies are learned first by a ReLU neural network~\citep{RahmanEtAl,yang2019fine}.  All of these papers take some notion of ``smoothness'' as tractable proxy for the complexity of a function. One generally expects smoother functions to be simpler, although this is clearly a very rough measure of the Kolmogorov complexity.

\section{Summary of key results}

In this paper we study 
how likely different Boolean functions, defined as $f : \{0,1\}^n \to \{0,1\}$, 
are obtained upon randomly chosen weights of neural networks.   
Our key results are aimed at fleshing out with more precision and rigour what the inductive biases of (very) simple neural networks are, and how they arise For this, we study the prior distribution over functions $P(f)$, upon random initialization of the parameters, which reflects the inductive bias of training algorithms that approximate Bayesian inference (see \cref{initialization_vs_induction} for a detailed explanation, and data on how well SGD follows this behaviour).
We focus our study on a notion of complexity, namely the ``entropy,'' $H(f)$, of a Boolean function $f$, defined as the binary entropy of the fraction of possible inputs to $f$ that $f$ maps to $1$. This quantity  essentially measures the amount of class imbalance of the function, and is complementary to previous works studying notions of smoothness as a proxy for complexity.

\begin{enumerate}
    \item In \cref{section:perceptron}  we study a simple perceptron with no threshold bias term, and with weights $w$ sampled from a distribution which is symmetric under reflections along the coordinate axes.  
    Let the random variable $T$ correspond to the number of points in $\{0,1\}^n$ which that fall above the decision boundary of the network (i.e. T=$|\{x\in\{0,1\}^n:\langle w,x \rangle>0\}|$) upon i.i.d.\ random initialisation of the weights. We prove that $T$ is distributed uniformly, i.e.\  $P(T=t)=2^{-n}$ for $0\leq t <2^n$. 
    Let $\sF_t$ be the set of all functions with $T=t$ that the perceptron can produce and let $|\sF_t|$ be its size (cf. \cref{def:Ft_and_P(t)}). We expect $|\sF_t|$ for $t\sim 2^{n-1}$ (high entropy) to be (much) larger than $|\sF_t|$  for extreme values of $t$ (low entropy). The average probability of obtaining a particular function $f$ which maps $t$ inputs to $1$ is $2^{-n}/|\sF_t|$.  The perceptron therefore shows a strong bias towards functions with low entropy, in the sense that individual functions with low entropy have, on average, higher probability than individual functions with high entropy.

    \item In \cref{subsection:bias_within_t}, we show that within the sets $\sF_t$, there is a further bias, and in some cases this is clearly towards simple functions  which correlates with Lempel-Ziv complexity \citep{lempel1976complexity,KamaludinEtAl}, as predicted in ~\citep{GVP}.  

    \item In \cref{subsection:with_bias}, we show that adding a threshold bias term to a perceptron significantly increases the bias towards low entropy.

    \item In \cref{subsection:expressivity_limits}, we provide a new expressivity bound for Boolean functions:  DNNs with input size $n$, $l$ hidden layers each with width $n+2^{n-1-\log_2l}+1$ and a single output neuron can express all $2^{2^n}$ Boolean functions over $n$ variables.
     \item In \cref{subsection:multi_bias} we generalise our results to neural networks with multiple layers, proving (in the infinite-width limit) that the bias towards low entropy increases with the number of ReLU-activated layers.
\end{enumerate}

In~\cref{bias_real_world}, we also show some empirical evidence that the results derived in this paper seem to generalize beyond the assumptions of our theoretical analysis, to more complicated data distributions (MNIST, CIFAR) and architectures (CNNs). Finally, in~\cref{bias_vs_learning}, we show preliminary results on the effect of entropy-like biases in $P(f)$ on learning class-imbalanced data.

\section{Definitions, Terminology, and Notation}\label{section:definitions}

\begin{definition}[DNNs]\label{definition:DNNs}
Fully connected feed-forward neural networks with activations $\sigma$ and a single output neuron form a parameterised function family $f(x)$ on inputs $x \in \mathbb{R}^n$. This can be  defined recursively, for $L$ hidden layers for $1\leq l \leq L$, as
\begin{align*}
    f(x) &= \1(h^{(L+1)}(x)), \\
    h^{(l+1)}(x) &= w_l\sigma(h^{(l)}) + b_l, \\
    h^{(1)}(x) &= w_0 x + b_0,
\end{align*}
where $\1(X)$ is the Heaviside step function defined as $1$ if $X > 0$ and $0$ otherwise, and $\sigma$ is an activation function that acts element-wise.  The $w_l\in \mathbb{R}^{n_{l+l} \times n_l}$ are the weights,
and $b_l \in \mathbb{R}^{n_{l+1}}$ are the threshold bias weights at layer $l$, where $n_l$ is the number of hidden neurons in the $l$-th layer.
$n_{L+1}$ is the number of outputs ($1$ in this paper), and $n_0$ is the dimension of the inputs (which we will also refer to as $n$).
\end{definition}

We will refer to the whole set of parameters ($w_l$ and $b_l$, $1\leq l \leq L$) as $\theta$. In the case of perceptrons we use $f_\theta(x) = \sigma(\langle w,x\rangle + b)$ to specify a network. We define the parameter-function map as in \citep{GVP} below.

\begin{definition}[Parameter-function map]\label{def:PFM}
Consider a parameterised supervised model, and let the input space be $\sX$ and the output space be $\sY$. The space of functions the model can express is $\mathcal{F} \subset \sY^{|\sX|}$. If the model has $p$ real valued parameters, taking values within a set $\Theta \subseteq \sR^p$, the parameter function map $\mathcal{M}$ is defined
\begin{align*}
\begin{aligned}
    \mathcal{M}:\Theta &\to \sF \\
    \theta &\mapsto f_{\theta}
\end{aligned}
\end{align*}
\end{definition}
where $f_\theta$ is the function corresponding to parameters $\theta$.

In this paper we are interested in the Boolean functions that neural networks express. We consider the $0$-$1$ Boolean hypercube $\{0,1\}^n$ as the input domain.

\begin{definition}\label{def:T(f)}
The function $\mathcal{T}(f)$ is defined as the number of points in the hypercube $\{0,1\}^n$ that are mapped to $1$ by the action of a neural network $f$.
\end{definition}

For example, for a perceptron this function is defined as,
\begin{equation}\label{equation:tw}
    \mathcal{T}(f)=\mathcal{T}(w,b)=\sum_{x\in\{0,1\}^n}\1(\langle x,w\rangle +b).
\end{equation}
We will sometimes use $\mathcal{T}(w,b)$ if the neural network is a perceptron. 

\begin{definition}[$\sF_t$ and $P(t)$]\label{def:Ft_and_P(t)}
We define the set $\sF_t$ to be the set of functions expressible by some model $\mathcal{M}$ (e.g. a perceptron, a neural network) which all have the same value of $\mathcal{T}(f)$,
$$\sF_t=\{f \in \mathcal{F}_\mathcal{M} |\mathcal{T}(f)=t\}$$,
where $\mathcal{F}_\mathcal{M}$ is the set of all functions expressible by $\mathcal{M}$. Given a probability measure $P$ on the weights $\theta$, we define the probability measure
$$P(T=t):=P(\theta:f_\theta \in \sF_t)$$
\end{definition}

We can also define $\mathcal{T}(f)$ and $P(T=t)$ in the natural way for sets of input points other than $\{0,1\}^n$, the context making clear what definition is being used.

\begin{definition}\label{def:entropy} The entropy $H(f)$ of a Boolean function $f : \{0,1\}^* \to \{0,1\}$ is defined as  $H(f)=-p\log_2{p}-(1-p)\log_2{(1-p)}$, where $p=\mathcal{T}_f/2^n$. It is the binary entropy of the fraction $p$ of possible inputs to $f$ that $f$ maps to $1$ or equivalently, the binary entropy of the fraction of $1$'s in the right-hand column of the truth table of $f$.
\end{definition}

\begin{definition}
We define the Boolean complexity $K_\mathrm{Bool}(f)$ of a function $f$ as the number of binary connectives in the shortest Boolean formula that expresses $f$. 
\end{definition}

Note that Boolean complexity can be defined in other ways as well. For example, $K_\mathrm{Bool}(f)$ is sometimes defined as the number of connectives (rather than \textit{binary} connectives) in the shortest formula that expresses $f$, or as the depth of the most shallow formula that expresses $f$. These definitions tend to give similar values for the complexity of a given function, and so they are largely interchangeable in most contexts. We use the definition above because it makes our calculations easier.

\section{Intrinsic bias in a perceptron's parameter-function map}\label{section:perceptron}

In this section we study the parameter-function map of the perceptron~\citep{rosenblatt1958perceptron}, in many ways the simplest neural network.  While it famously cannot express many Boolean functions -- including XOR -- it remains an important model system.  Moreover, many DNN architectures include layers of perceptrons, so understanding this very basic architecture may provide important insight into the more complex neural networks used today.

\subsection{Entropy bias in a simple perceptron with $b=0$ (no threshold bias term)}\label{subsection:proof_without_bias}

Here we consider perceptrons $f_\theta(x) = \1(\langle w,x\rangle + b)$ without threshold bias terms, i.e. $b = 0$.

The following theorem shows that under certain conditions on the weight distribution, a perceptron with no threshold bias has a uniform $P(\theta:\mathcal{T}(f_\theta)=t)$. The class of weight distributions includes the commonly used isotropic multivariate Gaussian with zero mean, a uniform distribution on a centred cuboid, and many other distributions. The full proof of the theorem is in~\cref{appendix:cube_proof}.


\begin{theorem}\label{uniformity_theorem}
For a perceptron $f_{\theta}$ with $b=0$ and weights $w$ sampled from a distribution which is symmetric under reflections along the coordinate axes, the probability measure $P(\theta:\mathcal{T}(f_\theta)=t)$ is given by
\begin{align*}
    P(\theta:\mathcal{T}(f_\theta)=t) = \begin{cases}
    2^{-n} &\textrm{ if } 0 \leq t < 2^n \\
    0 &\textrm{ otherwise}
    \end{cases}.
\end{align*}
\end{theorem}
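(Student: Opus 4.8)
The plan is to exploit the reflection symmetry of the weight distribution $\mu$ by letting the group $\{-1,+1\}^n$ act on a single random weight vector and then averaging. Write $\sigma\odot w$ for the coordinatewise product of a sign pattern $\sigma\in\{-1,+1\}^n$ with $w$. Since $\mu$ is invariant under reflection along each coordinate axis, it is invariant under every map $w\mapsto\sigma\odot w$, so for each fixed $\sigma$ the quantity $\mathcal{T}(\sigma\odot w)$ (the perceptron value with weights $\sigma\odot w$ and $b=0$) has the same distribution as $\mathcal{T}(w)$. The whole proof then reduces to a single pointwise claim, deterministic for fixed generic $w$: as $\sigma$ ranges over all $2^n$ sign patterns, $\mathcal{T}(\sigma\odot w)$ ranges over $\{0,1,\dots,2^n-1\}$, hitting each integer exactly once.

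To prove this claim I would first rewrite $\mathcal{T}(\sigma\odot w)$ as a threshold count. Let $N=\{i:\sigma_i=-1\}$ and $W_N=\sum_{i\in N}w_i$. Applying the hypercube bijection $x\mapsto z$ that flips exactly the coordinates in $N$ (that is, $z_i=x_i$ when $\sigma_i=+1$ and $z_i=1-x_i$ when $\sigma_i=-1$) turns the defining inequality $\sum_i\sigma_i w_i x_i>0$ into $\langle w,z\rangle>W_N$, so that
\begin{equation*}
\mathcal{T}(\sigma\odot w)=\bigl|\{z\in\{0,1\}^n:\langle w,z\rangle> W_N\}\bigr|.
\end{equation*}
As $\sigma$ (equivalently $N$) runs over all subsets of $\{1,\dots,n\}$, the threshold $W_N=\langle w,\1_N\rangle$ runs over all $2^n$ subset sums of $w$. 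Under the generic assumption that these subset sums are pairwise distinct, the number of subset sums strictly exceeding a given subset sum $W_N$ is a strictly decreasing function of $W_N$ and therefore takes each value in $\{0,\dots,2^n-1\}$ exactly once; this establishes the bijection.

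With the pointwise claim in hand I would finish by averaging. For generic $w$ the bijection gives $\sum_{\sigma}\1[\mathcal{T}(\sigma\odot w)=t]=1$ for every $t\in\{0,\dots,2^n-1\}$. Taking expectations over $w\sim\mu$ and using reflection invariance term by term yields $2^n\,P(\mathcal{T}(w)=t)=\E\bigl[\sum_{\sigma}\1[\mathcal{T}(\sigma\odot w)=t]\bigr]=1$, i.e.\ $P(\mathcal{T}(w)=t)=2^{-n}$. The values $t\ge 2^n$ (and $t<0$) receive probability $0$ because $x=\vzero$ always satisfies $\langle w,\vzero\rangle=0\not>0$, so $\mathcal{T}\le 2^n-1$ surely.

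The main obstacle is the genericity/ties issue: the clean bijection requires all $2^n$ subset sums of $w$ to be pairwise distinct (equivalently, no hypercube point lies exactly on the decision boundary), which holds almost surely for absolutely continuous $\mu$ such as the isotropic Gaussian or the uniform distribution on a centred cuboid, but must be argued — the relevant bad set is a finite union of hyperplanes $\{w:\langle w,x-x'\rangle=0\}$ over pairs $x\neq x'$, each of $\mu$-measure zero. I would also verify that invariance of $\mu$ genuinely licenses replacing $P(\mathcal{T}(\sigma\odot w)=t)$ by $P(\mathcal{T}(w)=t)$ inside the sum, which is immediate once $w\mapsto\sigma\odot w$ is recognised as a measure-preserving change of variables.
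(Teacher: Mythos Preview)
Your proposal is correct and follows essentially the same approach as the paper: both reduce the theorem to the pointwise claim that, for a generic weight vector, the map $\sigma\mapsto\mathcal{T}(\sigma\odot w)$ is a bijection onto $\{0,\dots,2^n-1\}$, and both prove this by identifying $\mathcal{T}(\sigma\odot w)$ with the rank of a particular subset sum among all $2^n$ subset sums of $w$. Your route to that identification via the coordinate-flip bijection $x\mapsto z$ on the hypercube is arguably more direct than the paper's auxiliary $^*$ operation, but the underlying combinatorial idea and the handling of the measure-zero tie set are the same.
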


\begin{proof}[Proof sketch]
We consider the sampling of the normal vector $w$ as a two-step process: we first sample the absolute values of the elements, giving us a vector $w_{\text{pos}}$ with positive elements
, and then we sample the signs of the elements. Our assumption on the probability distribution implies that each of the $2^n$ sign assignments is equally probable, each happening with a probability $2^{-n}$. The key of the proof is to show that for any $w_{\text{pos}}$, each of the sign assignments gives a distinct value of $T$ (and because there are $2^n$ possible sign assignments, for any value of $T$, there is exactly one sign assignment resulting in a normal vector with that value of $T$). This implies that, provided all sign assignments of any $w_{\text{pos}}$ are equally likely, the distribution on $T$ is uniform.
\end{proof}

A consequence of Theorem 4.1 is that the average probability of the perceptron producing a particular function $f$ with $\mathcal{T}(f)=t$ is given by
\begin{equation}\label{equation:entropy_bias}
    \langle P(f) \rangle_t =\frac{2^{-n}}{|\sF_t|},
\end{equation}
where $\sF_t$ denotes the set of Boolean functions that the perceptron can express which satisfy $\mathcal{T}(f)=t$, and $\langle \cdot \rangle_t$ denotes the average (under uniform measure) over all functions $f\in \sF_t$.

We expect $|\sF_t|$ to be much smaller for more extreme values of $t$, as there are fewer distinct possible functions with extreme values of $t$. This would imply a bias towards low \textit{entropy} functions. By way of an example, $|\sF_0|=1$ and $|\sF_1|=n$ (since the only Boolean functions $f$ a perceptron can express which satisfy $\mathcal{T}(f)=1$ have $f(x)=1$ for a single one-hot $x \in \{0,1\}^n$), implying that $\langle P(f) \rangle_0 = 2^{-n}$ and $\langle P(f) \rangle_1 = 2^{-n}/n$.

Nevertheless, the probability of functions within a set $\sF_t$ is unlikely to be uniform.  We find that, in contrast to the overall entropy bias, which is independent of the shape of the distribution (as long as it satisfies the right symmetry conditions), the probability $P(f)$ of obtaining function $f$  within a set $\sF_t$ can depend on distribution shape.  Nevertheless, for a given distribution shape, the probabilities $P(f)$ are independent of scale of the shape, e.g.\ they are independent of the variance of the Gaussian, or the width of the uniform distribution. This is because the function is invariant under scaling all weights by the same factor (true only in the case of no threshold bias). We will address the probabilities of functions within a given $\sF_t$ further in \cref{subsection:bias_within_t}.

\subsection{Simplicity bias of the $b=0$ perceptron}

The entropy bias of Theorem 4.1 entails an overall bias towards low Boolean complexity.
In \cref{complexity_bound} in \cref{appendix:cube_complexity_proof} we show that the Boolean complexity of a function $f$ is bounded by\footnote{A tighter bound is given in \cref{complexity_bound_2}, but this bound lacks any obvious closed form expression.}
\begin{equation}
    \label{eq:Bool_comp_upper_bound}
   K_{\text{Bool}}(f) < 2 \times n \times \min(\mathcal{T}(f), 2^n-\mathcal{T}(f)). 
\end{equation}
Using \cref{uniformity_theorem} and \cref{eq:Bool_comp_upper_bound}, we have that the probability that a randomly initialised perceptron expresses a function $f$ of Boolean complexity $k$ or greater is upper bounded by
\begin{equation}\label{equation:complexity_bound}
    P(K_{\text{Bool}}(f) \geq k ) < 1 - \frac{ k \times 2^{-n}\times 2}{2\times n} = 1 - \frac{k}{2^n \times n}.
\end{equation}
Uniformly sampling functions would result in $P(K_{\text{Bool}}(f) \geq k ) \approx 1 - 2^{k-2^n}$ which for intermediate $k$ is much larger than \cref{equation:complexity_bound}. Thus from entropy bias alone, we see that the perceptron is much more likely to produce simple functions than complex functions: it has an inductive bias towards simplicity.  This derivation  is complementary to the AIT arguments from \textit{simplicity bias}~\citep{KamaludinEtAl,GVP}, and has the advantage that it also proves that bias exists, whereas AIT-based simplicity bias arguments presuppose bias. 

To empirically study the inductive bias of the perceptron with $b=0$, we sampled over many random initialisations with weights drawn from Gaussian or uniform distributions and input size $n=7$. As can be seen in \cref{fig:perceptron_complexity} and  \cref{fig:perceptron_complexity_uniform},  the probability $P(f)$ that function $f$ obtains varies over many orders of magnitude. Moreover,  there is a clear simplicity bias upper bound on this probability, which, as predicted by Eq.~\ref{eq:SB}, decreases with increasing  Lempel-Ziv complexity ($K_{LZ}(f)$) (using a version from~\citep{KamaludinEtAl} applied to the Boolean functions represented as strings of bits, see~\cref{appendix:empirical_within_t}).  Similar behaviour was observed in \citep{GVP} for a FCN network. Moreover it was also shown there that Lempel-Ziv complexity for these Boolean functions correlates with approximations to the Boolean complexity $ K_{\text{Bool}}$.  A one-layer neural network ( \cref{fig:l1_complexity}) shows stronger bias than the perceptron, which may be expected because the former has a much larger expressivity. A rough estimate of the slope $a$ in Eq.~\ref{eq:SB} from~\citep{KamaludinEtAl} suggests that $a \sim \log_2(N_O)/{\underset{f\in \sO}\max(\tilde{K}(f))}$ where $\sO$ is the set of all Boolean functions the model can produce, and $N_O$ is the number of such functions.  The maximum $K(f)$ may not differ that much between the one layer network and the perceptron, but $N_O$ will be much larger in former than in the latter.

In~\cref{sec:Zipf} we also show rank plots for the networks from Figure 1.  Interestingly, at larger rank, they all show a Zipf like power-law decay, which can be used to estimate $N_O$, the total number of Boolean functions the network can express.  We also note that the rank plots for the perceptron with $b=0$ with Gaussian or uniform distributions of weights are nearly indistinguishable, which may be because the overall rank plot is being mainly determined by the entropy bias.

\begin{figure}[H]
\centering
~~
\begin{subfigure}[b]{0.31\textwidth}
	\includegraphics[width=\textwidth]{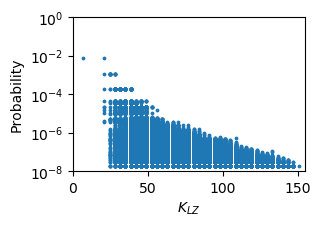}
	\caption{Perceptron: Gaussian weights}
	\label{fig:perceptron_complexity}
\end{subfigure}
~~
\begin{subfigure}[b]{0.31\textwidth}
	\includegraphics[width=\textwidth]{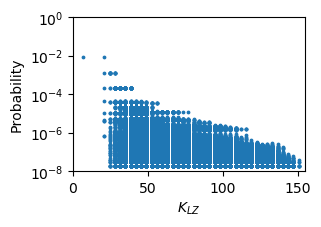}
	\caption{Perceptron:
	uniform weights.}
	\label{fig:perceptron_complexity_uniform}
\end{subfigure}
~~
\begin{subfigure}[b]{0.31\textwidth}
	\includegraphics[width=\textwidth]{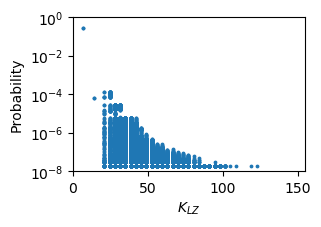}
	\caption{1 layer NN}
	\label{fig:l1_complexity}
\end{subfigure}
~~
\caption{Probability $P(f)$ that a function obtains upon random choice of parameters versus Lempel Ziv complexity $K_{LZ}(f)$ for (a) an $n=7$ perceptron with $b=0$ and weights sampled from a Gaussian distributions, (b) an $n=7$ perceptron with $b=0$ and weights sampled from a uniform distribution centred at $0$  and (c) a 1-hidden layer neural network (with 64 neurons in the hidden layer). Weights $w$ and the threshold bias terms are sampled from $\mathcal{N}(0,1)$. For all cases $10^8$ samples were taken and frequencies less than 2 were eliminated to reduce finite sampling effects. We present the graphs with the same scale for ease of comparison.}
\label{fig:bias_perceptron_l1}
\end{figure}

\subsection{Bias within $\sF_t$}\label{subsection:bias_within_t}

In \cref{fig:rankplots} we compare a rank plot for all functions expressed by an $n=7$ perceptron with $b=0$ to the rank plots for functions with $\mathcal{T}(f)=47$ and $\mathcal{T}(f)=64$.
To define the rank, we order the functions by decreasing probability, and then the rank of a function $f$ is the index of $f$ under this ordering (so the most probable function has rank 1, the second rank 2 and so on).
The highest probability functions in $\sF_{64}$ have higher probability than the highest in $\sF_{47}$ because the former allows for simpler functions (such as $0101..$), but for both sets, the maximum probability is still considerably lower than the maximum probability functions overall.

In \cref{appendix:empirical_within_t} we present further empirical data that suggests that these probabilities are bounded above by Lempel-Ziv complexity (in agreement with \citep{GVP}). However, in contrast to \cref{uniformity_theorem} which is independent of the parameter distribution (as long as they are symmetric), the distributions within $\sF_t$ are different for the Gaussian and uniform parameter distributions, with the latter showing less simplicity bias within a class of fixed $t$ (see \cref{subappendix:empirical_bias}).

\begin{figure}[H]
\centering
~~
\begin{subfigure}[b]{0.31\textwidth}
 \includegraphics[width=\textwidth]{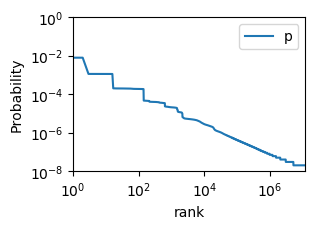}
    \caption{All functions}
    \label{fig:rankplotall}
\end{subfigure}
~~
\begin{subfigure}[b]{0.31\textwidth}
 \includegraphics[width=\textwidth]{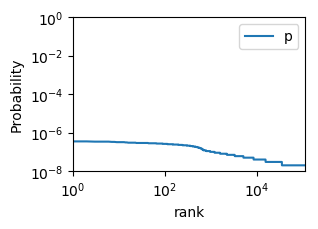}
    \caption{$\mathcal{T}(f)=47$}
    \label{fig:rankplot47}
\end{subfigure}
~~
\begin{subfigure}[b]{0.31\textwidth}
	\includegraphics[width=\textwidth]{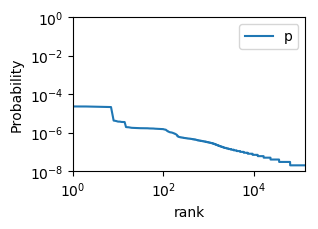}
    \caption{$\mathcal{T}(f)=64$}
    \label{fig:rankplot64}
\end{subfigure}
~~
\caption{Probability $P(f)$ vs rank for functions  for a perceptron with $n=7$, $\sigma_b=0$, and weights sampled from independent Gaussian distributions. In  \cref{fig:rankplot47,fig:rankplot64} the functions are ranked within their respective $\sF_t$. The seven highest probability functions in \cref{fig:rankplot64} are $f=0101\dots$ and equivalent functions obtained by permuting the input dimensions -- note that these are very simple functions (simpler than the simplest functions that satisfy $\mathcal{T}(f)=47$).}
\label{fig:rankplots}
\end{figure}

In  \cref{appendix:BWTA}, we give further arguments for simplicity bias, based on 
the set of constraints that needs to be satisfied to specify a function.  Every function $f$ can be specified by a minimal set of linear conditions on the weight vector of the perceptron, which correspond to the boundaries of the cone in weight space producing $f$. The Kolmogorov complexity of conditions should be close to that of the functions they produce as they are related to the functions in a one-to-one fashion, via a simple procedure.  In  \cref{appendix:BWTA}, we focus on conditions which involve more than two weights, and show that within each set $\sF_t$ there exists one function with as few as $1$ such conditions, and that there exists a function with as many as $n-2$ such conditions. We also compute the set of necessary conditions (up to permutations of the axes) explicitly for functions with small $t$, and find that the range in the number and complexity of the conditions appears to grow with $t$, in agreement, with what we observe in \cref{fig:rankplots} for the range of complexities.  More generally, we find that complex functions typically need more conditions than simple functions do. Intuitively, the more conditions needed to specify a function, the smaller the volume of parameters that can generate the function, so the lower its \emph{a-priori} probability.

\subsection{Effect of $b$ (the threshold bias term) on $P(t)$ }\label{subsection:with_bias}


\begin{figure}[H]
\centering
\begin{subfigure}[b]{0.47\textwidth}
	\includegraphics[width=\textwidth]{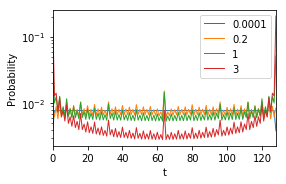}
    \caption{P(t) at selected values of $\sigma_b$} 
    \label{fig:n7bias}
\end{subfigure}
~~
\begin{subfigure}[b]{0.47\textwidth}
 \includegraphics[width=\textwidth]{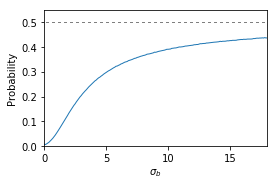}
    \caption{P(t=0)}
    \label{fig:pt=0bias}
\end{subfigure}
\caption{Effect of adding a bias term sampled from $\mathcal{N}(0,\sigma_b)$ to a perceptron with weights sampled from $\mathcal{N}(0,1)$.  (a) Increasing $\sigma_b$  increases the bias against entropy, and with a particular  strong bias towards $t=0$ and $t=2^n$. (b) $P(t=0)$ increases with $\sigma_b$ and asymptotes to $1/2$ in the limit $\sigma_b \rightarrow \infty$. }
\label{fig:biasgraphs}
\end{figure}

We next study the behaviour of the perceptron when we include the threshold bias term $b$, sampled from $\mathcal{N}(0,\sigma_b)$, while still initialising the weights from $\mathcal{N}(0,1)$, as  in~\cref{subsection:proof_without_bias}. We present results for $n=7$ in \cref{fig:biasgraphs}. 
Interestingly, for infinitesimal $\sigma_b$, 
$P(T=0)$ is less than for  $b=0$ (See \cref{sec:b}), but then for increasing  $\sigma_b$ it rapidly grows larger than $1/2^n$  and in the limit of large $\sigma_b$ asymptotes to $1/2$ (see \cref{fig:pt=0bias}).  It's not hard to see where this asymptotic behaviour comes from, a large positive or negative $b$ means all inputs are mapped to true (1) or false (0) respectively. 




\section{Entropy bias in Multi-layer Neural Networks}\label{section:DNNs}

We next extend results from \cref{section:perceptron} to multi-layer neural networks, with the aim to comment on the behaviour of $P(T=t)$ as we add hidden layers with ReLU activations.


To study the bias in the parameter-function map of neural networks, it is important to first understand the expressivity of the networks. In \cref{subsection:expressivity_limits}, we produce a (loose) upper bound on the minimum size of a network with ReLU activations and $l$ layers that is maximally expressive over Boolean functions. We comment on how sufficiently large expressivity implies a larger bias towards low entropy for models with similarly shaped distribution over $T$ (when compared to the perceptron).

In \cref{subsection:multi_bias}, we prove, in the limit of infinite width, that adding ReLU activated layers causes the moments of $P(T=t)$ to increase, . This entails a lower expected entropy for neural networks with more hidden layers. We empirically observe that the distribution of $T$ becomes convex (with input $\{0,1\}^n$) with the addition of ReLU activated layers for neural networks with finite width. 


\subsection{Expressivity conditions for DNNs}\label{subsection:expressivity_limits}

We provide upper bounds on the minimum size of a DNNs that can model all Boolean functions. We use the notation $\langle n_0,n_1,\dots,n_L,n_{L+1}\rangle$ to denote a neural network with ReLU activations and of the form given in \cref{definition:DNNs}.

\begin{lemma}\label{lemma:one_layer_expressivity}
A neural network with layer sizes $\langle n,2^{n-1},1\rangle$, threshold bias terms, and ReLU activations can express all Boolean functions over $n$ variables (also found in \citep{ppt_expressivity}). See \cref{appendix:expressivity_conditions} for proof.
\end{lemma}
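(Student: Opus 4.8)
The plan is to realize an arbitrary Boolean function $f:\{0,1\}^n\to\{0,1\}$ by using each of the $2^{n-1}$ hidden ReLU neurons as an exact \emph{point indicator} on the hypercube, and then combining these indicators with the output Heaviside unit. The starting observation is that for any target vertex $a\in\{0,1\}^n$, the Hamming distance $d(x,a)=\sum_i(1-2a_i)x_i+|a|$ is an \emph{affine} function of $x$ that vanishes exactly when $x=a$ and is at least $1$ for every other $x\in\{0,1\}^n$. Feeding the affine map $1-d(x,a)$ through a ReLU therefore produces $\sigma(1-d(x,a))=\1[x=a]$ exactly on the hypercube: it equals $1$ at $x=a$ and $0$ elsewhere, because the unit gap between $d=0$ and $d\geq 1$ is precisely absorbed by the added constant $1$. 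Concretely this is a single hidden neuron with weight row $2a-\vone$ and bias $1-|a|$ that detects one vertex.

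Next I would use a complementation trick to stay within the budget of $2^{n-1}$ neurons. Let $S=f^{-1}(1)$. Since $|S|+|S^c|=2^n$, we have $\min(|S|,|S^c|)\leq 2^{n-1}$, so at least one of $S$, $S^c$ has size at most $2^{n-1}$. If $|S|\leq 2^{n-1}$, I assign one hidden neuron to each $a\in S$ (leaving any surplus neurons with zero output weight), and set the output unit to compute $\1\!\left(\sum_{a\in S}\1[x=a]-\tfrac12\right)$: the inner sum is $1$ precisely when $x\in S$ and $0$ otherwise, so subtracting $\tfrac12$ makes the Heaviside fire exactly on $S$, reproducing $f$. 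If instead $|S|>2^{n-1}$, then $|S^c|\leq 2^{n-1}$, so I place indicators on the vertices of $S^c$ and have the output compute $\1\!\left(\tfrac12-\sum_{a\in S^c}\1[x=a]\right)$, which fires on the complement of $S^c$ and again reproduces $f$. In both cases the number of \emph{used} hidden neurons is $\min(|S|,|S^c|)\leq 2^{n-1}$, matching the available width.

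The construction is essentially explicit, so there is no deep obstacle; the points requiring care are (i) verifying that the unit margin in $1-d(x,a)$ makes the ReLU indicator \emph{exact} rather than merely approximate on $\{0,1\}^n$, which is exactly what lets a single neuron isolate one vertex and keeps the indicators disjoint so their sum is $0$ or $1$, and (ii) checking the output threshold directions against the strict convention $\1(X)=1\iff X>0$, which is why the offsets $\pm\tfrac12$ are chosen so the decision margins are $\pm\tfrac12\neq 0$. The boundary case $|S|=2^{n-1}$, where all $2^{n-1}$ neurons are needed, is automatic since the width budget is then met with equality, and in the non-extremal cases the surplus neurons are harmlessly zeroed out at the output.
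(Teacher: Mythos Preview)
Your proof is correct and is essentially the same construction as the paper's, just phrased geometrically rather than logically. The paper builds each hidden neuron from a full DNF clause for a vertex $v\in f^{-1}(1)$ (or $f^{-1}(0)$), with weight row $(2v-\vone)$ and bias $1-|v|$; this is precisely your Hamming-distance indicator $\sigma(1-d(x,v))$, and the paper's complementation step (build $\overline f$ then negate the output) coincides with your $\1(\tfrac12-\sum)$ branch. The only cosmetic difference is that the paper uses output bias $0$ in the direct case (relying on the strict convention $\1(0)=0$), whereas you add the $-\tfrac12$ margin, which is slightly cleaner.
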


\begin{lemma}\label{lemma:multi_layer_expressivity}
A neural network with $l$ hidden layers, layer sizes $\langle n,(n+2^{n-1}/l+1), \dots ,(n+2^{n-1}/l+1),1\rangle$, threshold bias terms, and ReLU activations can express all Boolean functions over $n$ variables. See \cref{appendix:expressivity_conditions} for proof.
\end{lemma}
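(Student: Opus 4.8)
The plan is to take the single-hidden-layer construction behind \cref{lemma:one_layer_expressivity} and spread its point-detector neurons evenly over the $l$ hidden layers, paying $n+1$ extra neurons per layer to ferry information forward. First I would reduce to the minority class: negating the final affine map turns a network for $f$ into one for its complement, so we may assume $\mathcal{T}(f)\le 2^{n-1}$, and then $f$ is the indicator of a set $S\subseteq\{0,1\}^n$ with $|S|\le 2^{n-1}$. The building block is the standard single-point detector: for $x_0\in\{0,1\}^n$, the affine form with weights $(w)_i = 2(x_0)_i-1$ and bias chosen so that its value is positive exactly at $x=x_0$ and negative at every other vertex; after a ReLU this neuron outputs a fixed positive constant at $x_0$ and $0$ elsewhere.

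Next I would partition $S$ into groups $S_1,\dots,S_l$ of size at most $\ceil{2^{n-1}/l}$ and assign $S_j$ to hidden layer $j$. Each hidden layer is allotted $n+\ceil{2^{n-1}/l}+1$ neurons playing three roles. The first $n$ neurons forward the original input: a neuron reading one coordinate with unit weight and zero bias passes it through the ReLU unchanged, because every coordinate of $x\in\{0,1\}^n$ is nonnegative; inductively, layer $j$ therefore presents an exact copy of $x$ to layer $j+1$. The middle $\ceil{2^{n-1}/l}$ neurons are the detectors for the points of $S_j$, each reading the forwarded copy of $x$. The last neuron is an accumulator: it reads the previous layer's accumulator together with the previous layer's detector outputs and stores their (rescaled) sum. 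Because this partial count is nonnegative, the ReLU once more acts as the identity, so no detection is lost as it is carried forward.

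Finally, the output neuron reads the accumulator and the detectors of the last hidden layer, forming the total over all detectors, which is positive when $x\in S$ and zero otherwise, since the detected points are distinct and hence at most one detector fires for any input; thresholding it with the Heaviside step recovers the indicator of $S$ (and the reversed threshold recovers $f$ in the complemented case). I expect the main obstacle to be bookkeeping rather than a new idea: one must check that the ReLU never erases information along either the input-forwarding path or the accumulator chain, which rests entirely on all forwarded coordinates and all partial counts being nonnegative, and one must absorb the ceiling $\ceil{2^{n-1}/l}$, which is exact precisely when $l$ divides $2^{n-1}$ (e.g.\ $l$ a power of two), the regime in which the stated width $n+2^{n-1}/l+1$ is an integer.
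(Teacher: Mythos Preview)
Your proposal is correct and follows essentially the same construction as the paper: both split each hidden layer into $n$ input-forwarding neurons, $\lceil 2^{n-1}/l\rceil$ clause/point detectors, and one accumulator, with the complement trick handling the case $\mathcal{T}(f)>2^{n-1}$. The only cosmetic difference is that the paper phrases the detectors as DNF-clause neurons while you phrase them as single-point indicators, but since the DNF used has every variable in every clause these are literally the same neurons.
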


Note that neither of these bounds are (known to be) tight. \cref{lemma:one_layer_expressivity} says that a network with one hidden layer of size $2^{n-1}$ can express all Boolean functions over $n$ variables. We know that a perceptron with $n$ input neurons (and a threshold bias term) can express at most $2^{n^2}$ Boolean functions (\citep{anthony2001discrete}, Theorem 4.3), which is significantly less than the total number of Boolean functions over $n$ variables, which is $2^{2^n}$. Hence there is a very large number of Boolean functions that the network with a (sufficiently wide) hidden layer can express, but the perceptron cannot. The vast majority of these functions have high entropy (as almost all Boolean functions do). Moreover, we observe that the measure $P(T=t)$ is convex in the case of the more expressive neural networks, as discussed in section \cref{subsection:multi_bias}. This suggests that the networks with hidden layers have a much stronger relative bias towards low entropy functions than the perceptron does, which is also consistent with the stronger simplicity bias found in \cref{fig:bias_perceptron_l1}. 

We further observe from \cref{lemma:multi_layer_expressivity} that the number of neurons can be kept constant and spread over multiple layers without loss of expressivity for a Boolean classifier (provided the neurons are evenly spread across the layers).



\subsection{How multiple layers affect the bias}\label{subsection:multi_bias}

We next consider the effect of addition of ReLU activated layers on the distribution $P(t)$.
Of course adding even just one layer hugely increases expressivity over a perceptron.  Therefore, even if the distribution of $P(t)$ would not change, the average probability of functions in a given $\sF_t$ could drop significantly due to the increase in expressivity.

However, we observe that for inputs $\{0,1\}^n$, $P(t)$ becomes more convex when more ReLU-activated hidden layers are added, see \cref{fig:multi_t}. The distribution appears to be monotone on either side of $t=2^{n-1}$ and relatively flat in the middle, even with the addition of 8 intermediate layers\footnote{Note that this is when the input is $\{0,1\}^n$. This is not true for all input distributions. See, e.g. \cref{fig:ptcentered}. The change in probability for other distributions can be more complex.}. In particular, we show in \cref{fig:multi_t} that for large number of layers, or large $\sigma_b$, the probabilities for $P(t=0)$ (and by symmetry, in the infinite width limit, also $P(t=2^n)$) each asymptotically reach $\frac12$, and thus  take up the vast majority of the probability weight.

We now prove some properties of the distribution $P(t)$ for DNNs with several layers.

\begin{figure}[h]
\centering
~~
\begin{subfigure}[b]{0.47\textwidth}
    \includegraphics[width=\textwidth]{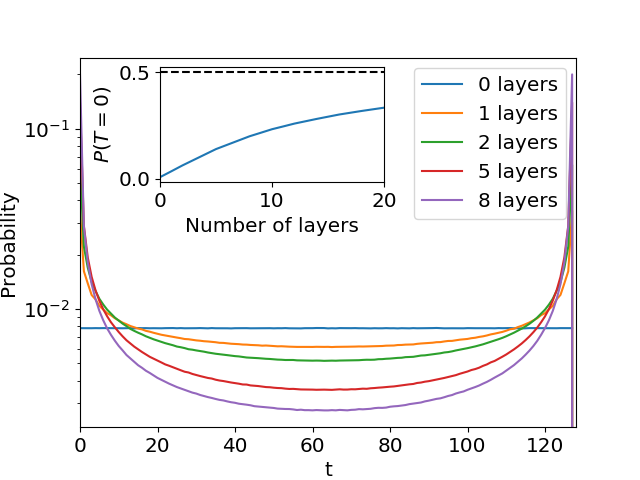}
    \caption{$\{0,1\}^7$, $\sigma_b=0.0$, number of layers varied}
    \label{fig:ptnobias}
\end{subfigure}
~~
\begin{subfigure}[b]{0.47\textwidth}
 \includegraphics[width=\textwidth]{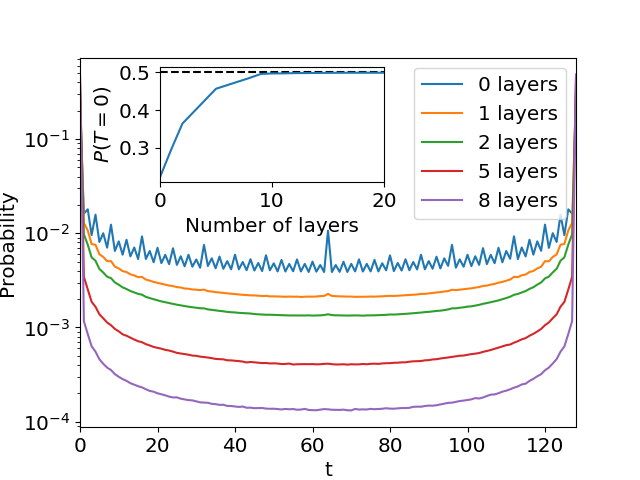}
    \caption{$\{0,1\}^7$, $\sigma_b=1.0$, number of layers varied}
    \label{fig:ptbias}
\end{subfigure}
~~
\begin{subfigure}[b]{0.44\textwidth}
 \includegraphics[width=\textwidth]{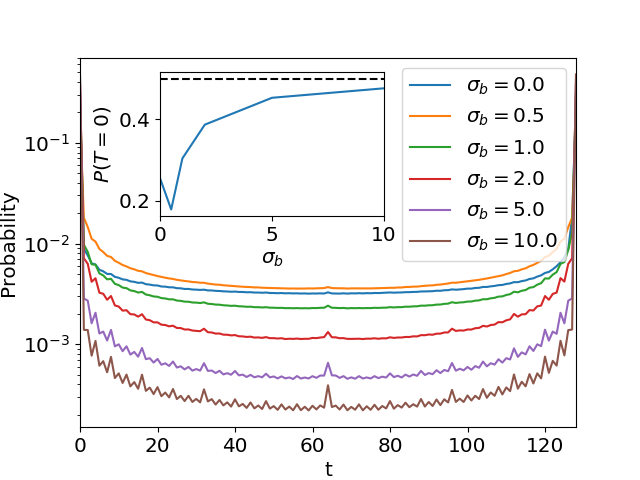}
    \caption{ $\{0,1\}^7$, 1 layer, $\sigma_b$ varied}
    \label{fig:ptbiassweep}
    \end{subfigure}
~~
\begin{subfigure}[b]{0.44\textwidth}
 \includegraphics[width=\textwidth]{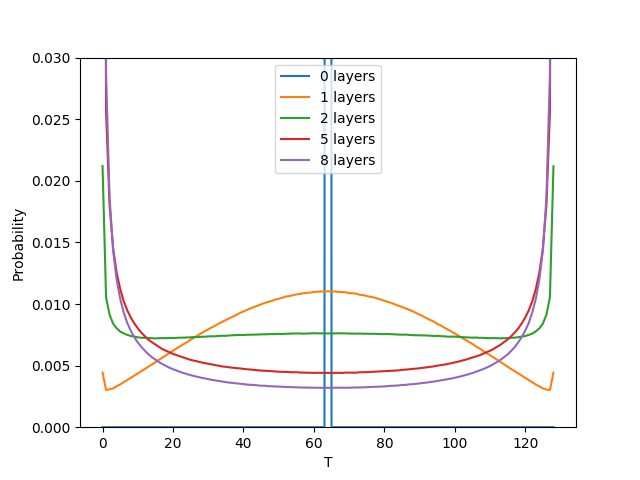}
    \caption{$\{-1,1\}^7$, $\sigma_b=0.0$, number of layers varied}
    \label{fig:ptcentered}
\end{subfigure}
~~
\caption{{\bf $ \bf P(T=t)$ becomes on average more biased towards low entropy for increasing number of layers or increasing $\bf \sigma_b$.} Here we use $n=7$ input layers, with input $\{0,1\}^7$ (\emph{centered data}) or $\{-1,1\}^7$ (\emph{uncentered data}) The hidden layers are of width $2^{n-1} = 64$ to guarantee full expressivity. 
$\sigma_w=1.0$ in all cases. The insets show  how $P(t=0)$ asymptotes to $\frac12$ with increasing layers or $\sigma_b$.  }
    \label{fig:multi_t}
\end{figure}

\begin{lemma}\label{lemma:dnn_perceptron_equivalent}
The probability distribution on T for inputs in $\{0,1\}^n$ of a neural network with linear activations and i.i.d.\ initialisation of the weights is independent of the number of layers and the layer widths, and is equal to the distribution of a perceptron. See \cref{appendix:lemmas_for_dnn_bias} for proof.
\end{lemma}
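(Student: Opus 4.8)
The plan is to use the fact that a composition of affine maps is again affine, so that a network with identity activations collapses to a single perceptron, and then to show that the resulting effective weight vector still satisfies the coordinate-reflection symmetry required to invoke \cref{uniformity_theorem}.

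First I would unfold the recursion of \cref{definition:DNNs} with $\sigma$ the identity. Each hidden layer acts as $h^{(l+1)} = w_l h^{(l)} + b_l$ with $h^{(1)} = w_0 x + b_0$, so composing all the layers gives $h^{(L+1)}(x) = \langle W, x\rangle + B$ for an effective row vector $W = w_L w_{L-1}\cdots w_0$ and an effective scalar $B = b_L + \sum_{l=0}^{L-1} w_L\cdots w_{l+1} b_l$. Hence $f(x) = \1(\langle W, x\rangle + B)$ is literally a perceptron, and $T = |\{x \in \{0,1\}^n : \langle W, x\rangle + B > 0\}|$ is a deterministic function of $(W,B)$ alone. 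It therefore suffices to understand the law of $(W,B)$, and in particular (taking $b=0$, so $B=0$) the law of $W$.

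Next I would show that $W$ inherits the reflection symmetry of the first-layer weights. Writing $W = A\, w_0$ with $A = w_L\cdots w_1$ depending only on the deeper, independent layers, the $j$-th coordinate is $W_j = \langle A, (w_0)_{:,j}\rangle$, the inner product of $A$ with the $j$-th column of $w_0$. Negating that single column sends $W_j \mapsto -W_j$, fixes every other $W_{j'}$, and leaves $A$ untouched; since the first-layer weights are i.i.d.\ from a distribution symmetric about $0$, the law of $w_0$ is invariant under negating any one column, and by independence of the remaining layers so is the law of $W$ under each reflection $W_j \mapsto -W_j$. Thus $W$ meets the hypothesis of \cref{uniformity_theorem}, which gives $P(T=t) = 2^{-n}$ for $0 \le t < 2^n$: exactly the perceptron's distribution, and manifestly independent of the depth $L$ and of the widths $n_l$, proving both claims at once.

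The step I expect to be the main obstacle is the symmetry bookkeeping: one must confirm that reflection symmetry survives left-multiplication by the \emph{dependent} random matrix $A$ --- which it does, precisely because the sign flip acts only on $w_0$, on which $A$ does not depend --- and one must dispose of the measure-zero degenerate events ($W = 0$, or a hypercube point lying exactly on the decision boundary), which under continuous symmetric initialisation occur with probability $0$ and hence leave the law of $T$ unchanged. The same affine collapse also handles the presence of threshold biases: the network still reduces to a single perceptron, so linear depth never enlarges the expressible class, and the effective parameters inherit the first-layer reflection symmetry in the identical fashion, matching the perceptron's $T$-distribution under the same symmetric initialisation.
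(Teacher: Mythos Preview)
Your argument for the no-bias case is correct. The paper's own proof takes a slightly different route: it argues that each weight matrix $w_i$ has a \emph{spherically} symmetric distribution (so that $P(w_i = aR) = P(w_i = a)$ for any rotation $R$), hence the product $\widetilde{w} = w_L\cdots w_0$ is also rotation-invariant, and since $T$ depends only on the direction of $\widetilde{w}$, the law of $T$ coincides with that of a perceptron with isotropic weights, for \emph{any} input set $\sS$. Your approach instead isolates the first layer, uses only coordinate-reflection symmetry of $w_0$, and feeds the effective $W$ into \cref{uniformity_theorem}. This is exactly the weaker-hypothesis variant the paper records in the remark immediately following its proof. What you gain is that you need only ``i.i.d.\ and symmetric about $0$'' on the first layer, whereas the paper's rotation argument tacitly requires the stronger Gaussian assumption (i.i.d.\ alone does not give spherical symmetry). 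What the paper's route gains is that it applies to arbitrary input sets, not just $\{0,1\}^n$.

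One caution on your final paragraph: once threshold biases are present, the effective bias $B = b_L + \sum_{l<L} w_L\cdots w_{l+1} b_l$ has a distribution that \emph{does} depend on depth and widths, so the $T$-distribution of the deep linear network need not match that of a single perceptron with the same per-layer bias law. The lemma as proved in the appendix is explicitly for the no-bias case, and your core argument handles that case cleanly; the extension to nonzero biases does not go through as stated.
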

While it is trivial that such a linear network has the same expressivity as a perceptron, it may not be obvious that the entropy bias is identical.

\begin{lemma}\label{lemma:lower_bound_T=0}
Applying a ReLU function in between each layer produces a lower bound on $P(T=0)$ such that $P(T=0)\geq 2^{-n}$. See \cref{appendix:lemmas_for_dnn_bias} for proof.
\end{lemma}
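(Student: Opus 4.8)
The plan is to prove the bound by \emph{induction on the number of ReLU activation layers}, anchoring the induction with \cref{lemma:dnn_perceptron_equivalent} and showing that turning on one ReLU at a time can only increase $P(T=0)$. Throughout I take $b=0$, as in \cref{uniformity_theorem}. The base case is the network in which every activation is the identity: by \cref{lemma:dnn_perceptron_equivalent} its distribution of $T$ equals that of a perceptron, and by \cref{uniformity_theorem} this gives $P(T=0)=2^{-n}$ exactly. (Concretely, the linear network computes $\1(\langle W_{\mathrm{lin}},x\rangle)$ with effective weight $W_{\mathrm{lin}}=w_L\cdots w_0$, and $T=0$ iff $W_{\mathrm{lin}}\le 0$ componentwise.) It therefore suffices to show that replacing one identity activation by a ReLU never decreases $P(T=0)$; chaining these replacements from the all-linear network to the all-ReLU network yields $P(T=0)\ge 2^{-n}$.

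I would carry out the replacements \emph{from the bottom layer upward}, so that when I switch layer $k$ from linear to ReLU all layers above it are still linear. Coupling the two networks on a common draw of the weights and conditioning on $w_0,\dots,w_{k-1}$ fixes the pre-activations $v_x:=h^{(k)}(x)\in\R^{n_k}$ for every input $x$. Because layers $k{+}1,\dots,L$ are linear, the map from $a^{(k)}$ to the output collapses to a single effective readout $h^{(L+1)}(x)=\langle W,a^{(k)}(x)\rangle$ with $W=w_L\cdots w_k$. The key observation is that $W$ is \emph{coordinate-sign-symmetric}: writing $W=(w_L\cdots w_{k+1})\,w_k$, the $j$-th entry $W_j$ is the image of the $j$-th column of $w_k$, so flipping the signs of that column negates $W_j$ and leaves the other entries unchanged; since the entries of $w_k$ are i.i.d.\ sign-symmetric, so is $W$. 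With $a^{(k)}(x)=v_x$ (linear) versus $a^{(k)}(x)=v_x^+:=\max(v_x,0)$ (ReLU), the replacement step reduces to the following purely geometric statement.

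\smallskip
\noindent\textbf{Geometric lemma.} \emph{For any finite family $\{v_x\}\subset\R^m$ and any $W\in\R^m$ whose law is invariant under reflections along the coordinate axes (and has no atom on the relevant hyperplanes),}
\[
  \Pr\!\left(\langle W,v_x^+\rangle\le 0\ \ \forall x\right)\ \ge\ \Pr\!\left(\langle W,v_x\rangle\le 0\ \ \forall x\right).
\]
To prove it I would condition on $|W|$ (the vector of absolute values) and write $W=\epsilon\odot|W|$ with $\epsilon$ uniform on $\{-1,1\}^m$ and independent of $|W|$; absorbing $|W|$ into the $v_x$ (legitimate since $(u_iv_{x,i})_+=u_i(v_{x,i})_+$ for $u_i>0$) reduces the claim to the sign-counting inequality $|B|\ge|A|$, where $A=\{\epsilon:\langle\epsilon,v_x\rangle\le0\ \forall x\}$ and $B=\{\epsilon:\langle\epsilon,v_x^+\rangle\le0\ \forall x\}$. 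Two structural facts drive this: since $v_x^+\ge 0$, the set $B$ is a \emph{down-set} in the Boolean cube (flipping any $+1$ to $-1$ only decreases each $\langle\epsilon,v_x^+\rangle$) and in particular always contains the all-$(-1)$ pattern; geometrically, $B$ indexes the orthants met by the polar cone of $\{v_x^+\}\subset\R^m_{\ge0}$, a cone that always contains the non-positive orthant.

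\smallskip
\noindent The crux --- and the step I expect to be the main obstacle --- is establishing $|B|\ge|A|$ rigorously. The tempting shortcut of peeling one coordinate at a time fails: once one conditions on the signs of the remaining coordinates the required inequality can reverse pointwise, because the negative-part information discarded by $(\cdot)^+$ only averages out when the \emph{joint} sign-symmetry is retained. I would therefore argue over all of $\{-1,1\}^m$ at once, pairing each $\epsilon$ with $-\epsilon$ and exploiting that, for the single coordinate being positive-parted, the two ``negative-part'' constraints satisfy $G^A_-\le G'\le G^A_+$ in the obvious notation, so that the deficits on orthants where $(\cdot)^+$ tightens a constraint are compensated by those where it loosens one. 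I note in passing that the naive sufficient event ``$W\le 0$ componentwise'' only gives the width-dependent bound $P(T=0)\ge 2^{-n_k}$, far weaker than $2^{-n}$ for wide layers; it is precisely the coordinate-sign-symmetry of the \emph{effective} weight $W$, together with the down-set structure above, that removes the dependence on width and recovers the clean $2^{-n}$.
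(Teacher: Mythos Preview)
Your inductive framework is natural, but the ``geometric lemma'' at its heart is false, and this breaks the argument. Take $m=3$, $v_1=(3,1,-1)$, $v_2=(1,3,-1)$, and $W\sim\mathcal{N}(0,I_3)$ (certainly coordinate-sign-symmetric, and exactly the conditional law of your effective readout). Then $(\langle W,v_1\rangle,\langle W,v_2\rangle)$ is centred bivariate Gaussian with correlation $\langle v_1,v_2\rangle/(\|v_1\|\,\|v_2\|)=7/11$, while $(\langle W,v_1^+\rangle,\langle W,v_2^+\rangle)$, with $v_1^+=(3,1,0)$, $v_2^+=(1,3,0)$, has correlation $6/10<7/11$. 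By Sheppard's formula $P(Y_1\le0,\,Y_2\le0)=\tfrac14+\tfrac{1}{2\pi}\arcsin\rho$, the orthant probability is therefore \emph{strictly smaller} after applying $(\cdot)^+$ --- roughly $0.352$ against $0.360$. The same configuration kills your sign-counting reduction: with $|W|=(1,1,3)$ one gets $u_1=(3,1,-3)$, $u_2=(1,3,-3)$, and a direct check gives $|A|=4$ (the patterns $(-,-,+)$, $(-,-,-)$, $(+,-,+)$, $(-,+,+)$ all satisfy both linear constraints) but $|B|=2$ (only $(-,-,\pm)$). So the step you correctly flagged as ``the crux'' is not merely hard to prove; the inequality does not hold pointwise in $\{v_x\}$, and no pairing of $\epsilon$ with $-\epsilon$ can rescue it. The counterexample is moreover realisable inside your own induction at $k=1$ with $n=2$: take $w_0$ to have columns $(3,1,-1)^T$ and $(1,3,-1)^T$, so that $v_{10},v_{01}$ are exactly these and $v_{11}=v_{10}+v_{01}$ imposes no extra constraint on either side.

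By contrast, the paper attempts no layerwise comparison at all. Its argument is precisely the one-line observation you set aside as ``naive'': condition on everything through the last ReLU so that the $2^n$ activations sit in the non-negative orthant, and note that the event ``final-layer weight $\le 0$ componentwise'' already forces $T=0$; it then points to \cref{corolloray:more_0s} (monotonicity of $P(T=0)$ in the input correlations) as an alternative justification. There is no induction on activations, no coupling of a linear and a ReLU network on common weights, and no analogue of your geometric lemma.
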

This lemma shows that a DNN with ReLU functions is no less biased towards the lowest entropy function than a perceptron is. We prove a more general result in the following theorem which concerns the behaviour of the average entropy $\langle H(t)\rangle$ (where the average upon random sampling of parameters) as the number of layers grows. The theorem shows that the bias towards low entropy becomes stronger as we increase the number of layers, for any distribution of inputs. We rely on previous work that shows that in the infinite width limit, neural networks approach a Gaussian process (\cite{lee2018deep,garriga2018deep,novak2018bayesian,matthews2018gaussian,yang2019scaling}), which for the case of fully-connected ReLU networks, has an analytic form \citep{lee2018deep}.


\begin{theorem}\label{theorem:entropy_layers}
Let $\sS$ be a set of $m=|\sS|$ input points in $\mathbb{R}^n$. Consider neural networks with i.i.d. Gaussian weights with variances $\sigma_w^2/\sqrt{n}$ and biases with variance $\sigma_b$, in the limit where the width of all hidden layers $n$ goes to infinity. Let $N_1$ and $N_2$ be such neural networks with $L$ and $L+1$ infinitely wide hidden layers, respectively, and no bias. Then, the following holds: $\langle H(T) \rangle$ is smaller than or equal for $N_2$ than for $N_1$. It is strictly smaller if there exist pairs of points in $\sS$ with correlations less than $1$. If the networks have sufficiently large threshold bias ($\sigma_b>1$ is a sufficient condition), the result above also holds. For smaller bias, the result holds only for a sufficiently large number of layers.
\end{theorem}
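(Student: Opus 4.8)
The plan is to combine the Gaussian-process description of the infinitely wide network with a correlation inequality for thresholded Gaussians. In the infinite-width limit the output preactivations $(h(x_1),\dots,h(x_m))$ over the points of $\sS$ form a zero-mean multivariate Gaussian whose covariance is the NNGP kernel $K^{(L)}$ of the $L$-layer network \citep{lee2018deep}. Since the network has a single output with a Heaviside readout, $T=\sum_{i=1}^m\1(h(x_i)>0)$, and because each $h(x_i)$ is a symmetric zero-mean Gaussian we have $P(h(x_i)>0)=\tfrac12$ and hence $\E[T]=m/2$ for every architecture considered. Thus only the correlation matrix $\Sigma$ of the Gaussian (the normalised kernel, which alone determines the sign pattern) governs the law of $T$, and I will treat $\langle H(T)\rangle$ as a function of $\Sigma$.

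The first step is to show that adding one ReLU layer weakly increases every pairwise correlation. For fully-connected ReLU networks the kernel obeys the arc-cosine recursion, under which each normalised off-diagonal entry maps as $\rho\mapsto f(\rho)=\tfrac1\pi(\sin\theta+(\pi-\theta)\cos\theta)$ with $\theta=\arccos\rho$ in the no-bias case. A short calculus check gives $f(\rho)\ge\rho$ on $[-1,1]$ with equality only at $\rho=1$; hence passing from $N_1$ to $N_2$ sends $\Sigma^{(L)}\to\Sigma^{(L+1)}$ with each off-diagonal entry weakly increased, and strictly increased for every pair whose correlation lies below $1$.

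The crux is the second step: $\langle H(T)\rangle$ is nonincreasing in each correlation. I interpolate linearly, $\Sigma(s)=(1-s)\Sigma^{(L)}+s\Sigma^{(L+1)}$, which remains a valid unit-diagonal PSD correlation matrix, and differentiate. Writing $g(x)=H\!\left(\sum_k\1(x_k>0)\right)$, Price's theorem (Plackett's formula in this orthant setting) gives $\partial\E[g]/\partial\Sigma_{ij}=\E[\partial^2 g/\partial x_i\partial x_j]$, and the mixed distributional derivative of $g$ equals $\delta(x_i)\delta(x_j)\,[H(R+2)-2H(R+1)+H(R)]$, where $R$ is the number of the remaining $m-2$ coordinates that are positive. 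Taking the expectation yields $\partial\langle H(T)\rangle/\partial\Sigma_{ij}=\phi_{ij}(0,0)\,\E[\,H(R+2)-2H(R+1)+H(R)\mid X_i=X_j=0\,]$, with $\phi_{ij}(0,0)>0$ the bivariate density at the origin. Because $H$ is strictly concave, every second difference $H(r+2)-2H(r+1)+H(r)$ is negative, so this partial derivative is $\le 0$. Chaining along $s$ with Step one, $\langle H(T)\rangle_{N_2}\le\langle H(T)\rangle_{N_1}$, and the inequality is strict as soon as some pair has $\rho<1$, since then both the correlation strictly increases and the corresponding second difference is strictly negative. For the bias case the kernel recursion only acquires an additive constant $\sigma_b^2$, modifying the correlation map to some $f_b$; the mean $\E[T]=m/2$ and the Price's-theorem computation are unchanged (the bias is zero-mean), so it suffices to redo Step one for $f_b$. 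I would show $\sigma_b>1$ forces $f_b(\rho)\ge\rho$ pointwise, so one added layer already suffices, whereas for smaller $\sigma_b$ the map may dip below the diagonal at small $\rho$ and one instead argues that iterating the recursion drives correlations upward after enough layers, explaining why the conclusion there requires a sufficiently large number of layers.

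The main obstacle is making Step two rigorous for the non-smooth, indicator-based $g$: the $\delta(x_i)\delta(x_j)$ manipulation must be justified through the standard derivative formula for Gaussian orthant probabilities, and one must ensure the interpolating matrices $\Sigma(s)$ stay nondegenerate so that $\phi_{ij}(0,0)$ and the conditional law of the remaining coordinates are well defined. Degenerate configurations (perfectly correlated points) are handled by continuity.
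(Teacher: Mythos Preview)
Your Step~1 coincides with the paper's: both invoke the infinite-width Gaussian-process limit and analyse the arc-cosine recursion to show that every pairwise output correlation weakly increases when a ReLU layer is added (strictly, unless already equal to $1$). The treatment of the bias term is also parallel in spirit, though the paper carries out the algebra for $f_b$ in detail while you only sketch it.

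Step~2 is where you diverge. The paper does not differentiate $\langle H(T)\rangle$ directly. Instead it takes a detour through the moments: it writes $\langle T^q\rangle$ as a sum of $q$-fold orthant probabilities $P(h(x_{i_1})>0,\dots,h(x_{i_q})>0)$, proves by hand a Slepian-type monotonicity statement (their Lemmas on bivariate and multivariate correlation) that each such orthant probability is nondecreasing in every correlation, concludes that all uncentered moments of $T$ increase, and then Taylor-expands the binary entropy in $t/m$ to express $\langle H(T)\rangle$ as a series in even moments with signed coefficients. Your route via Price's/Plackett's formula and the concavity of $H$ is shorter and conceptually cleaner: the second-difference computation replaces both the orthant-probability lemmas and the moment-to-entropy Taylor step in one stroke. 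What the paper's approach buys is an intermediate result of independent interest (monotonicity of all moments of $T$ in the correlations) and it avoids invoking Price's theorem for a nonsmooth functional, relying only on elementary Gaussian integrals; what your approach buys is directness and a transparent reason (strict concavity of $H$) for the sign, sidestepping the somewhat delicate series manipulation the paper needs at the end. The obstacle you flag---justifying the $\delta(x_i)\delta(x_j)$ calculation and keeping $\Sigma(s)$ nondegenerate along the interpolation---is real but standard: one can mollify $g$ or, equivalently, appeal to the known derivative formula for Gaussian orthant probabilities with respect to a correlation entry; the paper handles the path between covariance matrices by a similar connectedness argument in the cone of positive-definite matrices.
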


See \cref{appendix:lemmas_for_dnn_bias} for a proof of \cref{theorem:entropy_layers}. \cref{theorem:entropy_layers} is complementary to \cref{uniformity_theorem}, in that the former only proves that the bias towards low entropy increases with depth, and the later proves conditions on the data that guarantee bias toward low entropy on the ``base case'' of $0$ layers. We show in~\cref{fig:multi_t} that when $\sigma_b=0$, the bias towards low entropy indeed becomes monotonically stronger as we increase the number of ReLU layers, for both inputs in $\{0,1\}^n$ as well as for centered data $\{-1,1\}^n$.  

For  centered inputs $\{-1,1\}^n$, the perceptron with $b=0$ shows rather unusual behaviour.  The distribution is completely peaked around $t=2^{n-1}$ because every input mapping to $1$ has the opposite input mapping to $0$.  Not surprisingly, its expressivity is much lower than the equivalent perceptron with $\{0,1\}^n$ (as can be seen in \cref{fig:rankprob_n5} in \cref{sec:Zipf}).  Nevertheless, in Figure~\ref{fig:ptcentered} we see that as the number of layers increases, the behaviour rapidly resembles that of uncentered data (In~\cref{bias_centered_percepron} we also show that the bias toward low entropy is also recovered as we increase $\sigma_b$). So far this is the only exception we have found to the general bias to low entropy we observe for all other systems (see also~\cref{bias_real_world}). We therefore argue that this is a singular result brought about by particular symmetries of the perceptron with zero bias term.  The fact that there is an exception does not negate our general result which we find holds much more generally.

The insets of in~\cref{fig:multi_t} show that  the two trivial functions asymptotically dominate in the limit of large numbers of layers. 
We note that recent work (\citep{lee2018deep,luther2019variance}) has also pointed out that for fully-connected ReLU networks in the infinite-width infinite-depth limit, all inputs become asymptotically correlated, so that the networks will tend to compute the constant function. Here we give a quantitative characterisation of this phenomenon for any number of layers.

Some interesting recent work \citep{yang2019fine} has shown that certain choices of hyperparameters lead to networks which are a priori unbiased, that is the $P(f)$ appears to be uniform.  In~\cref{appendix:activation_bias} we show that this result is due to a choice of hyperparameters that lie deep in the chaotic region defined in \citep{poole2016exponential}.  The effect therefore depends on the choice of activation function (it can occur for say tanh and erf, but most likely not ReLU), and we are studying it further.  

\section{Discussion and future work}

In \cref{section:perceptron} Theorem 4.1,  we have proven the existence of an intrinsic bias towards Boolean functions of low entropy in a perceptron with no threshold bias term, such that $P(T=t)=2^{-n}$ for $0 \leq t < 2^n$. This result puts an upper bound on the probability that a perceptron with no threshold bias term will be initialised to a Boolean function with at least a certain Boolean complexity.  Adding a threshold term in general increases the bias towards low entropy.

We also study how the entropy bias is affected by adding a threshold bias term or ReLU-activated hidden layers. One of our main results, Theorem 5.5, proves that adding layers to a feed-forward neural network with ReLU activations makes the bias towards low entropy stronger. We also show empirically that the bias towards low entropy functions is further increased when a threshold bias term with high enough variance is added. Recently, \citep{luther2019variance} have argued that batch normalisation \citep{ioffe2015batch} makes ReLU networks less likely to compute the constant function (which has also been experimentally shown in \citep{how_to_train_resnet2019}). If batch norm increases the probability of high entropy functions, it could help explain why batch norm improves generalisation for (typically class balanced) datasets. We leave further exploration of the effect of batch normalisation on a-priori bias to future work.

Simplicity bias within the set of constant $t$ functions $\sF_t$ is affected by the choice of initialisation, even when the entropy bias is unaffected. This indicates that there are further properties of the parameter-function map that lead to a simplicity bias. In \cref{subsection:bias_within_t}, we suggest that the complexity of the conditions on $w$ producing a function should correlate with the complexity of the function, and we conjecture that more complex conditions correlate with a lower probability.

We note that the \textit{a priori}  inductive bias we study here is for a randomly initialised network.  If a network is trained on data, then the optimisation procedure (for example SGD)  may introduce further biases. In~\cref{initialization_vs_induction}, we give some evidence that the bias at initialization is the main driver of the inductive bias on SGD-trained networks. Furthermore, in~\cref{bias_vs_learning}, we show preliminary results on how bias in $P(f)$ can affect learning in class-imbalanced problems. This suggests that understanding properties of $P(f)$ (like those we study in this paper), can help design architectures with desired inductive biases.





 Simplicity bias in neural networks~\citep{GVP} offers an explanation of why DNNs work in the highly overparameterised regime.  DNNs can express an unimaginably large number of functions that will fit the training data, but almost all of these will give extremely poor generalisation. Simplicity bias, however, means that a DNN will preferentially choose low complexity functions, which should give better generalisation. 
 Here we have shown some examples where changing hyperparameters  can affect the bias further. This raises the possibility of explicitly  designing biases to optimise a DNN for a particular problem. 



\bibliography{iclr2020_conference}

\begin{thebibliography}{30}
\providecommand{\natexlab}[1]{#1}
\providecommand{\url}[1]{\texttt{#1}}
\expandafter\ifx\csname urlstyle\endcsname\relax
  \providecommand{\doi}[1]{doi: #1}\else
  \providecommand{\doi}{doi: \begingroup \urlstyle{rm}\Url}\fi

\bibitem[Anthony(2001)]{anthony2001discrete}
Martin Anthony.
\newblock \emph{Discrete mathematics of neural networks: selected topics},
  volume~8.
\newblock Siam, 2001.

\bibitem[De~Palma et~al.(2018)De~Palma, Kiani, and Lloyd]{SethLloyd}
Giacomo De~Palma, Bobak~Toussi Kiani, and Seth Lloyd.
\newblock Deep neural networks are biased towards simple functions.
\newblock \emph{arXiv preprint arXiv:1812.10156}, 2018.

\bibitem[Dingle et~al.(2018)Dingle, Camargo, and Louis]{KamaludinEtAl}
Kamaludin Dingle, Chico~Q Camargo, and Ard~A Louis.
\newblock Input--output maps are strongly biased towards simple outputs.
\newblock \emph{Nature communications}, 9\penalty0 (1):\penalty0 761, 2018.

\bibitem[Garriga-Alonso et~al.(2018)Garriga-Alonso, Rasmussen, and
  Aitchison]{garriga2018deep}
Adri{\`a} Garriga-Alonso, Carl~Edward Rasmussen, and Laurence Aitchison.
\newblock Deep convolutional networks as shallow gaussian processes.
\newblock \emph{arXiv preprint arXiv:1808.05587}, 2018.

\bibitem[Hinton \& Van~Camp(1993)Hinton and Van~Camp]{hinton1993keeping}
Geoffrey Hinton and Drew Van~Camp.
\newblock Keeping neural networks simple by minimizing the description length
  of the weights.
\newblock In \emph{in Proc. of the 6th Ann. ACM Conf. on Computational Learning
  Theory}. Citeseer, 1993.

\bibitem[Ioffe \& Szegedy(2015)Ioffe and Szegedy]{ioffe2015batch}
Sergey Ioffe and Christian Szegedy.
\newblock Batch normalization: Accelerating deep network training by reducing
  internal covariate shift.
\newblock \emph{arXiv preprint arXiv:1502.03167}, 2015.

\bibitem[Krogh \& Hertz(1992)Krogh and Hertz]{krogh1992simple}
Anders Krogh and John~A Hertz.
\newblock A simple weight decay can improve generalization.
\newblock In \emph{Advances in neural information processing systems}, pp.\
  950--957, 1992.

\bibitem[LeCun et~al.(2015)LeCun, Bengio, and Hinton]{lecun2015deep}
Yann LeCun, Yoshua Bengio, and Geoffrey Hinton.
\newblock Deep learning.
\newblock \emph{nature}, 521\penalty0 (7553):\penalty0 436, 2015.

\bibitem[Lee et~al.(2018)Lee, Sohl-dickstein, Pennington, Novak, Schoenholz,
  and Bahri]{lee2018deep}
Jaehoon Lee, Jascha Sohl-dickstein, Jeffrey Pennington, Roman Novak, Sam
  Schoenholz, and Yasaman Bahri.
\newblock Deep neural networks as gaussian processes.
\newblock In \emph{International Conference on Learning Representations}, 2018.
\newblock URL \url{https://openreview.net/forum?id=B1EA-M-0Z}.

\bibitem[Lempel \& Ziv(1976)Lempel and Ziv]{lempel1976complexity}
Abraham Lempel and Jacob Ziv.
\newblock On the complexity of finite sequences.
\newblock \emph{IEEE Transactions on information theory}, 22\penalty0
  (1):\penalty0 75--81, 1976.

\bibitem[Li et~al.(2008)Li, Vit{\'a}nyi, et~al.]{li2008introduction}
Ming Li, Paul Vit{\'a}nyi, et~al.
\newblock \emph{An introduction to Kolmogorov complexity and its applications},
  volume~3.
\newblock Springer, 2008.

\bibitem[Luther \& Seung(2019)Luther and Seung]{luther2019variance}
Kyle Luther and H~Sebastian Seung.
\newblock Variance-preserving initialization schemes improve deep network
  training: But which variance is preserved?
\newblock \emph{arXiv preprint arXiv:1902.04942}, 2019.

\bibitem[Matthews et~al.(2018)Matthews, Rowland, Hron, Turner, and
  Ghahramani]{matthews2018gaussian}
Alexander G de~G Matthews, Mark Rowland, Jiri Hron, Richard~E Turner, and
  Zoubin Ghahramani.
\newblock Gaussian process behaviour in wide deep neural networks.
\newblock \emph{arXiv preprint arXiv:1804.11271}, 2018.

\bibitem[McAllester(1999)]{mcallester1999some}
David~A McAllester.
\newblock Some pac-bayesian theorems.
\newblock \emph{Machine Learning}, 37\penalty0 (3):\penalty0 355--363, 1999.

\bibitem[Morgan \& Bourlard(1990)Morgan and Bourlard]{morgan1990generalization}
Nelson Morgan and Herv{\'e} Bourlard.
\newblock Generalization and parameter estimation in feedforward nets: Some
  experiments.
\newblock In \emph{Advances in Neural Information Processing Systems}, pp.\
  630--637, 1990.

\bibitem[Novak et~al.(2018)Novak, Xiao, Lee, Bahri, Abolafia, Pennington, and
  Sohl-Dickstein]{novak2018bayesian}
Roman Novak, Lechao Xiao, Jaehoon Lee, Yasaman Bahri, Daniel~A Abolafia,
  Jeffrey Pennington, and Jascha Sohl-Dickstein.
\newblock Bayesian convolutional neural networks with many channels are
  gaussian processes.
\newblock \emph{arXiv preprint arXiv:1810.05148}, 2018.

\bibitem[Page(2019)]{how_to_train_resnet2019}
David Page.
\newblock How to train your resnet 7: Batch norm, 2019.
\newblock URL \url{https://myrtle.ai/how-to-train-your-resnet-7-batch-norm/}.

\bibitem[Poole et~al.(2016)Poole, Lahiri, Raghu, Sohl-Dickstein, and
  Ganguli]{poole2016exponential}
Ben Poole, Subhaneil Lahiri, Maithra Raghu, Jascha Sohl-Dickstein, and Surya
  Ganguli.
\newblock Exponential expressivity in deep neural networks through transient
  chaos.
\newblock In \emph{Advances in neural information processing systems}, pp.\
  3360--3368, 2016.

\bibitem[Rahaman et~al.(2018)Rahaman, Baratin, Arpit, Draxler, Lin, Hamprecht,
  Bengio, and Courville]{RahmanEtAl}
Nasim Rahaman, Aristide Baratin, Devansh Arpit, Felix Draxler, Min Lin, Fred~A
  Hamprecht, Yoshua Bengio, and Aaron Courville.
\newblock On the spectral bias of neural networks.
\newblock \emph{arXiv preprint arXiv:1806.08734}, 2018.

\bibitem[Raj(2018)]{ppt_expressivity}
Bhiksha Raj.
\newblock Neural networks: What can a network represent, 2018.
\newblock URL
  \url{http://www.cs.cmu.edu/~bhiksha/courses/deeplearning/Spring.2018/www/slides/lec2.universal.pdf}.

\bibitem[Rosenblatt(1958)]{rosenblatt1958perceptron}
Frank Rosenblatt.
\newblock The perceptron: a probabilistic model for information storage and
  organization in the brain.
\newblock \emph{Psychological review}, 65\penalty0 (6):\penalty0 386, 1958.

\bibitem[Schmidhuber(1997)]{schmidhuber1997discovering}
J{\"u}rgen Schmidhuber.
\newblock Discovering neural nets with low kolmogorov complexity and high
  generalization capability.
\newblock \emph{Neural Networks}, 10\penalty0 (5):\penalty0 857--873, 1997.

\bibitem[Schoenholz et~al.(2017)Schoenholz, Gilmer, Ganguli, and
  Sohl-Dickstein]{schoenholz2016deep}
Samuel~S Schoenholz, Justin Gilmer, Surya Ganguli, and Jascha Sohl-Dickstein.
\newblock Deep information propagation.
\newblock In \emph{International Conference on Learning Representations}, 2017.
\newblock URL \url{https://openreview.net/forum?id=H1W1UN9gg}.

\bibitem[Srivastava et~al.(2014)Srivastava, Hinton, Krizhevsky, Sutskever, and
  Salakhutdinov]{srivastava2014dropout}
Nitish Srivastava, Geoffrey Hinton, Alex Krizhevsky, Ilya Sutskever, and Ruslan
  Salakhutdinov.
\newblock Dropout: A simple way to prevent neural networks from overfitting.
\newblock \emph{The Journal of Machine Learning Research}, 15\penalty0
  (1):\penalty0 1929--1958, 2014.

\bibitem[Valle-P{\'e}rez et~al.(2018)Valle-P{\'e}rez, Camargo, and Louis]{GVP}
Guillermo Valle-P{\'e}rez, Chico~Q Camargo, and Ard~A Louis.
\newblock Deep learning generalizes because the parameter-function map is
  biased towards simple functions.
\newblock \emph{arXiv preprint arXiv:1805.08522}, 2018.

\bibitem[Wu et~al.(2017)Wu, Zhu, and E]{wu2017towards}
Lei Wu, Zhanxing Zhu, and Weinan E.
\newblock Towards understanding generalization of deep learning: Perspective of
  loss landscapes.
\newblock \emph{arXiv preprint arXiv:1706.10239}, 2017.

\bibitem[Wu et~al.(2016)Wu, Schuster, Chen, Le, Norouzi, Macherey, Krikun, Cao,
  Gao, Macherey, et~al.]{WuEtAl}
Yonghui Wu, Mike Schuster, Zhifeng Chen, Quoc~V Le, Mohammad Norouzi, Wolfgang
  Macherey, Maxim Krikun, Yuan Cao, Qin Gao, Klaus Macherey, et~al.
\newblock Google's neural machine translation system: Bridging the gap between
  human and machine translation.
\newblock \emph{arXiv preprint arXiv:1609.08144}, 2016.

\bibitem[Yang(2019)]{yang2019scaling}
Greg Yang.
\newblock Scaling limits of wide neural networks with weight sharing: Gaussian
  process behavior, gradient independence, and neural tangent kernel
  derivation.
\newblock \emph{arXiv preprint arXiv:1902.04760}, 2019.

\bibitem[Yang \& Salman(2019)Yang and Salman]{yang2019fine}
Greg Yang and Hadi Salman.
\newblock A fine-grained spectral perspective on neural networks.
\newblock \emph{arXiv preprint arXiv:1907.10599}, 2019.

\bibitem[Zhang et~al.(2016)Zhang, Bengio, Hardt, Recht, and
  Vinyals]{zhang2016understanding}
Chiyuan Zhang, Samy Bengio, Moritz Hardt, Benjamin Recht, and Oriol Vinyals.
\newblock Understanding deep learning requires rethinking generalization.
\newblock \emph{arXiv preprint arXiv:1611.03530}, 2016.

\end{thebibliography}
\bibliographystyle{iclr2020_conference}

\appendix

\section{Proof of uniformity}\label{appendix:cube_proof}


For convenience we repeat some notation we use in this section. Let $\{0,1\}^n$ be the set of vertices of the $n$-dimensional hypercube. We use $\dotp{\cdot, \cdot}$ to refer to the standard inner product in $\mathbb{R}^n$. Define the function $\mathcal{T} : \mathbb{R}^n \rightarrow \mathbb{N}$ as the number of vertices of the hypercube that are above the hyperplane with normal vector $w$ and that passes through the origin. Formally $\mathcal{T}(w) =|{x\in \{0,1\}^n: \dotp{ w,x} > 0}|$. We use $\odot$ for element-wise multiplication of two vectors.

We slightly abuse notation and denote the probability density function corresponding to a probability measure $P$, with the same symbol, $P$. The arguments of the function or context should make clear which one is meant.

\textbf{Proof strategy.} We consider the sampling of the normal vector $w$ as a two-step process: we first sample the absolute values of the elements, giving us a vector $w_{\text{pos}}$ with positive elements\footnote{almost surely, assuming $0$ has zero probability measure}, and then we sample the signs of the elements. Our assumption on the probability distribution implies that each of the $2^n$ sign assignments is equally probable, each happening with a probability $2^{-n}$. The key of the proof is to show that for any $w_{\text{pos}}$, each of the sign assignments gives a distinct value of $T$ (and because there are $2^n$ possible sign assignments, for any value of $T$, there is exactly one sign assignment resulting in a normal vector with that value of $T$). This implies that, provided all sign assignments of any $w_{\text{pos}}$ are equally likely, the distribution on $T$ is uniform.

\bigskip

\textbf{\cref{uniformity_theorem}} \emph{Let $P$ be a probability measure on $\mathbb{R}^n$, which is symmetric under reflections along the coordinate axes, so that $P(x)=P(Rx)$, where $R$ is a reflection matrix (a diagonal matrix with elements in $\{-1,1\}$). Let the weights of a perceptron without bias, $w$, be distributed according to $P$. Then $P(T=t)$ is the uniform measure.}

Before proving the theorem, we first need a definition and a lemma.

\textbf{Definition} We define the function mapping a vector from $\{-1,1\}^n$ (which we interpret as the signature of the weight vector), and a vector of nonnegative reals (which we interpret as the absolute values of the elements of the weight vector) to the value of $t$ of the corresponding weight vector:
\begin{align}
\begin{aligned}
    K:\{-1,1\}^n\times\mathbb{R}^n_{\geq 0} &\to \{0,1,...,2^n-1\} \\
    (\sigma,a) &\mapsto \mathcal{T}(\sigma \odot a)
\end{aligned}
\end{align}

\begin{lemma}\label{lemma:K_bijective_mapping}
The function $K$ is bijective with respect to its first argument, for any value of its second argument except for a set of measure $0$.
\end{lemma}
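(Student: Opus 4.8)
The plan is to make the $\sigma$-dependence of $K(\sigma,a)$ completely explicit by a change of variables on the hypercube, which turns the claim into an elementary statement about subset sums of $a$. First I would fix $a\in\mathbb{R}^n_{\geq 0}$ with all entries strictly positive (which excludes only a measure-zero set), write $w=\sigma\odot a$, and split
\[
\dotp{w,x}=\sum_{i:\sigma_i=1}a_i x_i-\sum_{i:\sigma_i=-1}a_i x_i .
\]
Setting $y_i=x_i$ where $\sigma_i=1$ and $y_i=1-x_i$ where $\sigma_i=-1$ defines a bijection of $\{0,1\}^n$ onto itself, under which the right-hand side becomes $\dotp{a,y}-A_\sigma$, where $A_\sigma:=\sum_{i:\sigma_i=-1}a_i$. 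Hence
\[
K(\sigma,a)=\mathcal{T}(\sigma\odot a)=\bigl|\{y\in\{0,1\}^n:\dotp{a,y}>A_\sigma\}\bigr| .
\]

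The point of this reduction is that both the threshold $A_\sigma$ and the quantities $\dotp{a,y}$ being compared against it range over exactly the $2^n$ subset sums $\{\sum_{i\in S}a_i:S\subseteq\{1,\dots,n\}\}$ of $a$; indeed $A_\sigma$ is the subset sum indexed by $S=\{i:\sigma_i=-1\}$. I would then restrict to the generic case in which these $2^n$ subset sums are pairwise distinct and list them in increasing order $0=s_0<s_1<\dots<s_{2^n-1}$. If $A_\sigma=s_k$, then exactly $2^n-1-k$ subset sums strictly exceed it, so $K(\sigma,a)=2^n-1-k$. Since $\sigma\mapsto\{i:\sigma_i=-1\}\mapsto A_\sigma$ is a bijection from $\{-1,1\}^n$ onto the set of subset sums (injectivity of the second arrow being precisely the distinctness hypothesis) and $k\mapsto 2^n-1-k$ is a bijection of $\{0,\dots,2^n-1\}$, the composite $\sigma\mapsto K(\sigma,a)$ is a bijection onto $\{0,\dots,2^n-1\}$. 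This proves the lemma for every $a$ whose subset sums are all distinct.

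The main obstacle is the remaining measure-theoretic bookkeeping, i.e.\ checking that the excluded $a$ form a null set. Two subset sums coincide, $\sum_{i\in S}a_i=\sum_{i\in S'}a_i$ with $S\neq S'$, exactly when $a$ lies on the hyperplane $\{a:\dotp{c,a}=0\}$ determined by the nonzero vector $c=\mathbf{1}_S-\mathbf{1}_{S'}$ with entries in $\{-1,0,1\}$; together with the coordinate hyperplanes $\{a_i=0\}$ this is a finite union of hyperplanes, hence Lebesgue-null, and any weight distribution of the kind considered assigns it zero mass. The one step I would state with care is the boundary behaviour of the strict inequality in $\mathcal{T}$: the value $\dotp{a,y}=A_\sigma$ is attained (at $y=\mathbf{1}_{S}$ with $S=\{i:\sigma_i=-1\}$), but it is correctly excluded from the count, and on the generic set no other tie can occur, so the count $2^n-1-k$ is exact.
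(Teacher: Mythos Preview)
Your proof is correct and shares the paper's core idea: both arguments show that, for generic $a$, $K(\sigma,a)$ equals the rank of a particular subset sum of $a$ associated to $\sigma$, and bijectivity then follows from the distinctness of the $2^n$ subset sums.

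The route you take to that formula is, however, noticeably more direct than the paper's. The paper introduces a bijection $J_a:\{-1,1\}^n\to\sS_a$ and an auxiliary involution $^*$ on subset sums (essentially a symmetric-difference operation), and then verifies the identity $K(\sigma,a)=|\{s\in\sS_a:s<J_a(\sigma)\}|$ by checking which hypercube vertices lie above the hyperplane after applying $^*$. Your change of variables $y_i=x_i$ for $\sigma_i=1$ and $y_i=1-x_i$ for $\sigma_i=-1$ accomplishes the same reduction in one line, yielding $\dotp{\sigma\odot a,x}=\dotp{a,y}-A_\sigma$ and hence $K(\sigma,a)=|\{y:\dotp{a,y}>A_\sigma\}|$ immediately. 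This makes the subset-sum structure transparent and avoids the paper's bookkeeping with $\Sigma^{-1}$, $\cap$, $\cup$, and $^*$. Your genericity condition (distinct subset sums, i.e.\ $\dotp{c,a}\neq 0$ for nonzero $c\in\{-1,0,1\}^n$) is also slightly weaker than the paper's requirement that $|\tilde\sS_a|=3^n$, though both exclude only null sets, so this difference is immaterial.
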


\begin{proof}[Proof of \cref{lemma:K_bijective_mapping}]
Because the cardinality of the codomain of $K$ is the same as the domain of its first argument, it is enough to prove injectivity of $K$ with respect to its first argument.

Fix $a \in \mathbb{R}^n_{\geq 0}$ satisfying that the following set has cardinality $3^n$:
\[
\tilde{\sS}_a = \{\dotp{x, a}: x \in \{-1,0,1\}^n\}.
\]
Note that the set of $a$ in which some pair of elements in the definition of $\sS_a$ is equal has measure zero, because their equality implies that $a$ lies within a hyperplane in $\mathbb{R}^n$. Let us also define the set of subsums of elements of $a$: 
\[
\sS_a = \{\dotp{x, a}: x \in \{0,1\}^n\}.
\]
which has cardinality $2^n$ for the $a$ considered.

Now, consider a natural bijection $J_a:\{-1,1\}^n\rightarrow\sS_a$ induced by the bijective mapping of signatures $\sigma \in \{-1,1\}^n$ to vertices of the hypercube $\{0,1\}^n$ by mapping $-1$ to $0$ and $1$ to $1$. To be more precise, $J_a(\sigma) = \sum_{i=1}^n a_i\frac{(\sigma(i)+1)}{2}$.

Then, we claim that 
\begin{equation}\label{K_exp}
    K(\sigma,a) = |\{s\in \sS_a: s<J_a(\sigma)\}|
\end{equation}
This implies that for the $a$ we have fixed $K$ is injective, for if two $\sigma$ mapped to the same value, their corresponding value of $J_a(\sigma)$ should be the same, giving a contradiction. So it only remains to prove~\eqref{K_exp}.

Let us also first denote, for $x \in \{0,1\}^n$ and $s,s' \in \sS_a$
\begin{align*}
    \Sigma(x) &:= \dotp{x, a}\\
    s\cap s' &:= \dotp{(\Sigma^{-1}(s)\cap \Sigma^{-1}(s')), a}\\
    s\cup s' &:= \dotp{(\Sigma^{-1}(s)\cup \Sigma^{-1}(s')), a}\\
    \bar{s} &:= \dotp{(\overline{\Sigma^{-1}(s)}), a}
\end{align*}
where we interpret elements of $\{0,1\}^n$ as subsets of $\{1,...,n\}$. The notation above lets us interpret subsums in $S_a$ as subsets of entries of $a$. Note that $\Sigma^{-1}$ is well defined for the fixed $a$ we are considering.

Now, let $\sigma \in \{-1,1\}^n$, $s'=J_a(\sigma)$, and consider an $s\in \sS_a$ such that $s<s'$. Then $s\cap s' + s\cap \bar{s'} = s < s'$ so 
\begin{equation}\label{useful_thing}
-s\cap s' +s' - s\cap \bar{s'} > 0     
\end{equation}
Now, let the operation $^*$ (we omit dependence on $s'$) be defined for any $u\in \sS_a$ as $u^* := u\cap \bar{s'} + (s' - u\cap s') \in \sS_a$. Since $s'=J_a(\sigma)$ we have
$$
  \dotp{(\sigma \odot a), \Sigma^{-1}(s^*)} = - s\cap \bar{s'} + (s' - s\cap s').
$$

Using ~\eqref{useful_thing},
$$\dotp{(\sigma \odot a), \Sigma^{-1}(s^*)} > 0 \iff s < s'.$$
Therefore, all the points $\Sigma^{-1}(s^*)$ for $s<s'$ are above the hyperplane with normal $(\sigma \odot a)$, and all points $\Sigma^{-1}(s^*)$ for $s\geq s'$ are below or precisely on the hyperplane. All that is left is to show the converse, all points which are above the hyperplane are $\Sigma^{-1}(s^*)$ for one and only one $s<s'$. It suffices to show that the operation $^*$ is injective for all $s$ (as bijectivity follows from the domain and codomain being the same). By contradiction, let $s$ and $u$ map to the same value under $^*$, then $s\cap s' - s\cap \bar{s'} = u\cap s' - u\cap \bar{s'}$, which implies $s\cap s' = u\cap s'$ and $s\cap \bar{s'} = u\cap \bar{s'}$, for the $a$ we are considering, and so $s = u$. Therefore $^*$ is injective, and \eqref{K_exp} follows.
\end{proof}

\begin{proof}[Proof of \cref{uniformity_theorem}]
$$
P(T=t') = P({w:\mathcal{T}(w)=t'}) = \int \1_{\mathcal{T}(w)=t'} P(w) d^n w 
$$
Now, we can divide the integral into the quadrants corresponding to different signatures of $w$, and we can let $P(w) = \frac{1}{2^n}\tilde{P}(|w|)$, because it is symmetric under reflections of the coordinate axes.
\begin{align*}
P({w:\mathcal{T}(w)=t'}) &= \sum_{\sigma \in \{-1,1\}^n}\int_{\mathbb{R}^n_{\geq 0}} \frac{1}{2^n}\tilde{P}(a) d^n a \1_{\mathcal{T}(\sigma \odot a)=t'}\\
&= \frac{1}{2^n} \int_{\mathbb{R}^n_{\geq 0}} \tilde{P}(a) d^n a \sum_{\sigma \in \{-1,1\}^n} \1_{\mathcal{T}(\sigma \odot a)=t'}\\
&= \frac{1}{2^n} \int_{\mathbb{R}^n_{\geq 0}} \tilde{P}(a) d^n a\cdot 1\\
&= \frac{1}{2^n}.
\end{align*}
%
The third equality follows from \cref{lemma:K_bijective_mapping}. Indeed, bijectivity implies that for any $a$, except for a set of measure $0$, there is one and only one signature which results in $t'$.

\end{proof}

\section{Bounding Boolean function complexity, $K_{Bool}$, with $t$}\label{appendix:cube_complexity_proof}

\begin{theorem}\label{complexity_bound}
$n \times \mathrm{min}(t,2^n-t)-1$ is an upper bound on the complexity of Boolean functions $f$ for which $\mathcal{T}(f) = t$.
\end{theorem}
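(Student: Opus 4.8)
The plan is to exhibit, for any Boolean function $f$ with $\mathcal{T}(f)=t$, an explicit formula expressing $f$ and then simply count its binary connectives; since $K_{\text{Bool}}(f)$ is by definition the minimum number of binary connectives over all formulas for $f$, any such count is an upper bound. First I would reduce to the case $t\le 2^{n-1}$, so that $\min(t,2^n-t)=t$. If instead $t>2^{n-1}$, I would run the construction below on $\neg f$, which satisfies $\mathcal{T}(\neg f)=2^n-t\le 2^{n-1}$, and then use $f=\neg(\neg f)$: because negation is a unary connective it adds no binary connectives, so the bound obtained for $\neg f$ transfers verbatim to $f$. (Equivalently, one writes $f$ directly in conjunctive normal form over its $2^n-t$ zeros.) This duality step is exactly what turns the count into $n\min(t,2^n-t)-1$ rather than merely $nt-1$.

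For $t\le 2^{n-1}$ I would put $f$ in disjunctive normal form. For each of the $t$ points $x\in\{0,1\}^n$ with $f(x)=1$, form the minterm $m_x=\bigwedge_{i=1}^{n}\ell_i$, where $\ell_i$ is $x_i$ if the $i$-th coordinate of $x$ is $1$ and $\neg x_i$ otherwise; then $f=\bigvee_{x:\,f(x)=1} m_x$. Each minterm is a conjunction of $n$ literals and therefore uses exactly $n-1$ binary ($\wedge$) connectives, the negations being unary and uncounted. Joining the $t$ minterms by disjunction uses a further $t-1$ binary ($\vee$) connectives. The total is $t(n-1)+(t-1)=tn-1$, giving $K_{\text{Bool}}(f)\le tn-1=n\min(t,2^n-t)-1$, as claimed.

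Since this is essentially a normal-form construction, I do not expect a substantial obstacle; the only delicate points are bookkeeping. I must insist on the convention from the definition of $K_{\text{Bool}}$ that only binary connectives are counted, as this is precisely what makes each size-$n$ minterm cost $n-1$ instead of more, and what makes the complementation step in the duality reduction free. I must also check the arithmetic producing the $-1$, which comes exactly from needing only $t-1$ disjunctions to combine $t$ minterms. The lone genuine caveat is the degenerate case of the constant functions ($t=0$ or $t=2^n$), where $\min(t,2^n-t)=0$ and no minterms arise; these are trivially of complexity $0$ and are understood to be excluded from the nonconstant regime the bound is meant to control.
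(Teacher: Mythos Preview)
Your proposal is correct and follows essentially the same approach as the paper: both construct the canonical DNF over the $t$ satisfying assignments and the canonical CNF over the $2^n-t$ non-satisfying assignments, count $n-1$ binary connectives per clause plus one connective between consecutive clauses, and take the smaller of the two totals. Your explicit handling of the constant cases $t\in\{0,2^n\}$ and the observation that the outer negation in the duality step is free (being unary) are slightly more careful than the paper's write-up, but the core argument is identical.
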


\begin{proof}
Let $f : \{0,1\}^n \rightarrow \{0,1\}$ be a function s.t. $\mathcal{T}(f) = t$.

Let $x_1 \dots x_n$ be propositional variables and let each assignment to $x_1 \dots x_n$ correspond to a vector in $\{0,1\}^n$ in the straightforward way.

Let $\phi$ be the Boolean formula $\bigvee \{ \bigwedge \{  $if $v_i = 1$ then $x_i$ else $\neg x_i \mid v_i \in v \} \mid v \in \{0,1\}^n, f(v) = 1\}$. The formula $\phi$ expresses $f$ as a Boolean formula in Disjunctive Normal Form (DNF).

Let $\psi$ be the Boolean formula $\bigwedge \{ \bigvee \{  $if $v_i = 0$ then $x_i$ else $\neg x_i \mid v_i \in v \} \mid v \in \{0,1\}^n, f(v) = 0\}$. The formula $\psi$ expresses $f$ as a Boolean formula in Conjunctive Normal Form (CNF).

Since $f$ maps $t$ out of the $2^n$ vectors in $\{0,1\}^n$ to $1$ it must be the case that $\phi$ has $t$ clauses and $\psi$ has $2^n-t$ clauses. Each clause contains $n - 1$ binary connectives, and there is one connective between each clause. Hence $\phi$ contains $n \times t - 1$ binary connectives and $\psi$ contains $n \times (2^n-t) - 1$ binary connectives. Therefore $f$ is expressed by some Boolean formula of complexity $n \times \mathrm{min}(t,2^n-t) - 1$.

Since $f$ was chosen arbitrarily, if a function $f : \{0,1\}^n \rightarrow \{0,1\}$ maps $t$ inputs to $1$ then the complexity of $f$ is at most $n \times \mathrm{min}(t,2^n-t)-1$.
\end{proof}

\begin{theorem} \label{complexity_bound_2}
Let $C$ be a defined recursively as follows;\newline
$C(n,0) = 0$ \newline
$C(n, 2^n) = 0$ \newline
$C(n,1) = n-1$ \newline
$C(n, 2^n-1) = n-1$ \newline
$C(n, t) = C(n-1, \lceil t/2 \rceil) + C(n-1, \lfloor t/2 \rfloor) + 2$

Then C($n,t$) is an upper bound on the complexity of Boolean functions $f$ for which $\mathcal{T}(f) = t$.
\end{theorem}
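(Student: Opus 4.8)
The plan is to prove the bound by induction on the number of variables $n$, constructing for every $f$ with $\mathcal{T}(f)=t$ a Boolean formula whose number of binary connectives is at most $C(n,t)$. The four explicit values of $C$ are exactly the base cases, handled by direct constructions: for $t=0$ and $t=2^n$ the function is constant and needs no connectives; for $t=1$ the function is a single minterm, a conjunction of $n$ literals using $n-1$ binary connectives; and for $t=2^n-1$ it is a single maxterm (the negation of a minterm, by De Morgan), again $n-1$ connectives. These special cases are what make the recursion tighter than blindly unrolling it would (unrolling $t=1$ would give $n$ rather than $n-1$), and, as noted below, their values are tuned precisely so as not to spoil the concavity the inductive step relies on.

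For the inductive step I would take an arbitrary $f$ with $\mathcal{T}(f)=t$ and perform a Shannon expansion on one variable, say $x_n$, writing $f=(\neg x_n\wedge f_0)\vee(x_n\wedge f_1)$, where $f_0$ and $f_1$ are the two cofactors: functions of the remaining $n-1$ variables with $\mathcal{T}(f_0)+\mathcal{T}(f_1)=t$. Applying the induction hypothesis bounds their complexities by $C(n-1,\mathcal{T}(f_0))$ and $C(n-1,\mathcal{T}(f_1))$, and recombining them costs a fixed number of additional connectives. This yields a bound of the form $C(n-1,t_0)+C(n-1,t_1)+(\mathrm{const})$, where $(t_0,t_1)$ is the split induced by the particular $f$.

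The statement to be proved, however, depends only on $t$ and uses the balanced split $(\lceil t/2\rceil,\lfloor t/2\rfloor)$, whereas a given $f$ need not admit a balanced split on any single variable (e.g. the set $\{000,001,010,100\}$ splits as $(3,1)$ on every coordinate). To reconcile the two I would prove, as an auxiliary lemma by a parallel induction on $n$, that $t\mapsto C(n,t)$ is concave, i.e. has non-increasing successive differences; the base-case values are exactly consistent with this, e.g. $C(n,1)-C(n,0)=C(n,2)-C(n,1)=n-1$. Concavity implies that among all splits $t_0+t_1=t$ the balanced one \emph{maximises} $C(n-1,t_0)+C(n-1,t_1)$, so the actual split produced by $f$ can only give a smaller bound; substituting the balanced split then reproduces the stated recursion.

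The hard part will be pinning the additive constant down to exactly $2$. The naive combination $(\neg x_n\wedge f_0)\vee(x_n\wedge f_1)$ spends three binary connectives, and a short case analysis shows that a generic multiplexer of two unrelated cofactors cannot be realised with only two, so the delicate point is to exhibit a recombination costing just two connectives while keeping the induction hypothesis applicable to both cofactors. Concretely one must either find a sharper combining gadget (exploiting the available choice of binary connectives, and the fact that the splitting literal appears only on one side) or show that the base-case efficiencies propagate upward to absorb the discrepancy; verifying that this tightened step is compatible with the concavity lemma is where essentially all of the work lies, the remainder being bookkeeping over the four base cases.
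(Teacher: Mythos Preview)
Your overall strategy---induction on $n$, Shannon expansion into two cofactors, then arguing that the balanced split is the worst case---is exactly the paper's approach. On the two points you flag as delicate, you have in fact been more careful than the paper.

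For the auxiliary lemma, the paper proves only that $C(n,\cdot)$ is strictly increasing on $[0,2^{n-1}]$ and, invoking the resulting unimodality, asserts that the balanced split maximises $C(n-1,t_1)+C(n-1,t_2)$ subject to $t_1+t_2=t$. As you implicitly recognise, unimodality alone does not justify that step: one can write down symmetric unimodal sequences for which an off-centre split gives a strictly larger sum than the balanced one. Your concavity lemma is the right tool here and actually closes a gap in the paper's argument.

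On the additive constant, the paper writes the very same three-connective formula $(x_{n+1}\wedge\psi_1)\vee(\neg x_{n+1}\wedge\psi_2)$ that you write and then simply states the bound $C(n,t_1)+C(n,t_2)+2$, without saying where the third connective went. So this is not a defect of your proposal relative to the paper; it is an unresolved point in the paper's own proof of the stated recursion. You are right to identify it as the crux, and the paper offers no help on it.
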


\begin{proof}

Let P($n$) be that $C(n,t)$ is an upper bound on the complexity of Boolean functions $f$ over $n$ variables s.t. $\mathcal{T}(f) = t$.

Base case P($1$): If $\phi$ is a Boolean formula defined over $1$ variable then $\phi$ is equivalent to True, False, $x_1$, or $\neg x_1$. We can see by exhaustive enumeration that P($1$) holds in each of these four cases.

Inductive step P($n$) $\rightarrow$ P($n+1$):
Let $\phi$ be a Boolean formula defined over $n+1$ variables s.t. $\mathcal{T}(\phi) = t$.

\textbf{Case 1.} $t = 0$: If $t = 0$ then $\phi \equiv$ False, and so the complexity of $\phi$ is $0$. Hence the inductive step holds.\newline
\textbf{Case 2.} $t = 2^n$: If $t = 2^n$ then $\phi \equiv$ True, and so the complexity of $\phi$ is $0$. Hence the inductive step holds.\newline
\textbf{Case 3.}  $t = 1$: If $t = 1$ then $\phi$ has just a single satisfying assignment. If this is the case then $\phi$ can be expressed as a formula of length $n$ written in Disjunctive Normal Form, and hence the inductive step holds.\newline
\textbf{Case 4.}  $t = 2^n-1$: If $t = 2^n-1$ then $\phi$ has just a single non-satisfying assignment. If this is the case then $\phi$ can be expressed as a formula of length $n$ written in Conjunctive Normal Form, and hence the inductive step holds.\newline
\textbf{Case 5.} $1 < t$ and $t < 2^n - 1$: If $\phi$ is a Boolean formula defined over $n+1$ variables then $\phi$ is logically equivalent to a formula $(x_{n+1} \land \psi_1) \lor (\neg x_{n+1} \land \psi_2)$, where $\psi_1$ and $\psi_2$ are defined over $x_1 \dots x_n$. Let $t_1$ be the number of assignments to $x_1 \dots x_n$ that are mapped to $1$ by $\psi_1$, and let $t_2$ be the corresponding value for $\psi_2$.

By the inductive assumption the complexity of $\psi_1$ and $\psi_2$ is bounded by C($n,t_1$) and C($n,t_2$) respectively. Therefore, since $\phi \equiv (x_{n+1} \land \psi_1) \lor (\neg x_{n+1} \land \psi_2)$ it follows that the complexity of $\phi$ is at most $C(n, t_1) + C(n, t_2) + 2$. Since $t_1 + t_2 = t$, and since $C(n,t_a) < C(n,t_b)$ if $t_b$ is closer to $2^{n-1}$ than $t_a$ is (lemma ~\ref{lemma:for_joar}), it follows that the complexity of $\phi$ is bounded by $C(n, t) = C(n-1, \lceil t/2 \rceil) + C(n-1, \lfloor t/2 \rfloor) + 2$.

Since Case 1 to 5 are exhaustive the inductive step holds.
\end{proof}

\begin{lemma}\label{lemma:for_joar}
If $t+1 \leq 2^{n-1}$ then $C(n,t) < C(n,t+1)$.
\end{lemma}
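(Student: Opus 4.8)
The plan is to prove the equivalent (cleaner) statement that $C(n,\cdot)$ is \emph{strictly increasing} on the integers $0,1,\dots,2^{n-1}$, proceeding by induction on $n$ with base case $n=2$. (The case $n=1$ is degenerate: the hypothesis $t+1\le 2^{0}$ forces $t=0$, and $C(1,0)=C(1,1)=0$, so the statement is read for $n\ge 2$, which is all that \cref{complexity_bound_2} requires.) For $n=2$ one checks directly that $C(2,0)=0$, $C(2,1)=1$, and $C(2,2)=2C(1,1)+2=2$, so $C(2,\cdot)$ is strictly increasing on $\{0,1,2\}$.

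For the inductive step I would first reduce the recursive part of the claim to a single forward difference. For $t\ge 2$ both $C(n,t)$ and $C(n,t+1)$ are given by the recursion, since none of the reserved values $0,1,2^n-1,2^n$ is hit (we have $2\le t$ and $t+1\le 2^{n-1}<2^n-1$ for $n\ge 2$). Incrementing $t$ by one raises exactly one of $\lceil t/2\rceil,\lfloor t/2\rfloor$ by one and leaves the other fixed, so in both parities the recursion telescopes to
\begin{equation*}
C(n,t+1)-C(n,t)=C\!\left(n-1,\lfloor t/2\rfloor+1\right)-C\!\left(n-1,\lfloor t/2\rfloor\right).
\end{equation*}
Writing $k=\lfloor t/2\rfloor$, the hypothesis $t+1\le 2^{n-1}$ yields $k+1\le 2^{n-2}$ (in the even case $t=2k$ this uses that the odd number $2k+1\le 2^{n-1}$ forces $2k+1\le 2^{n-1}-1$), so the inductive hypothesis at level $n-1$ applies and makes the right-hand side strictly positive.

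It remains to handle the two reserved small values of $t$, which are defined by fiat rather than by the recursion and therefore must be treated separately: $C(n,0)=0<n-1=C(n,1)$ for $n\ge 2$, and $C(n,1)=n-1<2n-2=2C(n-1,1)+2=C(n,2)$, again for $n\ge 2$. I expect the main obstacle to be exactly this bookkeeping at the interface between the base values $t\in\{0,1\}$ and the recursive regime, together with confirming that the parity-dependent shift of $\lceil t/2\rceil$ and $\lfloor t/2\rfloor$ really does collapse to the single difference above and that the range condition $k+1\le 2^{n-2}$ survives the transition; once those are in place, the ``bulk'' monotonicity for $t\ge 2$ follows cleanly from the inductive hypothesis.
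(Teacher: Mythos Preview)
Your proof is correct and follows essentially the same route as the paper's: induction on $n$, with the inductive step reducing $C(n,t+1)-C(n,t)$ via the recursion to a difference at level $n-1$ and then invoking the inductive hypothesis. Your version is in fact cleaner---you collapse the paper's two parity cases into the single identity $C(n,t+1)-C(n,t)=C(n-1,\lfloor t/2\rfloor+1)-C(n-1,\lfloor t/2\rfloor)$, and you explicitly handle the boundary values $t\in\{0,1\}$ (where the recursion does not apply), which the paper glosses over.
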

\begin{proof}
Let P($n$) be that if $t+1 \leq 2^{n-1}$ then $C(n,t) < C(n,t+1)$.

Base case P(2): We can see that $C(4,0)=1$, $C(4,1)=2$, $C(4,2)=4$, $C(4,3)=2$ and $C(4,4)=1$. By exhaustive enumeration we can see that P(2) holds.

Inductive step P($n$) $\rightarrow$ P($n+1$):


\textbf{Case 1. }\textit{$t$ is even}:
\begin{align*}
C(n+1,t)-C(n+1,t+1) &= (C(n,t/2) + C(n,t/2) + 2) \\
&- (C(n,t/2) + C(n,t/2 + 1) + 2) \\
&= C(n,t/2) - C(n,t/2 + 1)
\end{align*}
If $t$ is even and $t+1 \leq 2^{(n+1)-1}$ then $t/2 + 1 \leq 2^{n-1}$. Hence $C(n,t/2) - C(n,t/2 + 1) < 0$ by the inductive assumption, and so $C(n+1,t)-C(n+1,t+1) < 0$.

\textbf{Case 2. }\textit{$t$ is odd}:
\begin{align*}
C(n+1,t)-C(n+1,t+1) &= (C(n,(t+1)/2) + C(n,(t-1)/2) + 2) \\
&- (C(n,(t+1)/2) + C(n,(t+1)/2) + 2) \\
&= C(n,(t-1)/2) - C(n,(t+1)/2)
\end{align*}
If $t$ is odd and $t+1 \leq 2^{(n+1)-1}$ then $(t+1)/2 \leq 2^{n-1}$. Hence $C(n,(t-1)/2) - C(n,(t+1)/2) < 0$ by the inductive assumption, and so $C(n+1,t)-C(n+1,t+1) < 0$.

Since Case 1 and 2 are exhaustive the inductive step holds.\end{proof}



\section{$P(t=0)$  for perceptron with infinitesimal $b$ }\label{sec:b}

If b is sampled uniformly from $[-\epsilon,\epsilon]$, then only if $|\langle w,x\rangle| < \epsilon$ can some $x$ be classified differently from a perceptron without a threshold bias term. The set of weight vectors which change the classification of non-zero $x$ becomes vanishingly small as $\epsilon$ goes to $0$, but for $x=0$, we have $P(\1(\langle w,0\rangle+b)=0)=P(\1(\langle w,0\rangle+b)=1)=1/2$. Consider some function $f$, and define $g$ where $f\to g$ under the addition of an infinitesimal bias. Then with even probability the origin remains mapped to $0$ (meaning $\mathcal{T}(g)=\mathcal{T}(f)$), or is mapped to $1$ (meaning $\mathcal{T}(g)=\mathcal{T}(f)+1$) as the rest of $f$ is unchanged, to $\mathcal{O}(\epsilon)$. As this is true of all $f$, $P(T=t)_{b\sim (-\epsilon,\epsilon)}=\frac{1}{2}P(T=t)+\frac{1}{2}P(T=t-1)$, leading to:

\begin{equation}\label{equation:dist_with_bias}
    P(T=t)=
    \begin{cases}
    2^{-(n+1)} \textrm{ if } t=0 \textrm{ or } t=2^n\\
    2^{-n} \textrm{ otherwise }
    \end{cases}.
\end{equation}

For larger $\sigma_b$ $P(t=0)$ or $P(t=2^n)$ increases with increasing $\sigma_b$ as can be seen in Figure 3 of the main text.

\section{Zipf's law in a perceptron with $b=0$}
\label{sec:Zipf}

In \citep{GVP} it was shown empirically that the rank plot for a simple DNN exhibited a Zipf like power law  scaling for larger ranks.  Zipf's law occurs in many branches of science (and probably for many reasons).   In this section we check whether this scaling also occurs for the Perceptron.



In \cref{fig:rank_plots}, we compare a rank plot of the probability $P(f)$ for individual functions for the simple perceptron with $b=0$, the perceptron, and for a one layer FCN. While all architectures have $n=7$, the perceptrons can of course express far fewer functions.  Nevertheless, both the perceptrons and the more complex FCN show similar phenomenology, with a Zipf law like tail at larger ranks (i.e.\ a power law).

\begin{figure}[H]
\centering
~~
\begin{subfigure}[b]{0.47\textwidth}
 \includegraphics[width=\textwidth]{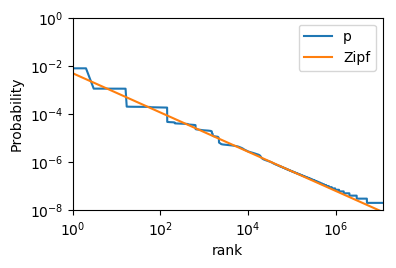}
    \caption{}
    \label{fig:perceptron_rank}
\end{subfigure}
~~
\begin{subfigure}[b]{0.47\textwidth}
 \includegraphics[width=\textwidth]{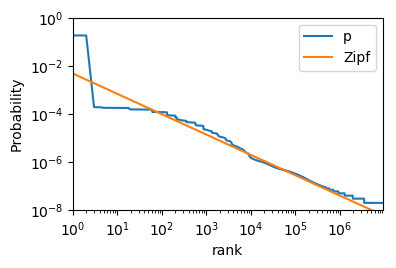}
    \caption{}
    \label{fig:n7_bias_rankprob}
\end{subfigure}
~~
\begin{subfigure}[b]{0.47\textwidth}
 \includegraphics[width=\textwidth]{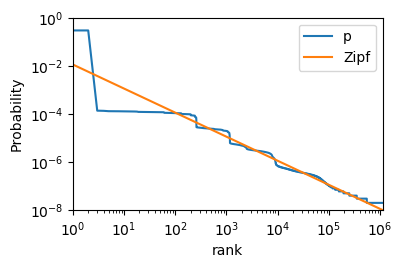}
    \caption{}
    \label{fig:l1_rank}
\end{subfigure}
~~
\begin{subfigure}[b]{0.47\textwidth}
 \includegraphics[width=\textwidth]{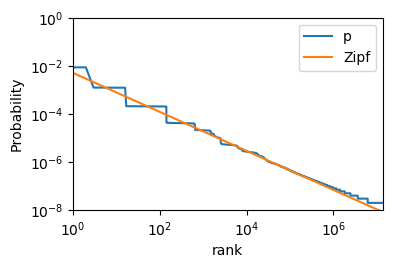}
    \caption{}
    \label{fig:uniform_perceptron_rank}
\end{subfigure}
\caption{Probability vs rank for functions (ranked by probability) from samples of size $10^8$, with input size $n=7$, and every weight and bias term sampled from $\mathcal{N}(0,1)$ unless otherwise specified, over initialisations of: (a) a perceptron with $b=0$; (b) a perceptron; (c) a one-hidden layer neural network (with 64 neurons in the hidden layer); (d) a perceptron with $b=0$ and weights sampled from identical centered uniform distributions (note how similar (a) is to (d)!). We cut off frequencies less than $2$ to eliminate finite size effects. In (a) and (b) lines were fitted using least-squares regression; for (c) the line corresponding to the ansatz in \cref{eqn:N0} is plotted instead.}
\label{fig:rank_plots}
\end{figure}

While the orginal formulations for Zipf's law only allows for a powerlaw with exponent 1, in practice the terminology of Zipf's law is used for other  powers, such that $p=b\times \textrm{rank}^{-a}$ for some positive $a,b$.
If we assume that this scaling persists, then we can relate the total number of functions a perceptron can express, to the constant $b$, because the total probability must integrate to $1$.  

For the simplest case with $a=1$, this leads to an equation for the probability as function of rank given by
\begin{equation}\label{eqn:N0}
P(r)=\frac{1}{\ln(N_O)r},
\end{equation}
where $N_0$ is the total number of functions expressible.

The FCN appears to show such simple scaling.  And
as the FCN of width 64 is fully expressive (see \cref{lemma:one_layer_expressivity}) , there are $N_{O} = 2^{2^7} \approx 3 \times 10^{38}$ possible Boolean functions. We plot the Zipf law prediction of \cref{eqn:N0}  next to the empirically estimated probabilities in \cref{fig:l1_rank}. We observe that the curve is described well at higher values of the rank Zipf's law.  Note that the mean probability  uniformly sampled over functions for this FCN is $\left<P(f)\right> = 1/N_O \approx 3 \times 10^{-39}$ so that we only measure a tiny fraction of the functions with extremely high probabilities, compared to the mean. Also, most functions have probabilities less than the mean, and only order $2^{-n}$ have probability larger than the mean. A least-squares linear fit on the log-log graph was consistent within experimental error for the ansatz.

For the perceptron with a bias term \cref{fig:n7_bias_rankprob}, we observe that the gradient differs substantially from $-1$, and a linear fit gives  $\log_{10}(p)=-0.85\log_{10}(rank)-2.32$. Using the same arguments for calculating $N_O$ as made in \cref{eqn:N0},  we obtain a prediction of $N_0=7.63\times10^9$, which is $91\%$ of the known value\footnote{https://oeis.org/A000609/list}.   
A linear fit for the perceptron with no threshold bias term gives $\log_{10}(p)=-0.81\log_{10}(rank)-2.31$, leading to a prediction of $N_0=3.97\times10^8$. which is, as expected, significantly lower than a perceptron with no threshold bias term. We expect there to be some discrepancy between the pure Zipf law prediction, and the true $N_O$, because the probability seems to deviate from the Zipf-like behaviour at the highest rank, which we observe for $n=5$ in \cref{fig:rankprob_n5}, as in this case the number of functions is small enough that it becomes feasible to sample all of them.

It is also worth mentioning that a rank-probability plot for a perceptron with weights sampled from a uniform distribution (\cref{fig:uniform_perceptron_rank}) is almost indistinguishable from the Gaussian case (\cref{fig:n7_bias_rankprob}), which is interesting, because when plotted against LZ complexity, as in Figure 1 of the main text, there is a small but discernible difference between the two types of initialisation.

\begin{figure}
    \centering
    \begin{subfigure}[b]{0.47\textwidth}
        \includegraphics[width=\textwidth]{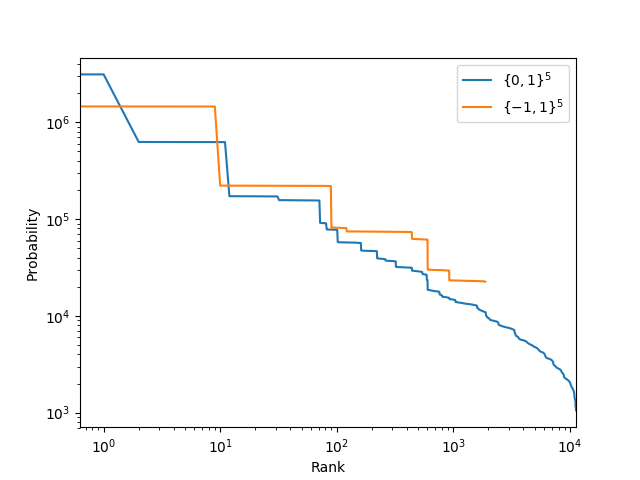}
        \caption{}
        \label{fig:rankprob_n5}
    \end{subfigure}
    ~
    \begin{subfigure}[b]{0.47\textwidth}
       \includegraphics[width=\textwidth]{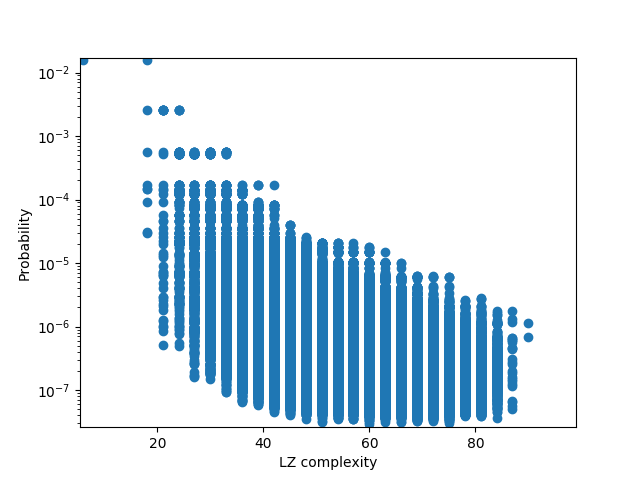} 
       \caption{}
       \label{fig:lzprob_n6_centered}
    \end{subfigure}
    \caption{(a) Probability of functions versus their rank (ranked by probability) for a perceptron with $n=5$, and weights sampled i.i.d.\ from a Gaussian and no threshold bias term, acting on either centered $\{-1,1\}^5$ or uncentered $\{0,1\}^5$ data. (b) Probability of functions versus their LZ complexity for a perceptron with $n=6$, and weights sampled i.i.d.\ from a Gaussian and no threshold bias term, acting on the centered Boolean hypercube $\{-1,1\}^5$.}
    \label{fig:centered_data_bias}
\end{figure}

Finally, in \cref{fig:rankprob_n5} we compare the rank plot for centered and uncentered data for a smaller $n=5$, $\sigma_b=0$ perceptron where we can find all functions.  Note that for the centered data, only functions with $t=16$ can be expressed, which is somewhat peculiar, and of course means significantly less functions.  Nevertheless, this systems still has clear bias within this one entropy class, and this bias correlates with the LZ complexity (\cref{fig:lzprob_n6_centered}) as also observed for the perceptron with centered data.




\section{Further results on the distribution within $\sF_t$}\label{appendix:empirical_within_t}

In this appendix we will denote the output function of the perceptron evaluated on $\{0,1\}^n$ by way of a bit string $f$, whose $i$'th bit is given by
\begin{equation}\label{equation:defining_f}
    f_i=\1(\langle w, bin(i)\rangle)
\end{equation}
where $bin(i)$ takes an integer $i$ and maps it to a point $x \in \{0,1\}^n$ according to its binary representation  (so $bin(5)=(1,0,1)$ and $bin(1)=(0,0,1)$ when $n=3$).

\subsection{Empirical results}\label{subappendix:empirical_bias}


We sample $10^8$ initialisations of the perceptron, divide the list of functions into $\sF_t$, and present probability-complexity plots for several values of $t$ in \cref{fig:ts}. We use the Lempel-Ziv complexity \citep{lempel1976complexity,KamaludinEtAl} of the output bit string as the approximation to the Kolmogorov complexity of the function \citep{KamaludinEtAl}. As in \citep{KamaludinEtAl,GVP}, this complexity measure is denoted $K_{LZ}(f)$. To avoid finite size effects (as noted in \citep{GVP}), we cut off all frequencies less than or equal to $2$.

\begin{figure}[H]
\centering
~~
\begin{subfigure}[b]{0.47\textwidth}
 \includegraphics[width=\textwidth]{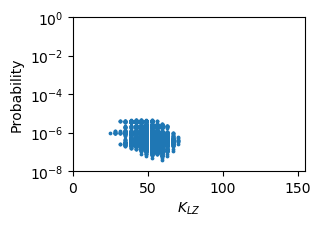}
    \caption{$\mathcal{T}(f)=9$}
    \label{fig:t9}
\end{subfigure}
~~
\begin{subfigure}[b]{0.47\textwidth}
	\includegraphics[width=\textwidth]{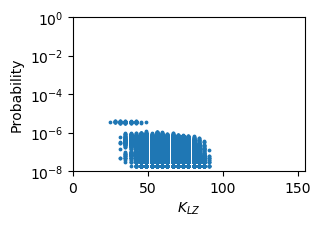}
    \caption{$\mathcal{T}(f)=16$}
    \label{fig:t16}
\end{subfigure}
~~
\begin{subfigure}[b]{0.47\textwidth}
 \includegraphics[width=\textwidth]{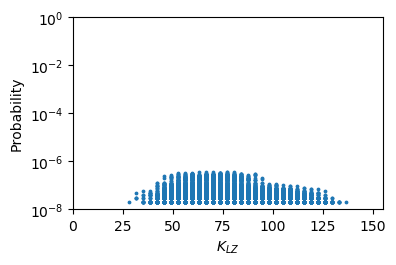}
    \caption{$\mathcal{T}(f)=41$}
    \label{fig:t41}
\end{subfigure}
~~
\begin{subfigure}[b]{0.47\textwidth}
	\includegraphics[width=\textwidth]{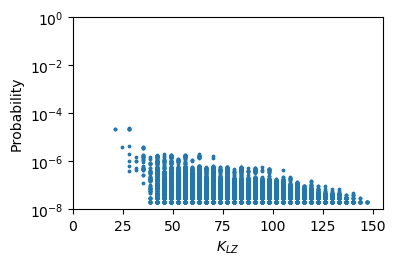}
    \caption{$\mathcal{T}(f)=64$}
    \label{fig:t64}
\end{subfigure}
~~
\caption{$P(f)$ vs $K_{LZ}(f)$ at a selection of values of $\mathcal{T}(f)$, for a perceptron with input dimension $7$, weights sampled from $\mathcal{N}(0,1)$ and no threshold bias terms. We observe a large range in $K_{LZ}(f_t)$ for $f_t \in \sF_t$ which increases as $t$ approaches $64$, which is to be expected -- for example the function $f=0101\dots$ is very simple and has maximum entropy, and we expect there to exist higher complexity functions at higher entropy. Consistent with the bound in \citep{GVP}, simpler functions tend to have higher probabilities than more complex ones. The data-points at especially high probabilities in \cref{fig:t64} correspond to the function $f=0101\dots$ and equivalent strings after permuting dimensions.}
\label{fig:ts}
\end{figure}


As can be seen in  \cref{fig:bias_perceptron_l1} the probability-complexity graph satisfies the simplicity bias bound~\cref{eq:SB} for all functions.   Now, in  \cref{fig:ts}) we observe subsets $\sF_f$ of the overall set of functions.
Firstly, we observe, as expected, that smaller $t$ means a smaller range in $K_{LZ}$, since high complexity functions are not possible at low entropy.  Conversely, low complexity functions are possible at high entropy (say for 010101...), and so a larger range of probabilities and complexities is observed for $t=64$.  For these larger entropies, an overall simplicity bias within the set of fixed $t$ can be observed.

The larger range observed in $t=64$ and $t=16$ (compared to $t=41$ and $t=9$) can be explained by the presence of highly ordered functions having those $t$ values - for example, in $t=64$, there is $f=0101\dots$ and its symmetries; and in $t=16$ there are functions such as $f=00010001\dots$ and its symmetries. The other two $t$ values do not divide $2^7$, so there will be no functions with such low block entropy (implying low $K_{LZ}$).

We also demonstrate differences in the variation in $P(f)$ vs $K_{LZ}(f)$ when $w$ is sampled from uniform distributions, in \cref{fig:n7_uniform_gaussian}, and compare these pots to those in \cref{fig:ts}.
Whilst we know from \cref{uniformity_theorem} that sampling $w$ from a uniform distribution will not affect $P(t)$, it is not hard to see that there will be some variation in function probability within the classes $\sF_t$. We observe that the simple functions which have high probability for $t=64$ when the perceptron is initialised from a Gaussian (\cref{fig:t64}) have  lower probabilities in the uniform case (\cref{fig:t64_u}). We comment further on this behaviour in \cref{subappendix:Substructure}.
However, we see limited differences in their respective rank-probability plots ( \cref{fig:perceptron_rank} and \cref{fig:uniform_perceptron_rank}). 

\begin{figure}[H]
\centering
~~
\begin{subfigure}[b]{0.47\textwidth}
    \includegraphics[width=\textwidth]{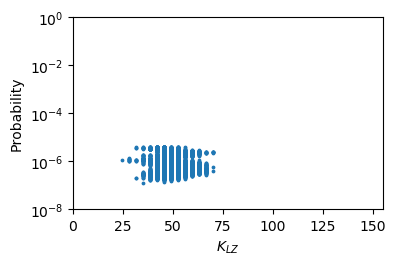}
    \caption{$\mathcal{T}(f)=9$}
    \label{fig:t9_u}
\end{subfigure}
~~
\begin{subfigure}[b]{0.47\textwidth}
	\includegraphics[width=\textwidth]{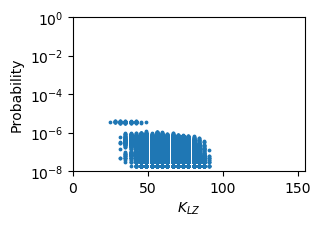}
    \caption{$\mathcal{T}(f)=16$}
    \label{fig:t16_u}
\end{subfigure}
~~
\begin{subfigure}[b]{0.47\textwidth}
    \includegraphics[width=\textwidth]{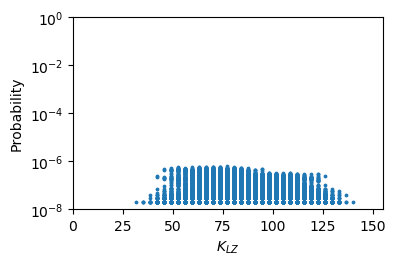}
    \caption{$\mathcal{T}(f)=41$}
    \label{fig:t41_u}
\end{subfigure}
~~
\begin{subfigure}[b]{0.47\textwidth}
	\includegraphics[width=\textwidth]{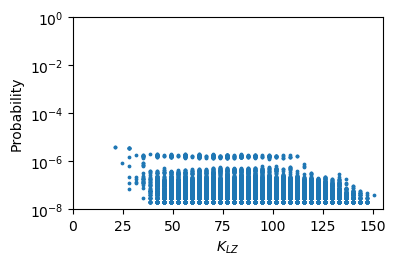}
    \caption{$\mathcal{T}(f)=64$}
    \label{fig:t64_u}
\end{subfigure}
~~
\caption{$P(f)$ vs $K_{LZ}(f)$ at a selection of values of $\mathcal{T}(f)$, for a perceptron with input dimension $7$, weights sampled from a centered uniform distribution and no threshold bias terms. We compare to \cref{fig:ts}, and observe that uniform sampling reduces slightly reduces the simplicity bias within the sets $\sF_t$ (see \cref{subappendix:Substructure}).}
\label{fig:n7_uniform_gaussian}
\end{figure}


\subsection{Substructure within $\mathbf{\{0,1\}^n}$}\label{subappendix:Substructure}


Consider a subset $\mathcal{H}^m \subset \{0,1\}^n$ such that $0 \in \mathcal{H}^m$. We have $\binom{n}{m}$ such subsets. Then the marginal distribution over $\mathcal{H}^m$ is given by
\begin{equation}\label{equation:H^m}
    P\Bigg(\sum_{x \in \mathcal{H}^m}\1(\langle w,x \rangle)=t \Bigg)=2^{-m}
\end{equation}

This is again independent of the distribution of $w$ (provided it's symmetric about coordinate planes). We give two example applications of  \cref{equation:H^m} in \cref{equation:highS_highP}. We use $*$ to mean any allowed value, and sum over all allowed values,
\begin{equation}\label{equation:highS_highP}
\begin{split}
    \sum_{*}P(f="0*0*\dots0*0*")=2^{-(n-1)}\\
    \sum_{*}P(f="0***\dots0***")=2^{-(n-2)}
\end{split}
\end{equation}
We can apply the same argument that we applied in \cref{section:perceptron} to any set of bits in $f$ defined by some $\mathcal{H}^m$, to show that there is an ``entropy bias'' within each of these substrings. However, these identities imply a strong bias within each $\sF_t$. For the case of full expressivity, assuming each value of $t$ has probability $2^{-n}$, and every string with the same value of $t$ is equally likely, one gets probabilities very close to those in \cref{equation:highS_highP} (although slightly lower). However, the perceptron is not fully expressive, so it is unclear how much the probabilities on \cref{equation:highS_highP} are due purely to the entropy bias, and how much is due to bias within each $\sF_t$.

It is difficult to calculate the exact probabilities of any function for Gaussian initialisation\footnote{Except for $f_{t<3}$, because we know all functions within constant $t$ for $t<3$ are equivalent under permutation of the dimensions so all their probabilities are equal to the average probabilities given by~\cref{equation:entropy_bias}}. It may not be possible to fine-grain probabilities analytically further than~\cref{equation:H^m} (although we can use the techniques in \cref{subsection:bias_within_t} to come up with analytic expressions for $P(f)$ for all $f$).


However, we can calculate some probabilities quite easily when $w$ is sampled from a uniform distribution, $w \sim \mathcal{U}^n(-1,1)$. By way of example, we calculate $P(f=\widetilde{f})$ for $\widetilde{f}="0101\dots0101"$. The conditions for $\widetilde{f}$ are as follows: $w_n>0,\;w_i<0 \; \forall i \neq n$ and $\sum_j w_j > 0$, so
\begin{equation}\label{equation:f_uniform}
    P(f=\widetilde{f})=2^{-n}\int_0^1 \frac{x^{n-1}}{(n-1)!}dx=\frac{2^{-n}}{n!}
\end{equation}
Using~\cref{equation:entropy_bias}, we can calculate how much more likely the function is than expected,
\begin{equation}\label{equation:ratio_f/f_tilde}
    \frac{P(f=\widetilde{f})}{\langle P(f_{t=2^n-1}) \rangle}=\frac{|F_{t=2^n-1}|}{n!}
\end{equation}
For $n=5$, we can calculate \cref{equation:ratio_f/f_tilde} using\footnote{370 unique functions were obtained by sampling $10^{10}$ weight vectors so this is technically a lower bound} $|F_{t=2^n-1}| =370$ and $w\sim \mathcal{U}^n(-1,1)$ and obtain $P(f=\widetilde{f})/\langle P(f_{t=2^n-1}) \rangle =3.08$. This clearly shows that $\widetilde{f}$ is significantly more likely than expected just by using results from \cref{equation:entropy_bias}. Empirical results further suggest that for Gaussian initialisation of $w$, $P(f=\widetilde{f})/\langle P(f_{t=2^n-1}) \rangle \approx 10$. This, plus data in \cref{subappendix:empirical_bias} suggest that Gaussian initialisation may lead to more simplicity bias.

\section{Towards understanding the simplicity bias observed within $\sF_t$}\label{appendix:BWTA}

Here we offer some intuitive arguments that aim to explain why there should be further bias towards simpler functions within $\sF_t$.

We first need several definitions. 

We define a set $\sA \subset R^n_{\geq 0}$ such that $a \in \sA \textrm{ iff } a_i<a_j \; \forall \; i<j$, interpreted as the absolute values of the weight vector.

We now define four sets, $\Gamma$ $\Sigma$ and $\Upsilon$, which classify the types of linear conditions on the weights of a perceptron acting on an $n$-dimensional hypercube:
\begin{enumerate}
    \item $\Gamma$, the set of permutations in $\{1,\dots,n\}$ (which we can interpret as permutations of the axes in $\sR^n$ or as possible orders of the absolute values of the weights if no two of them are equal).
    \item $\Sigma=\{-1,+1\}^n$ (which is interpreted as the signature of the weight vector)
    \item $\Upsilon$ is the set of linear inequality conditions on components of $a \in \sA$, which include more than two elements of $a$ (so they exclude the conditions defining $\sA$.
\end{enumerate}

A unique weight vector can be specified giving its signature $\sigma\in\Sigma$, the order of its absolute values $\gamma\in\Gamma$, and a value of $a\in\sA$.  On the other hand, a unique \emph{function} can be specified by a set of linear conditions on the weight vector $w$. These conditions can be divided into three types: $\Sigma$ (signs), $\Gamma$ (ordering), and $\Upsilon$ (any other condition).






The intuition to understand the variation in complexity and probability over different functions is the following. Each function corresponds to a unique set of necessary and sufficient conditions on the weight vector (corresponding to the faces of the cone in weight space producing that function). We argue that different functions have conditions which vary a lot in complexity, and we conjecture that this correlates with their probability, as we discuss in \cref{subsection:bias_within_t} in the main text.




As a first approach in understanding this, we consider the role of symmetries under permutations of dimensions. Any string that is symmetric under a permutation of $k$ dimensions\footnote{or example, ``0101010...'' is symmetric under permutation of $n-1$ dimensions}, can't have necessary conditions that represent relative orderings of those $k$ dimensions. Furthermore the set of conditions must be invariant under these permutations. This strongly constraints the sets of conditions that highly symmetric strings like ``010101...'' or ``11111...00000...'' can have, to be relatively simple sets.

We now consider the set of necessary conditions in $\Upsilon$ for different functions. We expect that conditions in $\Upsilon$ are more complex than those in $\Gamma$ and $\Sigma$. Furthermore, we find that functions have a similar number of minimal conditions\footnote{In fact we conjecture that the number of conditions is close to $n$ for most functions, although we empirically find that it can sometimes be larger too}, so that more conditions in $\Upsilon$ seems to imply fewer conditions in $\Gamma$ and $\Sigma$, and therefore, a more complex set of conditions overall.

We are going to explicitly study the functions within each set of constant $t$, for some small values of $t$, and find their corresponding conditions in $\Upsilon$ and $\Sigma$. We fix $\Gamma$ to be the identity for simplicity. The conditions for a particular function should include the conditions we find here plus their corresponding conditions under any permutation of the axes which leaves the function unchanged. This means that on top of the describing the conditions for a fixed $\Gamma$, we would need to describe the set of axes which can be permuted. This will result in a small change to the Kolmogorov complexity of the set of conditions, specially small for functions with many symmetries or very few symmetries.

We find that the conditions appear to be arranged in a decision tree (a consequence of \cref{uniformity_theorem}) with a particular form. First, we will prove the following simple lemma which bounds the value of $t$ that is possible for some special cases of $\sigma$.

\begin{lemma}\label{lemma:sigma_t}
    We define $t_{min}(\sigma)$ to be the minimum possible value of $\mathcal{T}(\sigma \odot a)$ for fixed $\sigma$ over all $a$. We define $t_{max}$ similarly. Consider $\sigma=(\underbrace{-1,\dots,-1}_{k-1},+1,\underbrace{-1,\dots,-1}_{n-k})$. Then:
    \begin{enumerate}[topsep=0pt,itemsep=-1ex,partopsep=1ex,parsep=1ex]
        \item $t_{max}(\sigma)=2^{k-1}$
        \item $t_{min}(\sigma)=k$
        \item Changing $\sigma_i=\{+1\}$ for some $i<k$ will lead to an increase in $t_{min}(\sigma)$
    \end{enumerate}
\end{lemma}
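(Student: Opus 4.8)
The plan is to work directly with the decomposition $w = \sigma \odot a$, where $a \in \sA$ is increasing, $0 < a_1 < \dots < a_n$, so that for any vertex $x \in \{0,1\}^n$ one has $\langle \sigma \odot a, x\rangle = a_k x_k - \sum_{i \neq k} a_i x_i$. First I would record two structural reductions. Since every $a_i \geq 0$, a vertex with $x_k = 0$ has $\langle \sigma \odot a, x\rangle \leq 0$ and is never counted; and if $x_j = 1$ for some $j > k$ then $\sum_{i \neq k} a_i x_i \geq a_j > a_k$ (using that $a$ is increasing), so such a vertex is never counted either. Hence $\mathcal{T}(\sigma \odot a)$ equals the number of assignments $(x_1, \dots, x_{k-1}) \in \{0,1\}^{k-1}$ with $\sum_{i<k} a_i x_i < a_k$, the coordinates $x_k = 1$ and $x_{k+1} = \dots = x_n = 0$ being forced.

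For part 1, this count is at most the total number $2^{k-1}$ of such assignments, and equality is attained whenever $a_k > \sum_{i<k} a_i$, a condition compatible with $a \in \sA$ (e.g. $a_i = 2^i$ gives $\sum_{i<k} a_i = 2^k - 2 < 2^k = a_k$), so $t_{max}(\sigma) = 2^{k-1}$. For part 2, I would note that the empty assignment together with each of the $k-1$ singletons $\{i\}$, $i < k$, satisfies $\sum_{i<k} a_i x_i < a_k$ for every $a \in \sA$, since a single $a_i < a_k$; this forces $\mathcal{T}(\sigma \odot a) \geq k$ for all $a$. Conversely, choosing the $a_i$ nearly equal so that $a_1 + a_2 \geq a_k$ (again compatible with strict ordering, e.g. $a_i = 1 + i\eps$ for small $\eps$) makes every assignment of weight $\geq 2$ have partial sum $\geq a_1 + a_2 \geq a_k$ and hence fail, leaving exactly these $k$ vertices, so $t_{min}(\sigma) = k$.

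For part 3, let $\sigma'$ be obtained from $\sigma$ by setting $\sigma'_i = +1$ for a fixed $i < k$, so the positive support becomes $\{i, k\}$. The key point is that $t_{min}$ is a minimum over $a$, so it suffices to exhibit a fixed collection of vertices that lie strictly above the hyperplane for every $a \in \sA$; their number is then a uniform lower bound on $\mathcal{T}(\sigma' \odot a)$, hence on $t_{min}(\sigma')$. I would take the $k+1$ distinct vertices $e_i$, $e_k$, $e_i + e_k$, and $e_j + e_k$ for $j \in \{1, \dots, k-1\} \setminus \{i\}$. Their inner products $\langle \sigma' \odot a, \cdot\rangle$ equal $a_i$, $a_k$, $a_i + a_k$, and $a_k - a_j$ respectively, each strictly positive for all increasing $a$ (the last because $j < k$ gives $a_j < a_k$). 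Since the supports $\{i\}, \{k\}, \{i,k\}, \{j,k\}$ are pairwise distinct, we get $t_{min}(\sigma') \geq k+1 > k = t_{min}(\sigma)$, the desired strict increase.

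The only genuinely delicate step is part 3: because $t_{min}$ is a minimum over $a$, one cannot argue by picking a single convenient $a$, and I must instead produce a lower bound valid \emph{uniformly} in $a$. The device that resolves this is to lower-bound the count by vertices whose membership above the hyperplane is forced by the ordering constraint $a \in \sA$ alone — precisely the vertices whose inner product is a bare positive coordinate or a difference $a_k - a_j$ with $j < k$. The remaining work (parts 1 and 2) is a pair of routine extremal constructions, each requiring only a check that the extremal choice of $a$ respects the strict ordering $a \in \sA$.
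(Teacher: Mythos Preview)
Your argument is correct and, for parts 1 and 2, essentially identical to the paper's: the same reduction to counting assignments $(x_1,\dots,x_{k-1})$ with $\sum_{i<k}a_i x_i<a_k$, and the same extremal constructions $a_k>\sum_{i<k}a_i$ and $a_1+a_2\geq a_k$.

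For part 3 you take a slightly different route. You exhibit $k+1$ explicit vertices ($e_i$, $e_k$, $e_i+e_k$, and $e_j+e_k$ for $j<k$, $j\neq i$) that lie above the $\sigma'$-hyperplane for every $a\in\sA$, and then compare to $t_{min}(\sigma)=k$ from part 2. The paper instead uses a one-line monotonicity observation: flipping $\sigma_i$ from $-1$ to $+1$ increases each inner product by $2a_i x_i\geq 0$, so every vertex counted under $\sigma$ remains counted under $\sigma'$, while $e_i$ is newly counted; thus $\mathcal{T}(\sigma'\odot a)\geq \mathcal{T}(\sigma\odot a)+1$ for \emph{every} $a$, and minimising over $a$ gives $t_{min}(\sigma')\geq t_{min}(\sigma)+1$ directly. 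The paper's version buys a bit of generality (it does not rely on knowing $t_{min}(\sigma)$ explicitly, so it applies to arbitrary starting $\sigma$) and shows that the step you flagged as ``the only genuinely delicate'' one is in fact the most immediate of the three once monotonicity is noticed. Your explicit-vertex approach is perfectly valid, just slightly more laborious.
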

\begin{proof}
    \emph{1.} For any $a$, $f(x)=1$ for all (exactly $k$) points $x \in \{0,1\}^n$ which satisfy
    $$
    (x_k=1) \textrm{ and } (x_{i}=1 \textrm{ for exactly one }i \leq k) \textrm{ and } (x_{j}=0 \textrm{ if } j\neq i,k)
    $$
   We can set $f(x)=0$ for all other $x \in \{0,1\}^n$ by imposing the condition $a_1+a_2>a_k$. Thus $t_{min}(\sigma)=k$.
    
    \emph{2.} We can restrict the values of $a_i$ for $i<k$ to be arbitrarily smaller than $a_k$ provided they satisfy the ordering condition on $a$, and thus if we impose the condition
    $$\sum_{i=1}^{i=k-1} a_i<a_k$$
    then for any $a$, $f(x)=1$ for all $x \in \{0,1\}^n$ which satisfy 
    $$(x_k=1) \textrm{ and } (x_{l}=0 \textrm{ if }l>k)$$
    We can see that, for all $a$, $f(x)=0$ for any $x$ which does not satisfy these conditions, because $a_k<a_l$ for all $k<l$ and $\sigma_k$ is the only positive element in $\sigma$. Thus $t_{max}=2^{k-1}$
    
    \emph{3.} On changing $\sigma$ such that $\sigma_i=\{+1\}$, all $x \in \{0,1\}^n$ which previously satisfied $f(x)=1$ will remain mapped to $1$, plus at least the one-hot vector with $a_i=1$.
\end{proof}

We will now sketch a procedure that allows one to enumerate the conditions in $\Upsilon$ and $\Sigma$ (corresponding to conditions on $a$ and on $\sigma$ respectively) such that they satisfy $\mathcal{T}(\sigma \odot a)=t$, for any given $t$.

\begin{enumerate}
    \item From \cref{lemma:sigma_t} we see that all $\sigma_i=-1$ for $i>t$ in order for $t_{min}(\sigma) \leq t$.
    \item Iterate through all $2^t$ distinct $\sigma$ which satisfy 1., and retain only those which also satisfy $t_{min}(\sigma) \leq t$. $t_{min}(\sigma)$ can be computed by counting the number of inputs $x \in \{0,1\}^n$ such that for every $x_i=1$ with $\sigma_i=-1$, there exists a $x_j$ with $j>i$ such that $\sigma_j=1$. These are all the $x$ such that $\sigma$ and $a$ imply they are mapped to $1$ without further conditions on $a$.
    \item Find conditions on these $\sigma$ such that $\mathcal{T}(\sigma \odot a) = t$. We can find the possible values of $\mathcal{T}(\sigma \odot a)$ by first ordering the $x$ in a way that satisfies $x<x'$ if $x$ has less $1$s than $x'$. We then traverse the decision tree corresponding to mapping each of the $x$, in order, to either $1$ or $0$. At each step, we propagate the decision by finding all $x$ not yet assigned an output, which can be constructed as a linear combination of $x'$s with assignments, where the coefficients in the combination have opposite signs for $x'$ mapped to different outputs. We stop the traversal if at some point more than $t$ points are mapped to $1$.
    Each of the decisions in the tree, correspond to a new condition on $a$. We denote these conditions by $\upsilon$.
\end{enumerate}

For small values of $t$ one can perform this search by hand. For example, we consider $t=4$. We find that all signatures with $\sigma_{i>4}=-1$ are the only set that have $t_{min}\leq 4$. As an example we consider the signature $\sigma=\{+1,+1,-1,...,-1)$. For this signature to result in $\mathcal{T}(\sigma \odot a)=4$, we need conditions on $x_1=(1,1,0,1\dots)$ and $x_2=(1,1,1,0\dots)$ which are $(a_4>a_1+a_2) \textrm{ and } (a_3<a_1+a_2)$. \cref{fig:branching} shows the full sets for $t=4$ and $t=5$. We observe that each branching corresponds to complementary conditions - which is to be expected, as there exists a signature producing $t$ for any $a$, as per \cref{uniformity_theorem}.

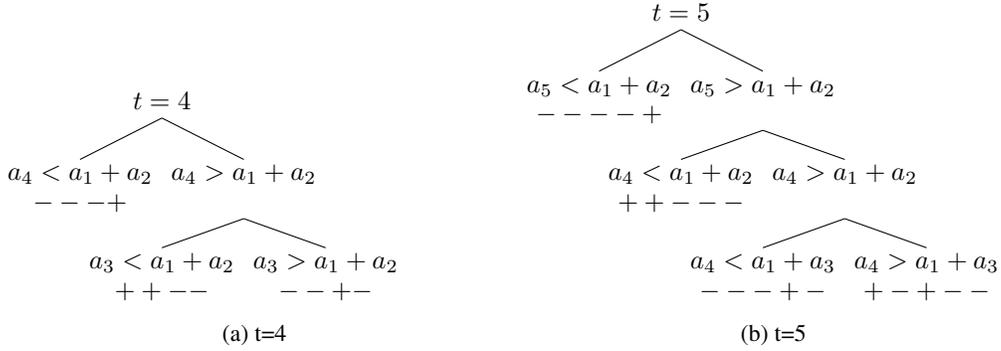
\begin{figure}[H]
    \centering
    ~~
    \begin{subfigure}[b]{0.47\textwidth}
        \begin{forest}, baseline, qtree
        [{$t=4$}
            [{$a_4<a_1+a_2$}\\{$---+$}
            ]
            [{$a_4>a_1+a_2$}\\{$\;$}
                [{$a_3<a_1+a_2$}\\{$++--$}]
                [{$a_3>a_1+a_2$}\\{$--+-$}]
            ]
        ]
        \end{forest}
        \caption{t=4}
        \label{subfig:branching_t4}
    \end{subfigure}
    ~~
    \begin{subfigure}[b]{0.47\textwidth}
    	\begin{forest}, baseline, qtree
        [{$t=5$}
            [{$a_5<a_1+a_2$}\\{$----+$}
            ]
            [{$a_5>a_1+a_2$}\\{$\;$}
                [{$a_4<a_1+a_2$}\\{$++---$}]
                [{$a_4>a_1+a_2$}\\{$\;$}
                    [{$a_4<a_1+a_3$}\\{$---+-$}]
                    [{$a_4>a_1+a_3$}\\{$+-+--$}]
                ]
            ]
        ]
        \end{forest}
        \caption{t=5}
        \label{subfig:branching_t5}
    \end{subfigure}
    ~~
    \caption{We assume that the signs to the right of those shown are all negative. We can list the various non-equivalent classes of functions and their conditions in a pictorial form. A condition at any node in the graph is also a condition for any daughter node - by way of example, we would read the conditions of $t=4$ for the sign arrangement $++--$ (see \cref{subfig:branching_t4}) as $((a_4>a_1+a_2) \cap (a_3<a_1+a_2))$.}
    \label{fig:branching}
\end{figure}

Each distinct condition on $a$ and $\sigma$ produces a unique function $f$, as the constructive procedure above produces the inequalities on $a_i$ by specifying the outputs of each input not already implied by the set of conditions, thus uniquely specifying the output of every input. We now show that there is a large range in the number of conditions in $\Upsilon$ required to specify different function. First, as the analysis in the proof of \cref{lemma:sigma_t} shows, for every $t$ there exists a signature which only requires one further condition,
$$
\sigma=(\underbrace{-1,\dots,-1}_{t-1},+1,\underbrace{-1,\dots,-1}_{n-t}), \: \upsilon=a_t<a_1+a_2
$$
Furthermore, we prove in \cref{lemma:max_conditions} that for all $n$ there is at least one function which has $n-2$ conditions in $\Upsilon$ (which are neither in $\Sigma$ or $\Gamma$). This implies that there is a large range in the number of conditions in $\Upsilon$ for large $n$.

\begin{lemma}\label{lemma:max_conditions}
    There exists a function with $n-2$ inequalities in $\Upsilon$, that is not including those induced by $\Sigma$ or in $\Gamma$.
\end{lemma}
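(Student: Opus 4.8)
The plan is to exhibit one explicit function on $n$ variables whose minimal set of conditions contains $n-2$ inequalities in $\Upsilon$, and to verify that each of these is genuinely necessary. Throughout I fix the ordering $\gamma$ to be the identity, so that $a_1 < a_2 < \dots < a_n$, and I use the fact that a condition lies in $\Upsilon$ precisely when it is a decision-boundary hyperplane $\langle \sigma \odot a, x\rangle = 0$ coming from an input $x$ with $|\mathrm{supp}(x)| \geq 3$ that forms a facet of the cone of magnitude vectors $a$ producing the target function. Equivalently, each such condition compares a subsum of the positively-signed coordinates against a subsum of the negatively-signed ones.

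First I would fix the signature rather than leaving it free, because the central difficulty is that every coordinate carries a single sign yet must participate in several conditions at once. A naive chain of Fibonacci-type conditions $a_k$ against $a_{k-1}+a_{k-2}$ fails, since it forces a coordinate to appear with opposite signs in consecutive conditions. I would instead use an alternating signature (after a short initial segment) together with conditions of the form $\upsilon_k : a_k \ \mathrm{vs}\ a_{k-1} + a_{j}$, where $j$ is a fixed small index whose sign matches that of $a_{k-1}$, so that the hyperplane $\langle \sigma\odot a,\, e_k + e_{k-1} + e_j\rangle = 0$ is consistent with the chosen signs. This produces one condition $\upsilon_k$ for each $k = 3, \dots, n$, hence $n-2$ of them, and since the defining functional of $\upsilon_k$ contains $a_k$ (the largest index involved) with unit coefficient, the $n-2$ functionals are triangular and therefore linearly independent.

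Next I would verify feasibility and membership: a slowly growing magnitude vector simultaneously satisfies the ordering and all $\upsilon_k$, so the cone is nonempty, and each $\upsilon_k$ visibly involves three distinct coordinates and is not reducible to a sign or ordering condition, hence lies in $\Upsilon$. The crux is necessity: I must show that no $\upsilon_k$ is implied by the ordering, the signs, and the remaining conditions together. For each $k$ I would exhibit a magnitude vector satisfying the ordering and all conditions except $\upsilon_k$, by moving $a_k$ across its threshold and propagating the change to $a_{k+1}$ so as to preserve $\upsilon_{k+1}$ (the only other condition in which $a_k$ appears); the resulting perturbation flips the classification of the single input $e_k + e_{k-1} + e_j$, so the function changes and $\upsilon_k$ is a genuine facet.

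The main obstacle I anticipate is exactly this non-redundancy step, and it is subtle for a structural reason: the ordering constraints $a_i < a_j$ silently imply many subsum inequalities from others (for instance $a_k > c$ together with $a_{k+1} > a_k$ forces $a_{k+1} > c$), which is why comparing many peaks against one fixed sum collapses to at most two facets. Showing that the $n-2$ chosen conditions escape all such implications is what makes $n-2$ achievable and is the heart of the argument. I would carry it out either by the explicit perturbation argument above or, in keeping with the inductive style of \cref{lemma:sigma_t} and the surrounding results, by induction on $n$: building $f_n$ from $f_{n-1}$ so that restricting to $x_n = 0$ reproduces $f_{n-1}$ (thereby inheriting its $n-3$ conditions, none of which involve $a_n$), while the inputs with $x_n = 1$ contribute exactly one new independent condition.
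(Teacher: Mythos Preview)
Your inductive alternative in the final paragraph is essentially the paper's proof. The paper runs an induction on $n$, building $f_{n+1}$ from $f_n$ by padding the existing facet-defining inputs with a trailing zero and then adjoining one new facet. Its concrete witness is the function whose cone in weight space has facets precisely at the hyperplanes $\langle w,p_i\rangle=0$ for the prefix points $p_i=(\underbrace{1,\dots,1}_{i},0,\dots,0)$, $i=1,\dots,n$, with alternating labels $f(p_i)=(1+(-1)^{i+1})/2$. Of these $n$ facets, those with $i\ge 3$ involve at least three coordinates and hence lie in $\Upsilon$, which gives the $n-2$. The inductive step is purely geometric: starting inside the $n$-dimensional cone, move $w_{n+1}$ away from $0$ until the hyperplane $\langle w,p_{n+1}\rangle=0$ is crossed, which replaces the temporary boundary $w_{n+1}=0$ by this new facet while keeping the old ones intact.

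Your primary explicit construction, with three-term conditions $\upsilon_k:\ a_k$ versus $a_{k-1}+a_j$ for a \emph{single fixed} small index $j$, has a consistency gap as written. With an alternating signature the sign $\sigma_{k-1}$ itself alternates as $k$ runs over $3,\dots,n$, so no fixed $j$ can satisfy $\sigma_j=\sigma_{k-1}$ for all such $k$; you would need at least two anchor indices of opposite sign and alternate between them according to parity, after which the non-redundancy check is no longer the clean triangular argument you sketch (adjacent $\upsilon_k$ now share an anchor). The paper's prefix-sum conditions $\sum_{j\le i} w_j \gtrless 0$ sidestep all of this: each new condition adds exactly one fresh coordinate to the running partial sum, the signature is forced to alternate, and the ``one new facet per dimension'' claim follows directly from the cone-crossing picture without a separate perturbation argument.
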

\begin{proof}
    We prove this by induction.
    
    Base case $n=2$: There exists a function with minimal conditions corresponding to $(1,0)$ mapping to $1$ and $(1,1)$ mapping to $0$.
    
    Assume that in $n$ dimensions, there exists a function with $n$ minimal conditions corresponding to points $\{p_i\}_{i=1}^{n}$ with $p_i=(1,\dots,1,0,\dots,0)$ with $1$s in the first $i$ positions, and $0$ in the last $n-i$ positions, being mapped to $f(p_i)=(1+(-1)^{i+1})/2$. Now, in $n+1$ dimensions, we can extend each of those $n$ conditions (planes) by adding a $0$ to the $n$th coordinate of each point $p_i$. We now consider a cone in $n$ dimensions bounded by these planes and bounded by the $w_{n+1}=0$ plane, with $w_{n+1}>0$ if $n$ is even or $w_{n+1}<0$ if $n$ is odd. If $n$ is even, we can keep increasing $w_{n+1}$ until we cross the plane  $p_{n+1}=0$, with $p_{n+1}=(1,\dots,1)$. We do the same, decreasing $w_{n+1}$ if $n$ is odd. After we crossed the $p_{n+1}=0$ plane, the boundaries of the plane will be $\{p_i\}_{i=1}^{n+1}$ with $p_i=(1,\dots,1,0,dots,0)$ with $1$s in the first $i$ positions, and $0$ in the last $n+1-i$ positions, finishing the induction.

\end{proof}

We conjecture that more conditions in $\Upsilon$ (i.e. more complex conditions) correlates with lower probability (if $w$ is sampled from a Gaussian distribution). If this is true for higher $t$, we should expect to see high probabilities correlating with lower complexities, which if true would explain the ``simplicity bias'' we observe beyond the entropy bias.


\section{Expressivity conditions}\label{appendix:expressivity_conditions}

\textbf{\cref{lemma:one_layer_expressivity}.} \emph{A neural network with layer sizes $\langle n,2^{n-1},1\rangle$ can express all Boolean functions over $n$ variables.}

\begin{proof}
Let $f:\{0,1\}^n \rightarrow \{0,1\}$ be any Boolean function over $n$ variables, and let $t$ be the number of vectors in $\{0,1\}^n$ that $f$ maps to $1$. Let $\overline{f}$ be the negation of $f$, that is, $\overline{f}(x) = 1-f(x)$.

It is possible to express $f$ as a Boolean formula $\phi$ in Disjunctive Normal Form (DNF) using $t$ clauses, and $\overline{f}$ as a Boolean formula $\psi$ using $2^n-t$ clauses (see Appendix ~\ref{appendix:cube_complexity_proof}). Let $\phi$ and $\psi$ be specified over the variables $x_1 \dots x_n$.

We can specify a neural network $N_\phi = \langle W_{\phi1},b_{\phi1},W_{\phi2},b_{\phi2}\rangle$ with layer sizes $\langle n,t,1\rangle$ that expresses $f$ by mimicking the structure of $\phi$. Let $W_{\phi1[i,j]} = 1$ if $x_i$ is positive in the $j$'th clause of $\phi$, $W_{\phi1[i,j]} = -1$ if $x_i$ is negative in the $j$'th clause of $\phi$, and $W_{\phi1[i,j]} = 0$ if $x_i$ does not occur in the $j$'th clause of $\phi$ (note that if the construction in Appendix ~\ref{appendix:cube_complexity_proof} is followed then every variable occurs in every clause in $\phi$). Let $b_{\phi1[j]} = 1-\gamma$, where $\gamma$ is the number of positive literals in the $j$'th clause of $\phi$, let $b_{\phi2}=0$, and let $W_{\phi2[j]}=1$ for all $j$. Now $N_\phi$ computes $f$. Intuitively every neuron in the hidden layer corresponds to a clause in $\phi$, and the output neuron corresponds to an OR-gate.

We can similarly specify a neural network $N_\psi = \langle W_{\psi1},b_{\psi1},W_{\psi2},b_{\psi2}\rangle$ with layer sizes $\langle n,2^n-t,1\rangle$ that expresses $\overline{f}$ by mimicking the structure of $\psi$. Using this network we can specify a third network $N_\theta = \langle W_{\theta1},b_{\theta1},W_{\theta2},b_{\theta2}\rangle$ with the same layer sizes as $N_\psi$ that expresses $f$ by letting $W_{\theta1} = W_{\psi1},b_{\theta1} = b_{\psi1}, W_{\theta2} = -W_{\psi2},$ and $b_{\theta2} = -b_{\psi2}+0.5$. Intuitively $N_\theta$ simply negates the output of $N_\psi$.

Therefore, any Boolean function $f$ over $n$ variables is expressed by a neural network $N_\phi$ with layer sizes $\langle n,t,1\rangle$ and by a neural network $N_\theta$ with layer sizes $\langle n,2^n-t,1\rangle$. Since either $t \leq 2^{n-1}$ or $2^n-t \leq 2^{n-1}$ it follows that any Boolean function over $n$ variables can be expressed by a neural network with layer sizes $\langle n,2^{n-1},1\rangle$.
\end{proof}

\textbf{\cref{lemma:multi_layer_expressivity}.}
\emph{A neural network with $l$ hidden layers and layer sizes $\langle n,(n+2^{n-1-\log_2l}+1), \dots (n+2^{n-1-\log_2l}+1),1\rangle$ can express all Boolean functions over $n$ variables.}
\begin{proof}

Let $f:\{0,1\}^n \rightarrow \{0,1\}$ be any Boolean function over $n$ variables, and let $t$ be the number of vectors in $\{0,1\}^n$ that $f$ maps to $1$. Let $\overline{f}$ be the negation of $f$, that is, $\overline{f}(x) = 1-f(x)$.

It is possible to express $f$ as a Boolean formula $\phi$ in Disjunctive Normal Form (DNF) using $t$ clauses, and $\overline{f}$ as a Boolean formula $\psi$ using $2^n-t$ clauses (see Appendix ~\ref{appendix:cube_complexity_proof}). Let $\phi$ and $\psi$ be specified over the variables $x_1 \dots x_n$.

Let $N_\phi = \langle W_{\phi1},b_{\phi1},\dots W_{\phi l+2},b_{\phi l+2}\rangle$ be a neural network with layer sizes $\langle n,(n+\lceil t/l \rceil+1),\dots,(n+\lceil t/l \rceil+1),1\rangle$.

For $2 \leq k \leq l$:

\begin{itemize}
  \item Let $W_{\phi kA}$ be an identity matrix of size $n \times n$, and $b_{\phi kA}$ a zero vector of length $i$.
  \item Let $W_{\phi kB}$ be a matrix of size $n \times \lceil t/l \rceil$ such that $W_{\phi kB[i,j]} = 1$ if $x_i$ is positive in the $((k-1)\times(\lceil t/l \rceil)+j)$'th clause of $\phi$, $-1$ if it is negative, and $0$ if $x_i$ does not occur in the $((k-1)\times(\lceil t/l \rceil)+j)$'th clause of $\phi$ (note that if the construction in Appendix ~\ref{appendix:cube_complexity_proof} is followed then every variable occurs in every clause in $\phi$). Let $b_{\phi kB}$ be a vector of length $ \lceil t/l \rceil$ such that $b_{\phi kB[j]} = 1-\gamma$, where $\gamma$ is the number of positive literals in the $((k-1)\times(\lceil t/l \rceil)+j)$'th clause of $\phi$.
  \item Let $W_{\phi kC}$ be a unit matrix of size $(\lceil t/l \rceil +1) \times 1$, and let $b_{\phi kC} = [0]$.
  \item Let $W_{\phi k}$ and $b_{\phi k}$ be the concatenation of $W_{\phi kA}$, $W_{\phi kB}$, $W_{\phi kC}$ and $b_{\phi kA}$, $b_{\phi kB}$, $b_{\phi kC}$ respectively in the following way:
  \newline\newline
  $W_{\phi k}=
  \begin{bmatrix}
    [W_{\phi kA}] & [W_{\phi kB}] &  0_{(\lceil t/l \rceil +1) \times 1}\\
    0_{n \times n} &  0_{n \times \lceil t/l \rceil} & [W_{\phi kC}] \\
  \end{bmatrix},
  \space
  b_{\phi k} =
  \begin{bmatrix}
    [b_{\phi kA}] & [b_{\phi kB}] & [b_{\phi kC}]\\
  \end{bmatrix}$ 
\end{itemize}

Let $W_{\phi 1}$ and $b_{\phi 1}$ be the concatenation of $W_{\phi kA}$, $W_{\phi kB}$, and $b_{\phi kA}$, $b_{\phi kB}$ respectively, where these are constructed as above. Let $W_{\phi (l+2)}$ be a unit matrix of size $(\lceil t/l \rceil +1) \times 1$, and let $b_{\phi (l+2)} = [0]$.

Now $N_\phi$ computes $f$. Intuitively each hidden layer is divided into three parts; $n$ neurons that store the value of the input to the network, $\lceil t/l \rceil$ neurons that compute the value of some of the clauses in $\phi$, and one neuron that keeps track of whether some clause that has been computed so far is satisfied.

We can similarly specify a neural network $N_\psi = \langle W_{\psi1},b_{\psi1},\dots W_{\psi l+2},b_{\psi l+2}\rangle$ with layer sizes $\langle n,(n+\lceil (2^n-t)/l \rceil+1),\dots,(n+\lceil (2^n-t)/l \rceil+1),1\rangle$ that expresses $\overline{f}$. Using this network we can specify a third network $N_\theta = \langle W_{\theta1},b_{\theta1},\dots W_{\theta l+2},b_{\theta l+2}\rangle$ with the same layer sizes as $N_\psi$ that expresses $f$ by letting $W_{\theta k} = W_{\psi k},b_{\theta k} = b_{\psi k}$ for $k \leq l$ and $W_{\theta (l+2)} = -W_{\psi (l+2)},$ and $b_{\theta (l+2)} = -b_{\psi (l+2)}+0.5$. Intuitively $N_\theta$ simply negates the output of $N_\psi$.

Therefore, any Boolean function $f$ over $n$ variables is expressed by a neural network $N_\phi$ with layer sizes $\langle n,(n+\lceil t/l \rceil+1),\dots,(n+\lceil t/l \rceil+1),1\rangle$ and by a neural network $N_\theta$ with layer sizes $\langle n,(n+\lceil (2^n-t)/l \rceil+1),\dots,(n+\lceil (2^n-t)/l \rceil+1),1\rangle$. Since either $t \leq 2^{n-1}$ or $2^n-t \leq 2^{n-1}$ it follows that any Boolean function over $n$ variables can be expressed by a neural network with layer sizes $\langle n,(n+\lceil 2^{n-1}/l \rceil+1),\dots,(n+\lceil 2^{n-1}/l \rceil+1),1\rangle = \langle n,(2^{n-1-\log_2l}+n+1), \dots (2^{n-1-\log_2l}+n+1),1\rangle$.
\end{proof}

\section{Theorems associated with entropy increase for DNNs}\label{appendix:lemmas_for_dnn_bias}

We define the data matrix for a general set of input points below.

\begin{definition}[Data matrix]\label{def:data_matrix}
For a general set of inputs, $\{x^(i)\}_{i=1,\dots,m}$, $x_i \in \mathbb{R}^n$, we define the data matrix $\mX\in \mathbb{R}^{m \times n}$ which has elements $X_{ij} = x^(i)_j$, the $j$-th component of the $i$-th point.
\end{definition}

\textbf{\cref{lemma:dnn_perceptron_equivalent}. }\emph{For any set of inputs $\sS$, the probability distribution on $T$ of a fully connected feedforwad neural network with linear activations, no bias, and i.i.d. initialisation of the weights is equivalent to an perceptron with no bias and i.i.d. weights.}
\begin{proof}\label{lemma:relu_distances}
Consider a neural network with $L$ layers and weight matrices $w_0 \dots w_L$ (notation in \cref{section:DNNs}) acting on the set of points $\sS$. The output of the network on an input point $x \in \sS$ equals $\widetilde{w} x$ where $\widetilde{w} = w_L w_{L-1} \dots w_1 w_0$. As the weight matrices $w_i$ are i.i.d., their distributions are spherically symmetric $P(w_i = aR)$ = $P(w_i = a)$ for any rotation matrix $R$ in $\mathbb{R}^{n_{i}}$ and matrix $a\in \mathbb{R}^{n_{i+1} \times n_i}$. This implies that $P(\widetilde{w} = aR) = P(\widetilde{w} = a)$. Because the value of $T$ is independent of the magnitude of the weight, $|\widetilde{w}|$, this means that $P(T=t)$ is equivalent to that of an perceptron with i.i.d. (and thus spherically symmetric) weights.
\end{proof}

One can make \cref{lemma:dnn_perceptron_equivalent} stronger, by only requiring the first layer weights $w_0$ to have a spherically symmetric distribution, and be independent of the rest of the weights. If the set of inputs is the hypercube, $\sS = \mathcal{H}^{n}$, one needs even weaker conditions, namely that the distribution of $w_0$ is symmetric under reflections along the coordinate axes (as in \cref{uniformity_theorem}) and has signs independent of the rest of the layer's weights. This implies that the condition of \cref{uniformity_theorem} is satisfied by $\widetilde{w}$.


\bigskip

\textbf{\cref{lemma:lower_bound_T=0}. }\emph{Applying a ReLU function in between each layer produces a lower bound on $P(T=0)$ such that $P(T=0)\geq 2^{-n}$.}
\begin{proof}
Consider the action of a neural network $\langle n,l_1,\dots,l_p,1\rangle$ with ReLU activation functions on $\{0,1\}^n$. After passing $\{0,1\}^n$ through $l_1 \dots l_p$ and applying the final ReLU function after $l_p$, all points must lie in $\sR^n_{\geq 0}$ by the definition of the ReLU function. Then, if $w$ is sampled from a distribution symmetric under reflection in coordinate planes:
$$
2^{-n}=P(w \cdot \sR^n_{\geq 0} \leq 0) \leq P(T=0,ReLU)
$$

This result also follows from~\cref{corolloray:more_0s}.

We observe $P(T=2^n-1) \approx P(T=0)$ because the two states are symmetric except in cases where multiple points are mapped to the origin. This happens with zero probability for infinite width hidden layers.
\end{proof}

\bigskip

\textbf{\cref{theorem:entropy_layers}}\emph{
Let $\sS$ be a set of $m=|\sS|$ input points in $\mathbb{R}^n$. Consider neural networks with i.i.d. Gaussian weights with variances $\sigma_w^2/\sqrt{n}$ and biases with variance $\sigma_b$, in the limit where the width of all hidden layers $n$ goes to infinity. Let $N1$ and $N2$ be such a neural networks with $L$ and $L+1$ infinitely wide hidden layers, respectively, and no bias. Then, the following holds: $\langle H(T) \rangle$ is smaller than or equal for $N2$ than for $N1$. It is strictly smaller if there exist pairs of points in $\sS$ with correlations less than $1$.
If the networks has sufficiently large bias ($\sigma_b>1$ is a sufficient condition), the result still holds. For smaller bias, the result holds only for sufficiently large number of layers $L$.}
\begin{proof}
The covariance matrix of the activations of the last hidden layer of a fully connected neural network in the limit of infinite width has been calculated and are given by the following recurrence relation for the covariance of the outputs\footnote{Note that we can speak interchangeably about the correlations at hidden layer $l-1$, or the correlations of the output at layer $l$, as the two are the same. This is a standard result, which we state, for example, in the proof of \cref{corolloray:more_0s}} at layer $l$ [[cite]]:

\begin{align}
\label{eq:covariance_recursion}
K^{l}\left(x, x^{\prime}\right) &=\sigma_{b}^{2}+\frac{\sigma_{w}^{2}}{2 \pi} \sqrt{K^{l-1}(x, x) K^{l-1}\left(x^{\prime}, x^{\prime}\right)}\left(\sin \theta_{x, x^{\prime}}^{l-1}+\left(\pi-\theta_{x, x^{\prime}}^{l-1}\right) \cos \theta_{x, x^{\prime}}^{l-1}\right) \\ 
\theta_{x, x^{\prime}}^{l} &=\cos ^{-1}\left(\frac{K^{l}\left(x, x^{\prime}\right)}{\sqrt{K^{l}(x, x) K^{l}\left(x^{\prime}, x^{\prime}\right)}}\right)
\end{align}

For the variance the equation simplifies to 
\begin{equation}
\label{eq:var_recursion}
    K^l(x,x) = \sigma_b^2+\frac{\sigma_{w}^{2}}{2}K^{l-1}(x, x)
\end{equation}

The correlation at layer $l$ is $\rho^l(x,x') = \frac{K^{l}\left(x, x^{\prime}\right)}{\sqrt{K^{l}(x, x) K^{l}\left(x^{\prime}, x^{\prime}\right)}}$, can be obtained using \cref{eq:covariance_recursion} above recursively as 

\begin{align*}
&\rho^l(x,x') = \frac{\sigma_{b}^{2}+\frac{\sigma_{w}^{2}}{2} \sqrt{K^{l-1}(x, x) K^{l-1}\left(x^{\prime}, x^{\prime}\right)}\rho^l_0(x,x')}{\sqrt{\sigma_b^2+\frac{\sigma_{w}^{2}}{2}K^{l-1}(x, x)}\sqrt{\sigma_b^2+\frac{\sigma_{w}^{2}}{2}K^{l-1}(x', x')}},
\end{align*}
where
\begin{equation*}
\rho^l_0(x,x') =\frac{1}{\pi}\left(\sin \cos ^{-1}\left(\rho^{l-1}(x,x')\right)+\left(\pi-\cos ^{-1}\left(\rho^{l-1}(x,x')\right)\right) \rho^{l-1}(x,x')\right).
\end{equation*}
For the case when $\sigma_b=0$, this simply becomes to $\rho^l(x,x')=\rho^l_0(x,x')$.

This function is $1$ when $\rho^{l-1}(x,x')=1$, and has a positive derivative less than $1$ for $0 \leq \rho^{l-1}(x,x')<1$, which implies that it is greater than $\rho^{l-1}(x,x')$. Therefore the correlation between any pair of points increases if $\rho^{l-1}(x,x')<1$ or stays the same if $\rho^{l-1}(x,x')=1$, as you add one hidden layer. By \cref{lemma:moments}, this then implies the theorem, for the case of no bias.

When $\sigma_b>0$, we can write, after some algebraic manipulation
\begin{align}
\label{eq:corr_ratio}
\frac{\rho^l(x,x')}{\rho^{l-1}(x,x')} &= \frac{1+\gamma \frac{\rho^l_0(x,x')}{\rho^{l-1}_0(x,x')}}{\sqrt{1+\gamma(\frac{1}{K^{l-1}(x, x)}+\frac{1}{K^{l-1}(x', x')})+\gamma^2}}\\
&\geq \frac{1+\gamma a}{\sqrt{1+\gamma\frac{2}{\sigma_b^2}+\gamma^2}}\\
&:=\phi(\gamma),
\end{align}
\noindent
where $\gamma=\frac{\sigma_{w}^{2}K^{l-1}(x,x')}{2 \sigma_b^2}>0$ and $a=\frac{\rho^l_0(x,x')}{\rho^{l-1}_0(x,x')}>1$, and the inequality follows from $K^{l}(x,x)\geq \sigma_b^2$ for any $x$, which follows from \cref{eq:covariance_recursion}.

We will study the behaviour of $\phi(\gamma)$ as a function of $\gamma$ for different values of $a$ and $\sigma_b^2$. For $\gamma=0$, this function equals $1$. If $\frac{1}{a}<\sigma_b^2<a$, its derivative is positive, and therefore is greater than $1$ for $\gamma>0$. If $a<\sigma_b^2$, there is a unique maximum for $\gamma>0$ at $\gamma=\frac{a\sigma_b^2-1}{\sigma_b^2-a}$. Because the function tends to $a$ as $\gamma \to \infty$, if it went below $1$, then at some $\gamma>0$ it should cross $1$ (by the intermediate value theorem), and by the mean value theorem, therefore it would have an extremum below one, and thus a local minimum, giving a contradiction. Thus the function is always greater than $1$ when $\sigma_b>\frac{1}{a}$.

When $\sigma_b<\frac{1}{a}$, $\phi(\gamma)$ can be less than $1$, thus the decreasing the correlations, for some values of $\gamma$. We know that if $\gamma\geq\frac{2\left(1-a\sigma_b^2\right)}{\left(a^2-1\right)\sigma_b^2}$ the function is greater than or equal to $1$. However, because of the inequality in \cref{eq:corr_ratio}, we can't say what happens when $\gamma$ is smaller than this.

From \cref{eq:var_recursion}, we know that if $\sigma_w^2\geq2$, $K^{l-1}(x, x)$ and $K^{l-1}(x', x')$ grow unboundedly as $l$ grows. By the above arguments applied to the expression in \cref{eq:corr_ratio} before taking the inequality, this implies that after some sufficiently large $l$, $\frac{\rho^l(x,x')}{\rho^{l-1}(x,x')}$ will be $>1$. 
If $\sigma_w^2<2$, $K^{l-1}(x, x)$ and $K^{l-1}(x', x')$ tend to $\frac{\sigma_b^2}{1-\sigma_w^2/2}$ which also becomes the fixed point of the equation for $K^l(x,x')$, \cref{eq:covariance_recursion}, implying that $\rho^l(x,x')\to 1$ as $l\to\infty$, so that the correlation must increase with layers after a sufficient number of layers, and thus the moments by \cref{lemma:moments}.

Finally, applying \cref{lemma:increased_entropy}, the theorem follows.




\end{proof}



\begin{lemma}\label{lemma:moments}
Consider two sets of $m$ input points to an $n$-dimensional perceptron without bias, $\mathcal{U}$ and $\mathcal{V}$ with data matrices $U$ and $V$ respectively (see \cref{def:data_matrix}). If for all $i,j=1,...,m$, $(VV^T)_{ij}/\sqrt{(VV^T)_{ii}(VV^T)_{jj}} \geq (UU^T)_{ij}/\sqrt{(UU^T)_{ii}(UU^T)_{jj}}$, then every moment $\langle t^q \rangle$ of the distribution $P(T=t)$ is greater for the set of points $\mathcal{V}$ than the set of points $\mathcal{U}$.
\end{lemma}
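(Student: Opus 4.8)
The plan is to write $T=\sum_{i=1}^m Y_i$ with $Y_i=\1(\dotp{w,x^{(i)}})$, expand the $q$-th raw moment into a sum of orthant probabilities, and show that each such probability is monotone in the pairwise correlations of the inputs. First I would use that $Y_i\in\{0,1\}$ is idempotent: for any tuple $(i_1,\dots,i_q)\in\{1,\dots,m\}^q$ the product $Y_{i_1}\cdots Y_{i_q}$ equals $\prod_{i\in A}Y_i$, where $A$ is the set of distinct indices occurring in the tuple. Grouping tuples by their support gives
\begin{equation*}
\E[T^q]=\sum_{(i_1,\dots,i_q)}\E\Big[\prod_{k}Y_{i_k}\Big]=\sum_{\emptyset\neq A\subseteq\{1,\dots,m\}}N(q,|A|)\,P_A,
\end{equation*}
where $P_A:=P\big(\dotp{w,x^{(i)}}>0 \text{ for all } i\in A\big)$ and $N(q,k)$ is the number of tuples in $\{1,\dots,m\}^q$ whose support has size $k$ (a number of surjections). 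These coefficients are nonnegative integers depending only on $q$ and $|A|$, not on which point set is used, so it suffices to prove $P_A^{\mathcal V}\ge P_A^{\mathcal U}$ for every $A$.

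Next I would reduce $P_A$ to an orthant probability of a jointly Gaussian vector. Two observations make this clean. First, $T$ depends only on the direction $w/\|w\|$; since this direction is uniform on the sphere for any spherically symmetric $w$ — in particular the same as for a standard Gaussian — I may assume $w\sim\mathcal N(0,I_n)$, so that $Z_i:=\dotp{w,x^{(i)}}$ is jointly Gaussian. Second, the sign $\1(\dotp{w,x})$ is unchanged under positive rescaling of $x$, so I may take every input to have unit norm; then the covariance matrix of $Z$ equals its correlation matrix, with unit diagonal. Consequently, for each $A$ the vectors $(Z_i)_{i\in A}$ under $\mathcal U$ and under $\mathcal V$ are centred Gaussians with identical (unit) variances and with off-diagonal covariances equal to exactly the normalised correlations $(UU^T)_{ij}/\sqrt{(UU^T)_{ii}(UU^T)_{jj}}$ and $(VV^T)_{ij}/\sqrt{(VV^T)_{ii}(VV^T)_{jj}}$ compared in the hypothesis.

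The crux is then a Gaussian comparison inequality. I would invoke Slepian's inequality: for two centred Gaussian vectors with equal variances, if every off-diagonal covariance of one dominates that of the other, then the upper-orthant probability $P(Z_i>0\ \forall i)$ is at least as large for the vector with the larger covariances. Applying this on each subset $A$, the hypothesis of the lemma gives $P_A^{\mathcal V}\ge P_A^{\mathcal U}$ for all $A$. Summing over $A$ against the common nonnegative coefficients $N(q,|A|)$ then yields $\E[T_{\mathcal V}^q]-\E[T_{\mathcal U}^q]=\sum_A N(q,|A|)\big(P_A^{\mathcal V}-P_A^{\mathcal U}\big)\ge 0$, which is the claim. (Note that for $q=1$ both sides equal $m/2$, consistent with the weak inequality.)

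The main obstacle I anticipate is the clean invocation of Slepian's inequality: verifying its equal-variance hypothesis (handled by the unit-norm rescaling) and the boundary case in which some correlation equals $1$, i.e. two inputs share a direction, where the Gaussian becomes degenerate. Here I would either restrict to distinct directions or pass to a limit, since Slepian's inequality extends to positive semidefinite covariances by approximation. Once the monotonicity of the orthant probabilities is in hand, the combinatorial bookkeeping of the moment expansion is routine.
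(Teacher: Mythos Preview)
Your proposal is correct and follows the same overall strategy as the paper: expand $T^q$ as a sum of products of indicators, reduce each term to a Gaussian orthant probability, and then compare these termwise using monotonicity in the pairwise correlations. The one genuine difference is in how the comparison step is justified. You invoke Slepian's inequality directly as a known result (and handle the equal-variance hypothesis cleanly via the unit-norm rescaling), whereas the paper does not cite Slepian but instead builds the needed monotonicity from scratch through a chain of auxiliary lemmas: a 2D orthant computation, a conditional version in $n$ dimensions, and a path argument through the cone of positive-definite matrices to move one covariance entry at a time (\cref{lemma:bivariate_correlation}--\cref{corolloray:more_0s}). Your route is shorter and more standard; the paper's is more self-contained. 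Your more careful combinatorics (grouping tuples by support) and your remark on degenerate covariances are refinements the paper omits.
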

\begin{proof}[Proof of \cref{lemma:moments}]
We write $T$ as: 
$$
T_s=\sum_{i=1}^{m} \1({\langle w,s_i\rangle})\\
$$
$$
T_u=\sum_{i=1}^{m} \1({\langle w,u_i\rangle})
$$
The moments of the distribution $P(T_s=t)$ can be calculated by the following integral:
$$
\langle t^q \rangle_{\sS} = \int_{S} \sum_{i=0}^{m} \dots \sum_{q=0}^{m} \1({\langle w,s_i\rangle}) \times \dots \times \1({\langle w,s_q\rangle}) \; P(S) dS
$$

Where $S=(S_1,\dots,S_q)$. Taking the sum outside the integral,
$$
=\sum_{i \dots q}\int \1(\langle w,s_i\rangle) \times \dots \times \1(\langle w,s_q\rangle) P(S) dS = \sum_{i \dots q} P(\langle w,s_i\rangle>0,\dots,\langle w,s_q\rangle>0)
$$

The distribution for $\mathcal{U}$ is of the equivalent form. From corolloray~\cref{corolloray:more_0s}, we have that $P(\langle w,s_i\rangle>0,\dots,\langle w,s_q\rangle>0) \leq P(\langle w,u_i\rangle>0,\dots,\langle w,u_q\rangle>0)$. Thus we have \cref{equation:increasing_moments} for all $q$.

\begin{equation}\label{equation:increasing_moments}
    \langle t^q \rangle_{\sS}<\langle t^q \rangle_{\mathcal{U}}
\end{equation}
\end{proof}

\bigskip

\begin{lemma}\label{lemma:bivariate_correlation}
Consider an $2$-dimensional Gaussian random variable with mean $\mu$ and covariance $\Sigma$. If the correlation $\Sigma_{ij}/\sqrt{\Sigma_{ii}\Sigma_{jj}}$ increases, then
$$P(x_i > 0, x_j > 0)$$
increases

\end{lemma}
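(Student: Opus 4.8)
The plan is to reduce the statement to a one-parameter monotonicity in the correlation coefficient and then invoke Plackett's differentiation identity. First I would standardize the pair: writing $z_k = (x_k - \mu_k)/\sqrt{\Sigma_{kk}}$, the event $\{x_i > 0,\, x_j > 0\}$ becomes $\{z_i > c_i,\, z_j > c_j\}$ with thresholds $c_k = -\mu_k/\sqrt{\Sigma_{kk}}$, and the standardized pair $(z_i, z_j)$ is a standard bivariate normal whose only remaining parameter is $\rho = \Sigma_{ij}/\sqrt{\Sigma_{ii}\Sigma_{jj}}$, which is exactly the correlation in the statement. Thus, interpreting ``the correlation increases'' as varying $\rho$ with the standardized thresholds $c_i, c_j$ held fixed, the claim reduces to showing that the upper-orthant probability $Q(\rho) := P(z_i > c_i,\, z_j > c_j)$ is increasing in $\rho$.

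Next I would differentiate $Q$ with respect to $\rho$. The key fact is Plackett's identity: the standard bivariate normal density $\phi_2(u,v;\rho)$ satisfies the heat-type relation $\partial_\rho \phi_2 = \partial_u\partial_v \phi_2$, which one checks by direct (if tedious) differentiation of the closed-form Gaussian density. Granting this, differentiating under the integral sign and applying the fundamental theorem of calculus twice gives $Q'(\rho) = \int_{c_i}^\infty\!\int_{c_j}^\infty \partial_u\partial_v\phi_2 \, du\, dv = \phi_2(c_i, c_j; \rho) \geq 0$, where the boundary contributions at $+\infty$ vanish by Gaussian tail decay. Since the bivariate density is strictly positive, in fact $Q'(\rho) > 0$, establishing strict monotonicity.

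In the case relevant to \cref{lemma:moments} and \cref{theorem:entropy_layers} the activations have zero mean (there is no bias, or the biases are centred), so $c_i = c_j = 0$ and the argument collapses to Sheppard's closed form $Q(\rho) = \tfrac14 + \tfrac{1}{2\pi}\arcsin\rho$, which is visibly increasing and can be cited directly in place of the general derivative computation. This zero-threshold specialization is all that the downstream lemmas actually require, since the standardized thresholds are identically $0$ whenever the marginals are centred.

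The main obstacle is the justification surrounding Plackett's identity: verifying the relation $\partial_\rho\phi_2 = \partial_u\partial_v\phi_2$ cleanly, and rigorously justifying both the differentiation under the integral and the vanishing of the boundary terms as the variables tend to $+\infty$. Both are routine given the rapid decay of the Gaussian density, but they are where the real content of the monotonicity lives; the standardization step is purely bookkeeping.
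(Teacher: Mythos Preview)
Your argument via Plackett's identity is correct and is the classical analytic route to this monotonicity (it is essentially the bivariate case of Slepian's inequality). The paper takes a different, geometric route: after standardizing to unit variances it applies the whitening transformation (rotate to the eigenbasis of $\Sigma$, then rescale each axis by $1/\sqrt{\lambda_\pm}$) so that the Gaussian becomes isotropic and the positive quadrant becomes a cone with opening angle $\alpha$ satisfying $\tan\alpha = \sqrt{(1+\rho)/(1-\rho)}$; it then argues, via a chord-length computation in polar coordinates centred at the transformed mean, that the arc of each circle lying inside the cone grows as $\alpha$ increases. Your approach is shorter, more standard, and delivers the explicit derivative $Q'(\rho)=\phi_2(c_i,c_j;\rho)>0$; the paper's approach is self-contained and pictorial but has to track both the cone angle and the moving transformed mean simultaneously.

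One correction to your closing remark: the zero-mean specialization is \emph{not} all the downstream chain requires. In \cref{lemma:multivariate_correlation} the present lemma is applied to the \emph{conditional} distribution $P(x_i,x_j\mid \hat{x}_{i,j})$, which the paper notes has a generally nonzero mean, so the standardized thresholds $c_i,c_j$ are nonzero there. Your Plackett argument already handles arbitrary thresholds, so the proof itself stands; only the claim that Sheppard's formula suffices for the downstream lemmas should be dropped.
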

\begin{proof}

We can write the non-centered orthant probability as a Gaussian integral
$$P(x_i > 0, x_j > 0) = \int_{\mathbb{R}_{\geq 0}} e^{-\frac{1}{2}(x-\mu)^T\Sigma^{-1}(x-\mu)} dx$$

Without loss of generality, we consider $\Sigma_{ii}=\Sigma_{jj}=1$. Otherwise, we can rescale the variables $x$ and obtain a new mean vector.

\begin{figure}[H]
\centering
    \includegraphics[width=0.7\textwidth]{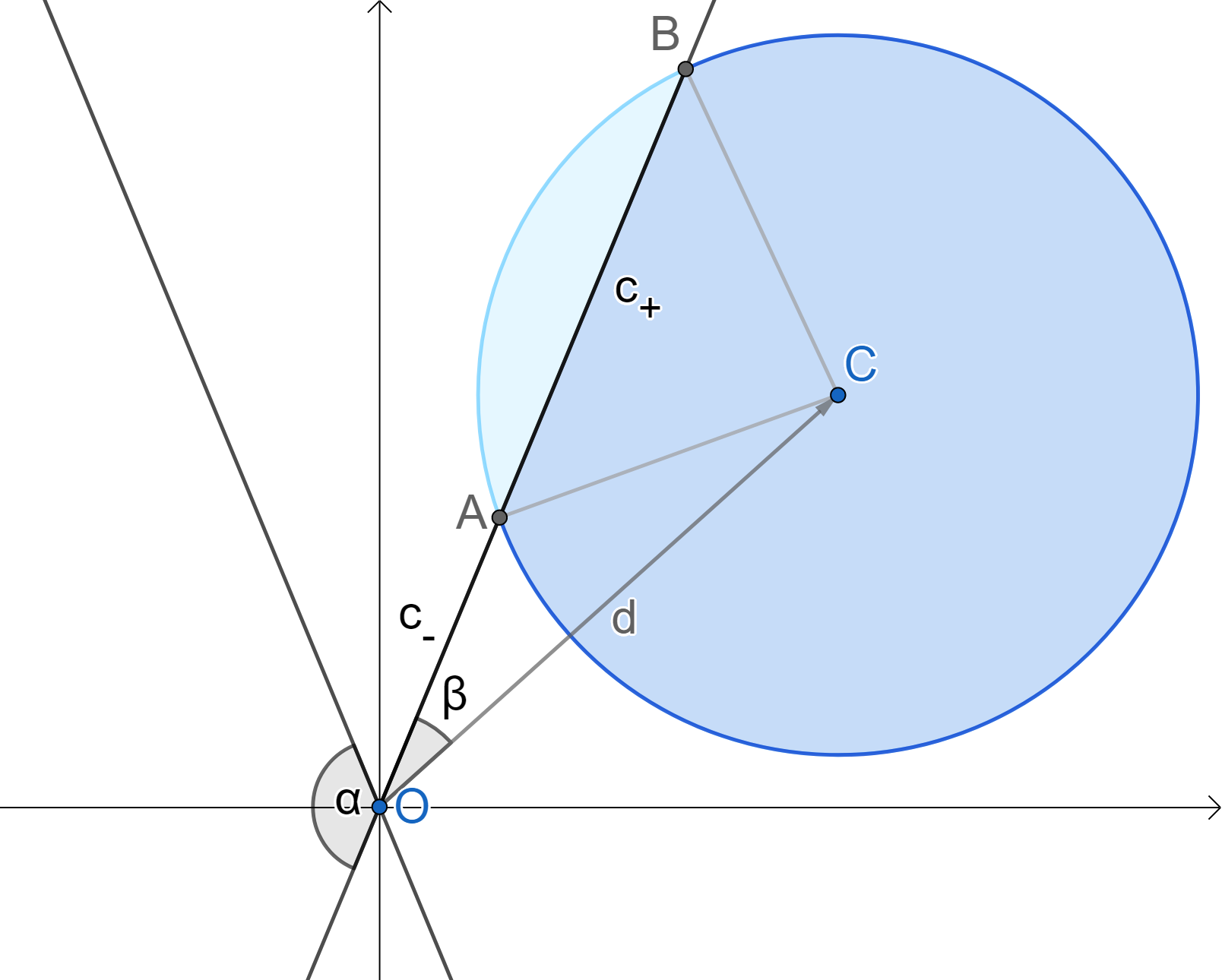}
    \caption{Transformed 2D Gaussian for proof in~\cref{lemma:bivariate_correlation}.}
    \label{fig:2d_Gaussians}
\end{figure}

The covariance matrix $\Sigma$ thus has eigenvalues $\lambda_+ = 1+\Sigma_{ij}$ and $\lambda_- = 1-\Sigma_{ij}$ with corresponding eigenvectors $(1,1)$ and $(1,-1)$. We can rotate the axis so that $(1,1)$ becomes $(1,0)$. We can then rescale the $x$ axis by $1/\sqrt{\lambda_+}$ and the $y$ axis by $1/\sqrt{\lambda_-}$. The positive orthant becomes a cone $\mathcal{C}$ centered around the origin and with opening angle $\alpha$ given by
$$\tan{\alpha} = \sqrt{\frac{\lambda_+}{\lambda_-}},$$
which increases when $\Sigma_{ij}$ increases. The integral in polar coordinates becomes
$$\int_\mathcal{C} e^{-\frac{r^2}{2}} r d\theta dr.$$

Therefore, all that's left to show is that the range of $\theta$ for any $r$ increases when $\alpha$ increases. See~\cref{fig:2d_Gaussians} for the illustration. Call $\gamma$ the angle between one boundary of the cone $\mathcal{C}$ and the position vector of the center of the Gaussian in the transformed coordinates. We can find the length of the chord between the two points of intersection between the circle and the boundary of the cone as the difference between the distances $c_-$, $c_+$, of the segments OA and OB, respectively. Using the cosine angle formula, we find
$c_{\pm} = d \cos\gamma \pm \sqrt{r^2-d^2\sin^{2}{\gamma}}$, 
and the chord length is $\sqrt{r^2-d^2\sin^{2}{\gamma}}$, which decreases as $\gamma$ increases. Furthermore, $\gamma$ increases as $\alpha$ increases. Using the same argument for the other boundary of the cone, concludes the proof.

\end{proof}

The following lemma shows that for a vector of $n$ Gaussian random variables, the probability of two variables being simultaneously greater than $0$, given that all other any fixed signs, increases if their correlation increases.

\bigskip

\begin{lemma}\label{lemma:multivariate_correlation}
Consider an $n$-dimensional Gaussian random variable with mean $0$ and covariance $\Sigma$, $x \sim \mathcal{N}(0,\Sigma)$. Consider, for any $\sigma \in \{-1,1\}^{n-2}$, the following probability
$$P^{ij}_{00}:=P(x_i > 0, x_j > 0 | \forall k\notin\{i,j\} \sigma_k x_k > 0)$$

If $\Sigma_{ij}/\sqrt{\Sigma_{ii}\Sigma_{jj}}$ increases, then $P^{ij}_{00}$ increases.
\end{lemma}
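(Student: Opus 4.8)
The plan is to reduce the statement to the bivariate case already established in \cref{lemma:bivariate_correlation} by conditioning on the $n-2$ coordinates whose signs are being prescribed. Write $c=\{1,\dots,n\}\setminus\{i,j\}$ for the index set of those coordinates, let $x_c$ denote the corresponding subvector, and let $A=\{z\in\mathbb{R}^{n-2}:\sigma_k z_k>0\ \forall k\in c\}$ be the event appearing in the conditioning. The first observation is that
\begin{align*}
P^{ij}_{00} = \frac{P(x_i>0,\,x_j>0,\,x_c\in A)}{P(x_c\in A)},
\end{align*}
and the denominator depends only on the block $\Sigma_{c,c}$, which does not contain the entry $\Sigma_{ij}$; hence it is unaffected as we vary $\Sigma_{ij}$. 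It therefore suffices to show that the numerator increases with $\Sigma_{ij}$, equivalently with the marginal correlation $\Sigma_{ij}/\sqrt{\Sigma_{ii}\Sigma_{jj}}$, since $\Sigma_{ii}$ and $\Sigma_{jj}$ are held fixed.

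Next I would condition on $x_c=z$ and integrate, writing
\begin{align*}
P(x_i>0,\,x_j>0,\,x_c\in A) = \int_{A} P\big(x_i>0,\,x_j>0 \,\big|\, x_c=z\big)\, p_c(z)\, dz,
\end{align*}
where $p_c$ is the density of $x_c$, again a function of $\Sigma_{c,c}$ alone and so independent of $\Sigma_{ij}$. By the standard Gaussian conditioning formulas, $(x_i,x_j)\mid x_c=z$ is bivariate Gaussian with mean $\mu_z=\Sigma_{\{i,j\},c}\Sigma_{c,c}^{-1}z$ and covariance $\Sigma_{\{i,j\},\{i,j\}}-\Sigma_{\{i,j\},c}\Sigma_{c,c}^{-1}\Sigma_{c,\{i,j\}}$. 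The crucial structural point is that the only place the entry $\Sigma_{ij}$ occurs in these expressions is the off-diagonal of $\Sigma_{\{i,j\},\{i,j\}}$: the cross-block $\Sigma_{\{i,j\},c}$ and the block $\Sigma_{c,c}$ contain none of it. Consequently the conditional mean $\mu_z$ and the two conditional variances are all independent of $\Sigma_{ij}$, while the conditional covariance between $x_i$ and $x_j$ equals $\Sigma_{ij}$ minus a term independent of $\Sigma_{ij}$. Thus increasing $\Sigma_{ij}$ strictly increases the conditional correlation of $(x_i,x_j)$ while leaving their conditional means and variances fixed.

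Finally I would apply \cref{lemma:bivariate_correlation} — which is stated precisely for a non-centered bivariate Gaussian — to the conditional law for each fixed $z$: raising the conditional correlation raises $P(x_i>0,x_j>0\mid x_c=z)$. Since the weight $p_c(z)$ and the domain $A$ do not move with $\Sigma_{ij}$, the integral above increases, and dividing by the unchanged denominator gives the claim. The main thing to get right is the bookkeeping of the previous paragraph, namely the verification via the Schur-complement formula that $\Sigma_{ij}$ enters the conditional distribution only through the off-diagonal conditional covariance and nowhere in the conditional mean or the conditional variances; once this is pinned down, the monotonicity is inherited pointwise in $z$ from \cref{lemma:bivariate_correlation} and survives integration. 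A minor point to record is that the two values of $\Sigma_{ij}$ being compared must both keep $\Sigma$ positive definite, which holds automatically since we only ever compare admissible covariance matrices.
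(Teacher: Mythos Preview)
Your proposal is correct and follows essentially the same approach as the paper: condition on the remaining $n-2$ coordinates, use the Schur-complement formula to see that $\Sigma_{ij}$ enters the conditional law of $(x_i,x_j)$ only through the off-diagonal conditional covariance (leaving the conditional mean and variances unchanged), and then invoke \cref{lemma:bivariate_correlation} pointwise before integrating. Your write-up is in fact more explicit than the paper's about why the marginal density $p_c$ and the conditioning event do not move with $\Sigma_{ij}$, but the underlying argument is the same.
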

\begin{proof}

We can write $P^{ij}_{00} = \mathbf{E}[P(x_i > 0, x_j > 0 | \hat{x}_{i,j})]$, where $\hat{x}_{i,j}$ is the vector of $x$ without the $i$th and $j$th elements, and the expectation is over the distribution of $\hat{x}_{i,j}$ conditioned on the condition $\forall k\notin{i,j} \sigma_k x_k > 0$.

The conditional distribution $P(x_i, x_j | \hat{x}_{i,j})$ is also a Gaussian with a, generally non-zero mean $\mu$, and a covariance matrix given by 
$$\bar{\Sigma} = \begin{pmatrix}
\Sigma_{ii}&\Sigma_{ji}\\
\Sigma_{ij}&\Sigma_{jj}\\
\end{pmatrix} - \hat{\Sigma},$$
where $\hat{\Sigma}$ is independent of $\Sigma_{ij}$. This means that increasing the $\Sigma_{ji}$ (keeping $\Sigma_{ii}$ and $\Sigma_{jj}$ fixed) will increase the correlation in $\bar{\Sigma}$. Therefore, by \cref{lemma:bivariate_correlation}, $P(x_i > 0, x_j > 0 | \hat{x}_{i,j})$ increases, and thus $P^{ij}_{00}$ increases.
\end{proof}

\bigskip

\begin{corollary}\label{corollary:more_0s_small}
An immediate consequence of \cref{lemma:multivariate_correlation} is that $P(\forall i, x_i>0) = P(x_i > 0, x_j > 0 | \forall k\notin\{i,j\} x_k > 0)P(\forall k\notin\{i,j\} x_k > 0)$ increases when $\Sigma_{ij}/\sqrt{\Sigma_{ii}\Sigma_{jj}}$ increases, as $P(\forall k\notin\{i,j\} x_k > 0)$ stays constant.
\end{corollary}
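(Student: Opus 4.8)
The plan is to read the result directly off the exact factorisation
\[
P(\forall i,\, x_i>0) = P\bigl(x_i>0,\, x_j>0 \mid \forall k\notin\{i,j\}\; x_k>0\bigr)\cdot P\bigl(\forall k\notin\{i,j\}\; x_k>0\bigr),
\]
which is nothing more than the definition of conditional probability applied to the positive orthant, split according to the pair $\{i,j\}$ and its complement. The only standing requirement is that the conditioning event have positive probability, so that the conditional factor is well defined.

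First I would identify the first factor with the quantity $P^{ij}_{00}$ of \cref{lemma:multivariate_correlation}, specialised to the fixed signs $\sigma_k=+1$ for every $k\notin\{i,j\}$. \cref{lemma:multivariate_correlation} then immediately yields that this factor increases whenever the correlation $\Sigma_{ij}/\sqrt{\Sigma_{ii}\Sigma_{jj}}$ increases, so this half of the argument is free.

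The one step that requires care is showing that the second factor, $P(\forall k\notin\{i,j\}\; x_k>0)$, stays \emph{constant} as $\Sigma_{ij}$ varies. For this I would invoke the standard fact that the marginal law of the subvector $(x_k)_{k\notin\{i,j\}}$ of a centred Gaussian is itself Gaussian with covariance equal to the principal submatrix of $\Sigma$ indexed by $\{1,\dots,n\}\setminus\{i,j\}$. Since $i$ and $j$ are excluded from that index set, the submatrix omits the entry $\Sigma_{ij}$ entirely; hence the marginal law, and with it the orthant probability computed from it, does not depend on $\Sigma_{ij}$. Combining the three observations, the left-hand side is a product of a factor that increases in $\Sigma_{ij}/\sqrt{\Sigma_{ii}\Sigma_{jj}}$ with a strictly positive constant, and therefore increases as claimed.

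The main obstacle here is bookkeeping rather than anything conceptual: one must keep $\Sigma_{ii}$ and $\Sigma_{jj}$ fixed so that varying $\Sigma_{ij}$ is equivalent to varying the correlation, and one must check that $\Sigma$ remains a valid positive semidefinite covariance over the range considered so that every probability above is meaningful. Both points are routine, and the Gaussian marginalisation fact of the previous paragraph is the only substantive ingredient.
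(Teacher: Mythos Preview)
Your proposal is correct and follows exactly the approach the paper intends: the corollary's proof is essentially embedded in its own statement, and you have simply spelled out the two ingredients (the conditional-probability factorisation together with \cref{lemma:multivariate_correlation} for the first factor, and the Gaussian-marginal argument for the constancy of the second) that the paper leaves implicit. There is nothing to add.
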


\bigskip

\begin{lemma}\label{lemma:multivariate_monotonicity}
In the same setting as \cref{lemma:multivariate_correlation}, for covariance matrices $\Sigma$ and $\Sigma'$ of full rank, the following holds: for every $i,j$, $\Sigma_{ij} = \Sigma'_{ij}$ implies $P_{K,00}^{ij} = P_{K'.00}^{ij}$ and $\Sigma_{ij}/\sqrt{\Sigma_{ii}\Sigma_{jj}} > \Sigma'_{ij}/\sqrt{\Sigma'_{ii}\Sigma'_{jj}}$ implies $P_{K.00}^{ij} > P_{K',00}^{ij}$, where $P_K$ is the probability measure corresponding to covariance $K$.
\end{lemma}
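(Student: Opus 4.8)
The plan is to read this statement as the two-sided, scale-invariant ``comparison'' form of \cref{lemma:multivariate_correlation}, and to reduce it to two ingredients: invariance of the relevant probabilities under positive rescaling of coordinates, and the single-entry strict monotonicity already contained in \cref{lemma:multivariate_correlation,lemma:bivariate_correlation}. Throughout I would write $A=\{i,j\}$ and $B=\{1,\dots,n\}\setminus\{i,j\}$ for the two blocks, and express the covariance in block form with diagonal blocks $\Sigma_{AA},\Sigma_{BB}$ and off-diagonal block $\Sigma_{AB}$. The intended reading of the hypothesis is that $\Sigma$ and $\Sigma'$ have the same correlations on all pairs other than $(i,j)$, so that the comparison is genuinely about the single correlation $\Sigma_{ij}/\sqrt{\Sigma_{ii}\Sigma_{jj}}$; this is exactly the quantity relevant when comparing covariance matrices across layers, whose variances differ.

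First I would establish scale invariance. Every event entering $P^{ij}_{00}$ — namely $x_i>0$, $x_j>0$, and each conditioning event $\sigma_k x_k>0$ — is invariant under $x\mapsto Dx$ for any positive diagonal $D$, since multiplying a coordinate by a positive number preserves its sign. As $x\sim\mathcal{N}(0,\Sigma)$ implies $Dx\sim\mathcal{N}(0,D\Sigma D)$, the probability $P^{ij}_{00}$ is unchanged when $\Sigma$ is replaced by $D\Sigma D$; taking $D=\mathrm{diag}(\Sigma_{kk}^{-1/2})$ shows that $P^{ij}_{00}$ depends on $\Sigma$ only through its correlation matrix. This already yields the equality claim: if $\Sigma$ and $\Sigma'$ have equal correlation at $(i,j)$ with the remaining correlations held equal, then their correlation matrices coincide and hence $P^{ij}_{K,00}=P^{ij}_{K',00}$.

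For the strict inequality I would pass to the correlation-normalised matrices, so that $\Sigma_{ii}=\Sigma_{jj}=1$ and the $(i,j)$ entry is the correlation itself. Conditioning on $x_B=b$, the pair $(x_i,x_j)$ is bivariate Gaussian with mean $\Sigma_{AB}\Sigma_{BB}^{-1}b$ and covariance $\bar\Sigma=\Sigma_{AA}-\Sigma_{AB}\Sigma_{BB}^{-1}\Sigma_{BA}$, where full rank guarantees that $\Sigma_{BB}$ is invertible. The key point, exactly as in \cref{lemma:multivariate_correlation}, is that the correction term $\Sigma_{AB}\Sigma_{BB}^{-1}\Sigma_{BA}$ and the conditional mean involve only the blocks $\Sigma_{AB}$ and $\Sigma_{BB}$, none of whose entries is $\Sigma_{ij}$; hence increasing $\Sigma_{ij}$ raises the off-diagonal of $\bar\Sigma$ while leaving its diagonal and the conditional mean untouched, i.e.\ it strictly increases the conditional correlation. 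By \cref{lemma:bivariate_correlation}, the conditional orthant probability $P(x_i>0,x_j>0\mid x_B=b)$ is then strictly larger for the matrix with the larger $(i,j)$ correlation, for every fixed $b$.

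Finally I would take expectations. Because $\Sigma$ and $\Sigma'$ agree on $\Sigma_{BB}$ and on the conditioning signs, the law of $x_B$ restricted to $\{\forall k\in B:\sigma_k x_k>0\}$ is identical for both, so $P^{ij}_{00}$ is in each case the expectation of the conditional orthant probability over one and the same measure on $b$. Since the integrand is pointwise strictly larger for the matrix with the larger $(i,j)$ correlation, the two expectations are strictly ordered and $P^{ij}_{K,00}>P^{ij}_{K',00}$ follows. I expect the main obstacle to be the clean justification that the Schur-complement correction $\Sigma_{AB}\Sigma_{BB}^{-1}\Sigma_{BA}$ and the conditional mean are genuinely independent of $\Sigma_{ij}$, together with checking that the strictness from \cref{lemma:bivariate_correlation} survives the averaging — which it does, since strict monotonicity holds for almost every value of the conditioning vector.
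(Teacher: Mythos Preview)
Your proof is correct under the interpretation you state, and it takes a genuinely different route from the paper. The paper argues via the topology of the positive-definite cone: it observes that this cone is open and convex, hence path-connected, and then traverses a path from $\Sigma'$ to $\Sigma$ broken into coordinate-wise steps small enough to stay inside the cone, invoking \cref{lemma:multivariate_correlation} at each step. Your argument instead isolates the single structural fact that does all the work, namely scale invariance: since every event defining $P^{ij}_{00}$ depends only on signs of coordinates, the probability is unchanged under $\Sigma\mapsto D\Sigma D$ for positive diagonal $D$, and therefore depends only on the correlation matrix. After normalising both $\Sigma$ and $\Sigma'$ to have unit diagonals, they differ only in the $(i,j)$ entry, and the comparison is exactly the single-entry perturbation already handled by \cref{lemma:multivariate_correlation}.

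Your approach is shorter and more transparent for this statement; the paper's path argument is heavier machinery that would pay off if one needed to compare matrices along a non-trivial monotone curve in the cone, but here it is not needed once scale invariance is noted. One small redundancy: your last two paragraphs essentially re-derive the Schur-complement argument of \cref{lemma:multivariate_correlation}; after the scale-invariance reduction you could simply cite that lemma directly, since the normalised matrices now differ in exactly the way its hypothesis requires.
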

\begin{proof}

The set $\sS$ of all symmetric positive definite matrices is an open convex subset of the set of all matrices. Therefore, it is path-connected. We can traverse the path between $\Sigma$ and $\Sigma'$ through a sequence of points such that the distance between point $x_i$ and $x_{i+1}$ is smaller than the radius of a ball centered around $x_i$ and contained in the set $\sS$. Within this ball, one can move between $x_i$ and $x_{i+1}$ in coordinate steps that only change one element of the matrix. Therefore we can apply \cref{lemma:multivariate_correlation} to each step of this path, which implies the theorem.
\end{proof}

\bigskip

\begin{corollary}\label{corolloray:more_0s}
Consider any set of $m$ points $x^(i) \in \sR^n$ with elements $x^(i)_j$. Let $X$ be the $m\times n$ data matrix with $X_{ij}=x^(i)_j$. For a weight vector $w\in \sR^n$ let $y=Xw$ be the vector of real-valued outputs of an perceptron, with weights sampled from an isotropic Gaussian $\mathcal{N}(0,I_n)$. If the correlation between any two inputs increases, then $P(T=0) = P([\sum_s \1(s)]=0)$ increases
\end{corollary}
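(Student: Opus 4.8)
The plan is to recognise the real-valued output vector as a centred Gaussian and reduce the event $\{T=0\}$ to a positive-orthant probability, so that the monotonicity lemmas proved just above apply almost directly. Since $w\sim\mathcal{N}(0,I_n)$ and $y=Xw$ is a linear image of $w$, the vector $y$ is a centred Gaussian in $\mathbb{R}^m$ with covariance $\Sigma = XX^T$, that is $\Sigma_{ij}=\langle x^{(i)},x^{(j)}\rangle$. Its correlation $\Sigma_{ij}/\sqrt{\Sigma_{ii}\Sigma_{jj}}$ is exactly the (cosine) correlation between the inputs $x^{(i)}$ and $x^{(j)}$, so ``increasing the correlation between two inputs'' is literally increasing the corresponding off-diagonal correlation of the Gaussian $y$.

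Next I would rewrite the event. By definition $T=\sum_s \1(s)$ counts the strictly positive components of $y$, so $\{T=0\}=\{\forall i,\ y_i\le 0\}$, which coincides almost surely with $\{\forall i,\ y_i<0\}$ when $\Sigma$ is nondegenerate. Because $y$ and $-y$ have the same (centred Gaussian) law, $P(\forall i,\ y_i<0)=P(\forall i,\ y_i>0)$, and therefore $P(T=0)$ equals the positive-orthant probability $P(\forall i,\ y_i>0)$. I would also note that $\{T=0\}$ depends only on the signs of the $y_i$, hence $P(T=0)$ is invariant under rescaling each coordinate by a positive factor; equivalently it depends on $\Sigma$ only through its correlation matrix. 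This is what justifies phrasing the statement purely in terms of correlations.

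With these reductions in place, the conclusion follows by assembling the preceding results. \Cref{corollary:more_0s_small} gives that increasing a single off-diagonal correlation (with variances held fixed) increases $P(\forall i,\ y_i>0)$, and \cref{lemma:multivariate_monotonicity} upgrades this to a comparison of two arbitrary correlation matrices by moving along a path in the open convex cone of positive-definite matrices, changing one entry at a time and applying \cref{lemma:multivariate_correlation} at each coordinate step. Combining this with the orthant identity of the previous paragraph yields: if every pairwise input correlation weakly increases, then $P(T=0)$ increases, as claimed.

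The genuinely analytic content is already carried by \cref{lemma:bivariate_correlation} through \cref{lemma:multivariate_monotonicity}; the only residual subtlety in the corollary is degeneracy of $\Sigma$. When $m>n$ the Gram matrix $\Sigma=XX^T$ has rank at most $n<m$ and is not of full rank, so \cref{lemma:multivariate_monotonicity} (which assumes full-rank covariances) does not apply verbatim. I expect this to be the main (though minor) obstacle, and I would dispose of it by a continuity argument: replace $\Sigma$ by $\Sigma+\epsilon I$, which is positive definite and preserves the ordering of the off-diagonal correlations, apply the monotonicity there, and let $\epsilon\to 0$ using continuity of the orthant probability in the covariance. The symmetry reduction and the identification of the correlation matrix are routine.
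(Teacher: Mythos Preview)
Your proposal is correct and follows essentially the same route as the paper: identify $y=Xw$ as a centred Gaussian with covariance $\Sigma=XX^T$, so that the input correlations are exactly the Gaussian correlations, and then invoke \cref{corollary:more_0s_small} together with \cref{lemma:multivariate_monotonicity} to obtain monotonicity of the orthant probability. Your write-up is in fact more careful than the paper's, since you make explicit the symmetry step $P(T=0)=P(\forall i,\ y_i>0)$ and flag the rank-deficiency issue when $m>n$ (handled by an $\epsilon I$ perturbation and continuity), both of which the paper passes over in silence.
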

\begin{proof}

The vector $y=\sum_i X_{i\cdot} w_i$ is a sum of Gaussian vectors (with covariances of rank $1$), and therefore is itself Gaussian, with a covariance given by $\Sigma = XX^T$. If the correlation product between any two inputs increases, then applying \cref{lemma:multivariate_monotonicity} and \cref{corollary:more_0s_small} at each step, the theorem follows.
\end{proof}

\begin{lemma}\label{lemma:increased_entropy}
If the uncentered moments of the distribution $P(T=t)$ increase, except for its mean (which is $2^{n-1}$), then $\langle H(t) \rangle$ increases.
\end{lemma}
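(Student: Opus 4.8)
The plan is to reduce the expected entropy to a weighted sum of the \emph{even central moments} of $P(T=t)$ and read off the direction of the change from the signs of the coefficients. Writing $u=(t-2^{n-1})/2^{n}$ and centering the binary entropy at $p=\tfrac12$, one has the expansion
\begin{equation*}
H_b\!\left(\tfrac12+u\right)=1-\frac{1}{2\ln 2}\sum_{k\geq 1}\frac{(2u)^{2k}}{k(2k-1)},
\end{equation*}
which converges on the whole support $u\in[-\tfrac12,\tfrac12]$ (at the endpoints the series sums to $2\ln 2$, recovering $H_b=0$). Taking expectations over $T$ and using $u^{2k}=2^{-2nk}(t-2^{n-1})^{2k}$ yields the core identity
\begin{equation*}
\langle H(T)\rangle=1-\frac{1}{2\ln 2}\sum_{k\geq 1}\frac{2^{2k}}{k(2k-1)\,2^{2nk}}\,\mu_{2k},\qquad \mu_{2k}:=\big\langle (T-2^{n-1})^{2k}\big\rangle .
\end{equation*}
Every coefficient multiplying $\mu_{2k}$ is strictly negative, so $\langle H(T)\rangle$ is a strictly monotone function of each even central moment. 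This identity is the heart of the argument; everything else is converting the hypothesis, which is stated for the uncentered moments, into a controlled statement about the $\mu_{2k}$.

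First I would pin the mean: for a weight law symmetric under $w\mapsto -w$ each data point satisfies $\langle w,x\rangle>0$ with probability $\tfrac12$, so the compared distributions share the mean $2^{n-1}$, and the same symmetry gives $P(T=t)=P(T=2^n-t)$, killing all odd central moments. With the mean fixed the first two raw moments are frozen, so the increments obey $\Delta\mu_{2k}=\sum_{i=2}^{2k}\binom{2k}{i}(-2^{n-1})^{2k-i}\,\Delta\langle T^i\rangle$. Substituting this into the core identity and collecting the coefficient $A_i$ of each $\Delta\langle T^i\rangle$ reduces the whole question to the sign of these net coefficients: once every $A_i$ is shown to be one-signed, the hypothesis that all uncentered increments $\Delta\langle T^i\rangle\ (i\geq2)$ are nonnegative forces a definite sign on $\Delta\langle H(T)\rangle$.

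The main obstacle is exactly this sign bookkeeping. Passing from uncentered to central moments introduces alternating signs, so ``all uncentered moments grow'' does not force ``all central moments grow'' term by term (already $\mu_4$ can move either way), and one cannot simply invoke monotonicity in each $\mu_{2k}$ in isolation. I would overcome this by resumming rather than comparing term by term: write $\langle H(T)\rangle$ as a single functional of the distribution and exhibit the one-signed kernel against which the raw moments are integrated, which is guaranteed by the concavity of $H_b$ together with the fact that \cref{lemma:moments} supplies a \emph{mean-preserving} comparison. Because the coefficients in the core identity are all negative, this argument makes $\langle H(T)\rangle$ move \emph{opposite} to the moments; the direction actually invoked in \cref{theorem:entropy_layers} is that growing moments \emph{lower} the average entropy, so the content to be established is that $\langle H(T)\rangle$ decreases under the stated hypothesis, and I would prove the statement in that form.
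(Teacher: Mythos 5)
Your central identity and your sign diagnosis are both correct, and you have in fact caught an error in the paper itself: the paper's proof expands $h_1+h_2=p\ln p+(1-p)\ln(1-p)$, whose even coefficients $a_{2k}=\tfrac{1}{2k(2k-1)}$ (of the centred variable) are positive, but this quantity equals $-\ln 2\cdot H$, so positive coefficients for $h_1+h_2$ mean \emph{negative} coefficients for $H$. The paper's concluding sentence, and the word ``increases'' in the lemma statement, are therefore backwards relative both to the computation and to the direction actually invoked in \cref{theorem:entropy_layers} (larger moments, \emph{smaller} $\langle H(T)\rangle$). Up to that sign, your centred expansion is the same decomposition the paper uses, so the skeleton of your argument matches the paper's proof.

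However, your final step is a genuine gap, and your proposed repair does not work as stated. Concavity of $H_b$ plus equal means yields monotonicity of $\langle H\rangle$ only under a convex (mean-preserving-spread) ordering of the two distributions; neither the lemma's hypothesis (all raw moments $\langle T^k\rangle$, $k\geq 2$, increase) nor \cref{lemma:moments} (which delivers exactly those raw moments, as sums of orthant probabilities) supplies such an ordering, and in any case the lemma must be proved from its stated hypothesis rather than from the application. The one-signed kernel you are looking for does exist, but you obtain it by expanding at the two endpoints instead of at the centre: from $-(1-x)\ln(1-x)=x-\sum_{k\geq2}\frac{x^k}{k(k-1)}$ and its mirror image under $x\mapsto 1-x$,
\begin{equation*}
H(x)\,\ln 2 \;=\; 1-\sum_{k\geq 2}\frac{x^{k}+(1-x)^{k}}{k(k-1)},\qquad x=t/2^{n},
\end{equation*}
valid on all of $[0,1]$. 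The sign-flip symmetry you already established, $P(T=t)=P(T=2^{n}-t)$, gives $\langle(1-x)^{k}\rangle=\langle x^{k}\rangle$, hence
\begin{equation*}
\langle H(T)\rangle\,\ln 2 \;=\; 1-2\sum_{k\geq 2}\frac{\langle T^{k}\rangle}{2^{nk}\,k(k-1)},
\end{equation*}
which is manifestly strictly decreasing in each uncentered moment, with no passage through central moments at all (note the series has no $k=1$ term, so the fixed mean is automatic under the symmetry). This closes your proof in the corrected form you proposed -- $\langle H(T)\rangle$ \emph{decreases} -- and it also makes explicit that the symmetry of $P(T=\cdot)$ is an honest hypothesis of the lemma: the paper uses it tacitly in the step $\langle\sum_k a_k(t^k+(t_{\max}-t)^k)\rangle=\langle\sum_k 2a_{2k}t^{2k}\rangle$, and it holds in every application (zero-mean weights, and the zero-mean Gaussian process limit in \cref{theorem:entropy_layers}).
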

\begin{proof} We consider the definition of the entropy, $H$, of a string (\cref{def:entropy}). We define the first and second terms by $h_1=(t/t_{max})\ln(t/t_{max})$ and $h_2=(1-t/t_{max})\ln(1-t/t_{max})$. We taylor expand $h_2$ about $t=0$:
$$
(1-t/t_{max})\ln(1-t/t_{max})=(1-t/t_{max})\sum_k \frac{1}{k} \bigg(-\frac{t}{t_{max}}\bigg)^k=\sum_k \bigg(\frac{1}{k}+\frac{1}{k-1}\bigg)\bigg(\frac{t}{t_{max}}\bigg)^k
$$

By symmetry, we see that $h_1(t)=h_2(t_{max}-t)$), and we can thus Taylor expand $h_1$ around $0$ $t_{max}$, to give:
$$
\langle H(t)\rangle = \langle \sum_k a_k (t^k +(t_{max}-t)^k) \rangle =\langle \sum_k 2 a_{2k} t^{2k} \rangle=\sum_k 2a_{2k} \langle t^{2k} \rangle
$$

Because every $a_{2k}=\frac{1}{2k(2k-1)}$ is positive, we see that increasing every even moment increases the average entropy, $\langle H(t)\rangle$.
\end{proof}

\section{Bias disappears in the chaotic regime}\label{appendix:activation_bias}

In the paper we have analyzed fully connected networks with ReLU activations, and find a general bias towards low entropy for a wide range of parameters.  In a stimulating new study,  Yang et al. (\citep{yang2019fine}) recently showed an example of a network using an erf activation function where bias disappeared with increasing number of layers.   

Here we argue that the reason for this behaviour lies in the emergence of a chaotic regime, which does not occur for ReLU activations, but does for some other activation functions.  In particular, the erf activation function is very similar to the tanh activation function used in an important series of recent papers that studied the propagation of correlations between hidden layer activations through neural networks, which they call  ``deep information propagation'' (\citep{poole2016exponential,schoenholz2016deep,lee2018deep}). They find that the activation function can have a dramatic impact on the behaviour of correlations.  In particular, these papers show that for FCNs with tanh activation, there are two distinct parameter regimes in the asymptotic limit of infinite depth:
One in which inputs approach perfect correlation (the \emph{ordered regime}), and one in which the inputs approach $0$ correlation (the \emph{chaotic regime}).  FCNs with ReLU activation do not appear to exhibit this chaotic regime, although they nevertheless have different dynamic regimes.

The particular example in \citep{yang2019fine} was for  $\sigma_w=4.0$, and $\sigma_b=0.0$, a choice of hyperparameters that, for sufficient depth, lies deep in the chaotic regime.  
In this regime, inputs with initial correlations $>-1$ and $<1$ will become asymptotically uncorrelated for sufficient depth, while  initial correlations with equal to $\pm 1$ stay fixed, as we show below.  The network is therefore equally likely to produce any odd function (on the $\{-1,1\}^n$ Boolean hypercube).
 
To confirm our conjecture above, we performed experiments to calculate the bias of tanh networks in both the chaotic and ordered regime, which we show in \cref{fig:tanh_rank_plots}. We obtain the expected results: in the chaotic regime, the bias gets weaker with number of layers, while in the ordered regime, just as was found for the ReLU activation in the main text, the bias remains, and the trivial functions gain more probability.  These experiment illustrate for tanh activation that in either the chaotic or ordered regime,  the \textit{a-priori} bias of the network becomes asymptotically degenerate with depth, either by becoming unbiased, or too biased. 

\begin{figure}[H]
\centering
\begin{subfigure}[b]{0.47\textwidth}
    \centering
        \includegraphics[width=\textwidth]{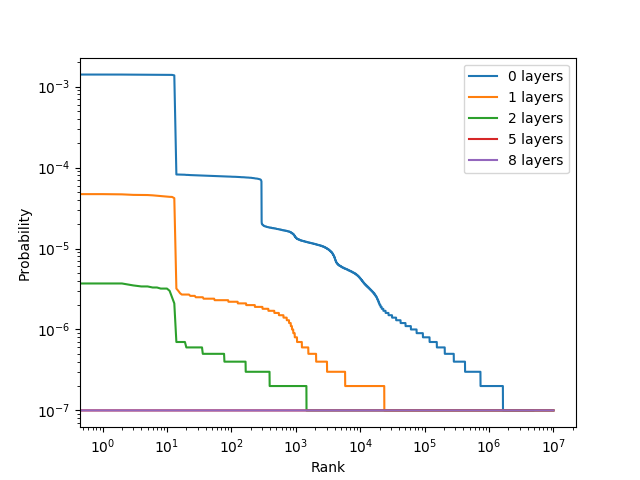}
    \caption{Chaotic regime, $\sigma_w=4.0$, $\sigma_b=0.0$}
    \label{fig:chaotic_neutral}
\end{subfigure}
~~
\begin{subfigure}[b]{0.47\textwidth}
    \centering
        \includegraphics[width=\textwidth]{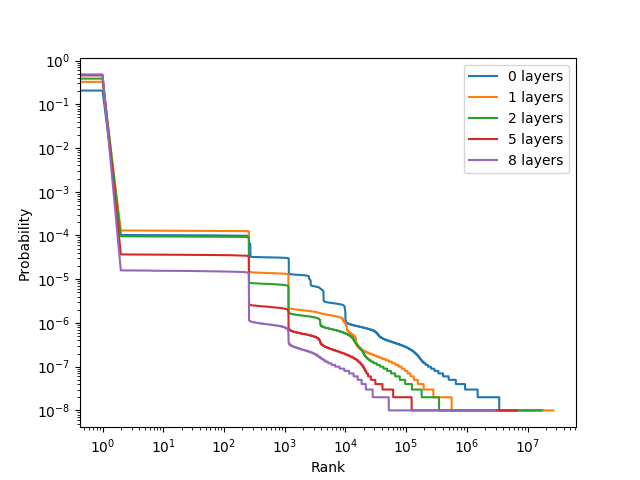}
    \caption{Ordered regime, $\sigma_w=4.0$, $\sigma_b=10.0$}
    \label{fig:lawful_neutral}
\end{subfigure}
    \caption{Probability versus rank (ranked by probability) of different Boolean functions $\{-1,1\}^7\to\{-1,1\}$ produced by a neural network with 1 hidden layer with tanh activation, and weights distributed by a Gaussian with different variance hyperparameters chosen to lie in the chaotic ($\sigma_b=0.0$) and ordered ($\sigma_b=10.0$) regimes.}
    \label{fig:tanh_rank_plots}
\end{figure}

We now explain that a simple extension of the analysis of \citep{poole2016exponential} shows that, for the special case of  $\sigma_b=0.0$ and tanh activation function, initial correlations equal to $\pm 1$ stay fixed. This happens because the RHS in Equation 5. describing the propagation of correlation in \citep{poole2016exponential} (which we replicate in \cref{propagation_of_correlations} below) 
is odd on the correlation $q_{12}^{l-1}$ (which represents the covariance  between a pair of activations for inputs $1$ and $2$ at layer $l-1$)
when $\sigma_b=0$. In addition to the fixed point they identified for the correlation being $+1$ there is therefore another unstable fixed point at $-1$.
\begin{align}\label{propagation_of_correlations} q_{12}^{l}=\mathcal{C}\left(c_{12}^{l-1}, q_{11}^{l-1}, q_{22}^{l-1} | \sigma_{w}, \sigma_{b}\right) & \equiv \sigma_{w}^{2} \int \mathcal{D} z_{1} \mathcal{D} z_{2} \phi\left(u_{1}\right) \phi\left(u_{2}\right)+\sigma_{b}^{2} \\
u_{1}=\sqrt{q_{11}^{l-1}} z_{1}, & u_{2}=\sqrt{q_{22}^{l-1}}\left[c_{12}^{l-1} z_{1}+\sqrt{1-\left(c_{12}^{l-1}\right)^{2} z_{2}}\right],
\end{align}
where $c_{12}^{l}=q_{12}^{l}(q_{11}^l q_{22}^l)^{-1}$, and $z_1,z_2$ are independent standard Gaussian variables, and the $q_{11}$ and $q_{22}$ are the variances of the activations for input 1 and input 2, respectively.
This fixed point at $-1$ ensures that points which are parallel but opposite (like opposite corners in the $\{-1,1\}^n$ hypercube) will stay perfectly anti-correlated. This agrees with the expectation that the erf/tanh network with $\sigma_b = 0$ can only produce odd functions, as the activations are odd functions. However, any other pair of points becomes uncorrelated, and this explains why every (real valued) function, up to the oddness constraint, is equally likely. This also implies that every odd Boolean function is equally likely, as the region of function space satisfying the oddness constraint has the same shape within every octant corresponding to an odd Boolean function. This is because the region in one octant is related to that on another octant by simply changing signs of elements of the function vector (a reflection transformation). 

It will be interesting to study the effect of these different dynamic regimes, for different activation functions,  on simplicity bias.


\section{Bias towards low entropy in realistic datasets and architectures}\label{bias_real_world}

In the main text we demonstrate bias towards low entropy only for the Perceptron, where we can prove certain results rigorously, and for a fully connected network (FCN).  An obvious question is: does this bias persist for other architectures or data sets?    In this Appendix we  show that similar bias indeed occurs for more realistic datasets and architectures.

We perform experiments whereby we sample the parameters of a convolutional neural network (CNN) (with a single Boolean output), and evaluate the functions they obtain on subsets of the standard vision datasets  MNIST or CIFAR10. Our results suggest that the bias towards low entropy (high class imbalance) is a generic property of ReLU-activated neural networks. This has in fact been pointed out previously, either empirically\cite{how_to_train_resnet2019}, or for the limit of infinite depth\cite{lee2018deep}. Extending our analytic results to these more complicated architectures and datasets is probably extremely challenging, but perhaps possible for some architectures, by analyzing the infinite-width Gaussian process (GP) limit.

\begin{figure}[H]
\centering
\begin{subfigure}[b]{0.48\textwidth}
    \centering
        \includegraphics[width=\textwidth]{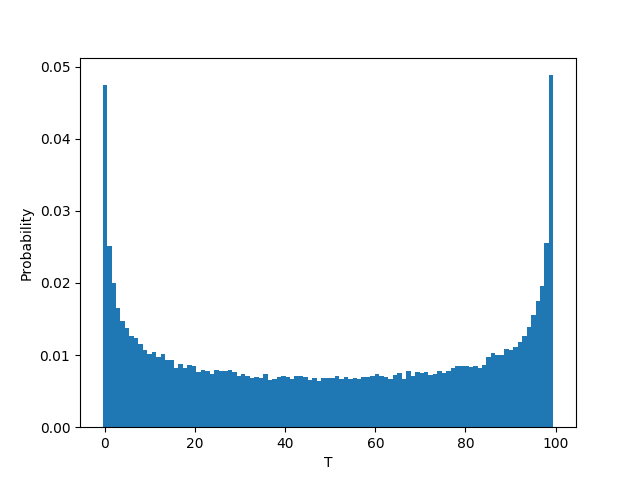}
    \caption{MNIST, $\sigma_b=0.0$}
    \label{fig:p_t_mnist_sigmab0.0}
\end{subfigure}
~~
\begin{subfigure}[b]{0.48\textwidth}
    \centering
        \includegraphics[width=\textwidth]{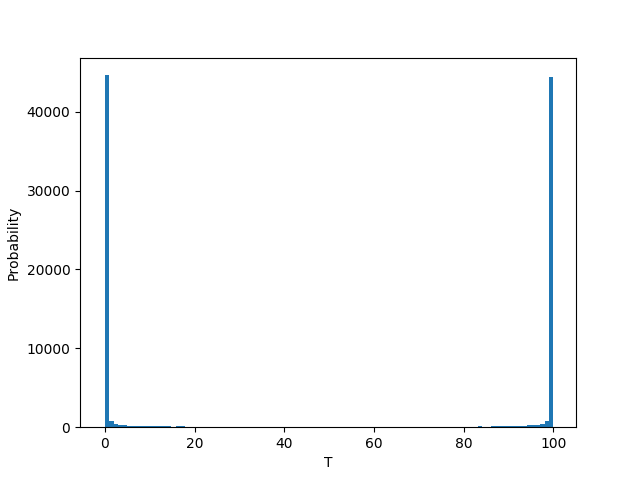}
    \caption{MNIST,$\sigma_b=1.0$}
    \label{fig:p_t_mnist_sigmab1.0}
\end{subfigure}
\begin{subfigure}[b]{0.48\textwidth}
    \centering
        \includegraphics[width=\textwidth]{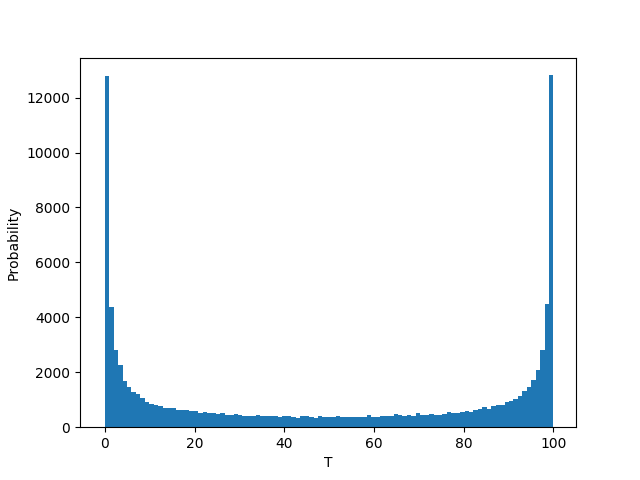}
    \caption{CIFAR10, uncentered,$\sigma_b=0.0$}
    \label{fig:p_t_cifar_centered_sigmab1.0}
\end{subfigure}
\begin{subfigure}[b]{0.48\textwidth}
    \centering
        \includegraphics[width=\textwidth]{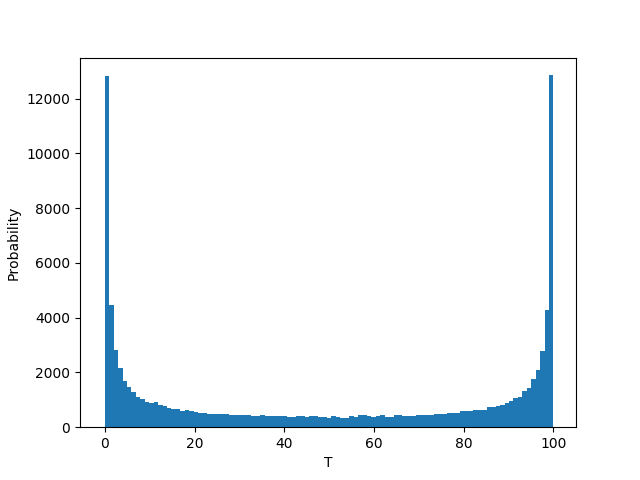}
    \caption{CIFAR10, centered, $\sigma_b=0.0$}
    \label{fig:p_t_cifar_uncentered_sigmab1.0}
\end{subfigure}
    \caption{Probability of different values of $T$ for a CNN with $4$ layers, no pooling, and ReLU activations. The parameters were sampled $10^5$ times and the network was evaluated on a fixed random sample of $100$ images from MNIST or CIFAR10. The parameters were sampled i.i.d. from a Gaussian, with paramaters $\sigma_w=1.0$, and $\sigma_b=0.0, 1.0$. The input images from CIFAR10 where either left uncentered (with values in range $[0,1]$) (\emph{uncentered}), or where centered by substracting the mean value of every pixel (\emph{uncentered}).}
    \label{fig:p_t_mnist}
\end{figure}

\begin{figure}[H]
\centering
\begin{subfigure}[b]{0.48\textwidth}
    \centering
        \includegraphics[width=\textwidth]{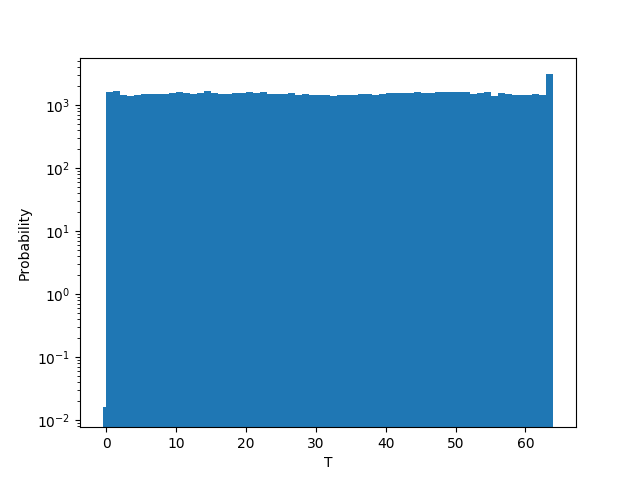}
    \caption{}
    \label{fig:p_t_subsample_uncentered}
\end{subfigure}
\begin{subfigure}[b]{0.48\textwidth}
    \centering
        \includegraphics[width=\textwidth]{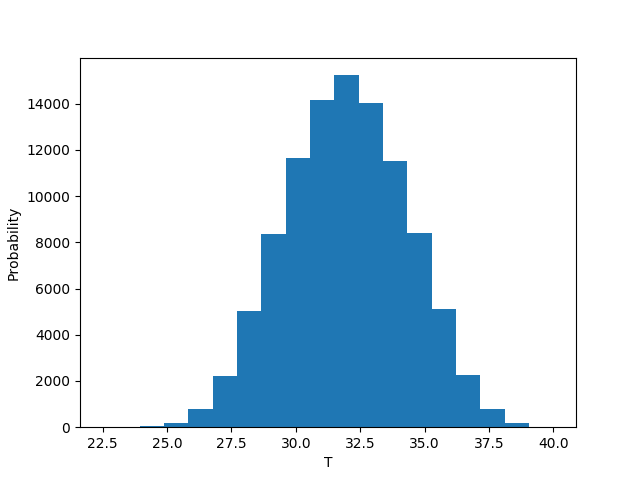}
    \caption{}
    \label{fig:p_t_subsample_uncentered}
\end{subfigure}
    \caption{Probability of different values of $T$ for the perceptron with $n=7$ input neurons and $\sigma_b=0.0$. The parameters were sampled $10^5$ times and the perceptron was evaluated on a random subsample of size $64$ of either $\{0,1\}^n$ ((a)) or $\{-1,1\}^n$ ((b)). The weights were sampled i.i.d. from a Gaussian, with paramaters $\sigma_w=1.0$.}
    \label{fig:p_t_mnist}
\end{figure}

\section{Effect of the bias term on the perceptron on centered data}\label{bias_centered_percepron}

Here we show that the bias towards low entropy is recovered for the perceptron on centered ($\{-1,1\}^n$) inputs, when the bias term is increased.

\begin{figure}[H]
    \centering
        \includegraphics[width=0.6\textwidth]{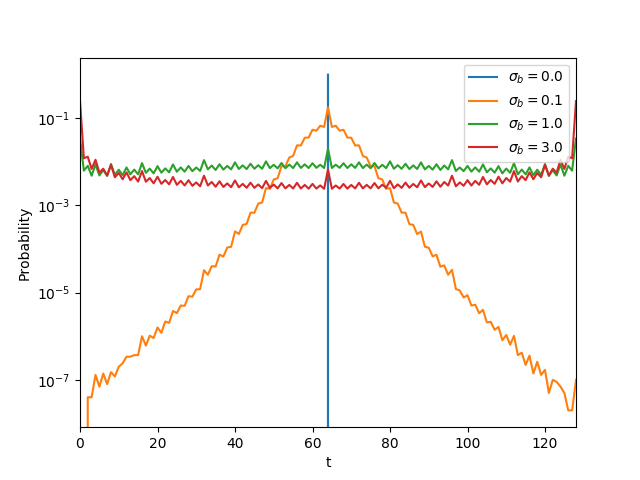}
    \caption{$\sigma_b=1.0$}
    \caption{Probability of different values of $T$ for the perceptron evaluated on $\{-1,1\}^n$ inputs and varying $\sigma_b=0.0$. The parameters were sampled $10^7$ times. The weights were sampled i.i.d. from a Gaussian, with paramaters $\sigma_w=1.0$.}
    \label{fig:bias_centered_perceptron}
\end{figure}

\section{Connection between the bias at initialization, inductive bias and generalization}\label{initialization_vs_induction}

In this appendix we describe more formally the connection between the distribution over functions at initialization and the inductive biases, when training with an optimizer like SGD, and the link to generalisation.

Following the ideas in ~\citep{GVP}, we will first formally introduce an exact Bayesian classifier for classification.

Let $P(\theta)$ be a prior distribution over parameters, which induces a prior distribution over Boolean functions given by $P(f):=P(\Theta_f)$, where we define $\Theta_f$ to be the set of parameter values that produce the function $f$. We assume a 0-1 likelihood $P(D|f)$ for data $D=\{(x_i,y_i)\}_{i=1}^m$, defined as 
\begin{equation*}
    P(D|f) = \begin{cases} 1\textrm{ if } \forall i f(x_i)=y_i \\
                            0\textrm{ otherwise }
            \end{cases}
\end{equation*}

The likelihood on parameter space is defined as $P(D|\theta):= P(D|f_\theta)$, where $f_\theta$ is the function produced by parameter $\theta$. The Bayesian posterior on function space is then
\begin{equation*}
    P(f|D) = \frac{P(D|f)P(f)}{P(D)},
\end{equation*}
where $P(D) = \sum_f P(D|f)P(f)$ is the \emph{marginal likelihood} of the data. In parameter space, the posterior is just $P(\theta|D) = \frac{P(D|\theta)P(\theta)}{P(D)}$

For an exact Bayesian algorithm, like the one described above, the inductive bias (the preference of some functions versus others, for a given dataset), is fully encoded in the prior $P(f)$.

The connection with generalisation for the Bayesian learner described above can be expressed through the PAC-Bayes theorem \citep{mcallester1999some,GVP} which gives bounds on the expected generalization performance  that depend on the marginal likelihood, which is just the probability of the labelling of the data under the prior distribution. In this picture, the distribution $P(f)$ at initialization determines the full inductive bias, but only to extent to which the training algorithm approximates the Bayesian posterior, with $P(f)$ as its prior.

In \citep{GVP} it was shown that PAC-Bayes bounds work well for an FCN and a CNN trained on CIFAR, and also for an FCN on Boolean data.   The success of these bounds is indirect evidence that the training algorithm (here different variants of SGD) is indeed behaving somewhat like a Bayesian sampler.

We are aware that the claim in the paragraph above may be controversial, given that there is also a literature arguing that SGD itself is an important source of the inductive bias of deep learning.   So it is important to also find direct evidence that that stochastic gradient descent (SGD) approximates the Bayesian posterior with prior $P(f)$. 

In\citep{GVP}, Fig 4, the probability $P(f)$ was compared to the probability that two variants of SGD generate functions, for the Boolean system.  Good agreement was found, suggesting that SGD indeed samples functions with a probability close to $P(f)$, at least on the log scale and for these problems.

Here we provide further evidence for this behaviour in~\cref{fig:SGDvsABI}, where we  compare the probabilities of finding different Boolean functions upon training a fully connected neural network with SGD to learn a relatively simple Boolean function versus the probabilities of obtaining those functions by randomly sampling parameters until they fit the data (which we refer to as approximate Bayesian inference, or ABI). We see that most functions in the sample  show very similar probabilities to be obtained by either SGD or ABI.

Although we don't yet have a formal theoretical explanation of this correlation, we can make the following heuristic argument: The parameter-function map (defined in \cref{section:definitions}), is hugely biased, typically over many orders of magnitude. The basins of attraction of the functions are also likely to vary over many orders of magnitude in size, so that SGD is much more likely to find functions with larger $P(f)$, which have larger basins, than functions with smaller $P(f)$ (See also~\citep{wu2017towards}). This argument would be sufficient to show that the bias upon initialisation (or equivalently upon uniform random sampling of parameters) has an important effect on the functions that SGD finds. However, empirically we find a stronger result, which is that SGD approximately samples functions with a probability directly proportional to $P(f)$.  This agreement suggests another stronger ansatz: 
As long as SGD samples the parameters more or less uniformly (within the region of likelihood $1$), then on function space the functions are samples with probabilities of the same orders of magnitude as $P(f|D)$.

\begin{figure}[H]
\centering
    \includegraphics[width=\textwidth]{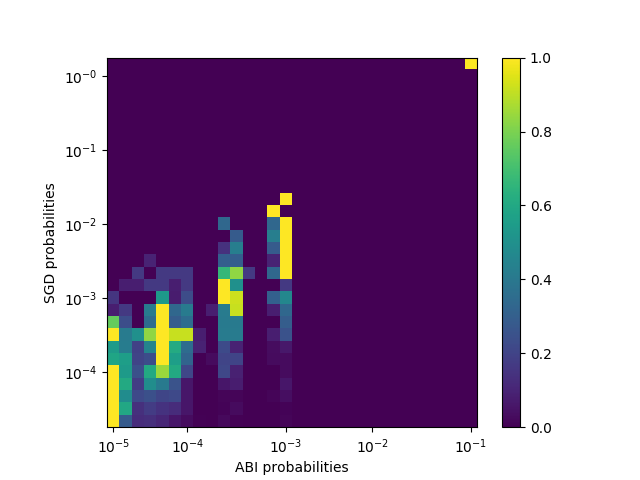}
    \caption{Probabilities of finding functions by SGD versus by randomly sampling parameters (ABI), conditioned on 100\% training accuracy, for a training set of size 32 for learning a Boolean function on $\{0,1\}^7$ with a fully connected neural network with layer widths (7,40,40,1). Both the parameter sampling and SGD initialization where i.i.d. Gaussians with weight variances of $1/(\textrm{layer width})$, and bias variances of $1$. SGD used a batch size of 8, and cross-entropy loss, and was stopped as soon as 100\% accuracy was reached. Histogram shows number of functions on each bin, and is normalized row-wise, so that a value of 1.0 corresponds to the maximum number of functions in the row, and 0.0 corresponds to 0.0 functions. These are mapped to colors from yellow (1.0) to purple (0.0) as shown in the colorbar.}
    \label{fig:SGDvsABI}
\end{figure}

Finally, preliminary results on MNIST and CIFAR data sets (using the GP approximation) exhibit a similar scaling to what is observed in~\cref{fig:SGDvsABI}. We are currently working further on this important and complex question of how SGD samples functions.

\section{Effect of bias on learning}\label{bias_vs_learning}

The  effect of inductive biases such as the ones we discuss on learning is a vast and open research project. In particular, we expect that bias towards low entropy should play a bigger role when trying to learn class-imbalanced data. In class-imbalanced problems, one is often interested in quantities beyond the raw test accuracy. For example, the \emph{sensitivity} (the fraction of test-set misclassifications on the rare class), and the \emph{specificity} (the fraction of test-set misclassifications on the common class). Furthermore, the use of tricks, like oversampling the rare class during optimisation, makes the formal analysis of practical cases even more challenging. In this section, we show preliminary results illustrating some of the effects that entropy bias in $P(f)$ can have on learning.

In the experiments, we shift the bias term of the output neuron $b\mapsto b'=b+\textrm{shift}$. This causes the functions at initialisation to have distributions $P(T)$ which may be biased towards higher or lower values of $T$ (mapping most inputs to $0$ or $1$). We measure the average $\langle T\rangle$ for different choices of the shift hyperparameter, and train the network, using SGD, on a class-imbalanced dataset. Here we perform this experiment on a CNN with 4 layers (and no pooling), and a FCN with 1 layer. The dataset consists of a sample of either MNIST (10 classes) or balanced EMNIST (47 classes), where a single class is labelled as 0, and the rest are labelled as 1. 

We see in~\cref{fig:learning_vs_shift} that values of the shift hyperparameter producing large values $\langle T \rangle$ give higher test accuracy. We suggest that this could be because the new parameterisation induced by the shifted bias term\footnote{Note that the shifted bias reparameterisation is equivalent to a shifted \emph{initialization} in the original parameterisation. For more complex reparameterisation, this may not be the case}, causes the inductive bias is more ``atuned'' to the correct target function (which has a class imbalanced of 1:10 and 1:47, respectively for MNIST and balanced EMNIST). However, the detailed shape of the curve seems architecture and data dependent. For instance, for the FCN (with ReLU or tanh activation) trained on MNIST we find a clear accuracy peak around the true value of $\langle T \rangle$, but not for the others.

We also measured the sensitivity and 99\% specificity (found by finding the smallest threshold value of the sigmoid unit, giving 99\% specificity). Sensitivity seems to follow a less clear pattern. For two of the experiments, it followed the accuracy quite closely, while for FCN trained on MNIST it peaked around a positive value of the ship hyperparameter.


\begin{figure}
\centering
\begin{subfigure}[b]{0.48\textwidth}
    \centering
        \includegraphics[width=\textwidth]{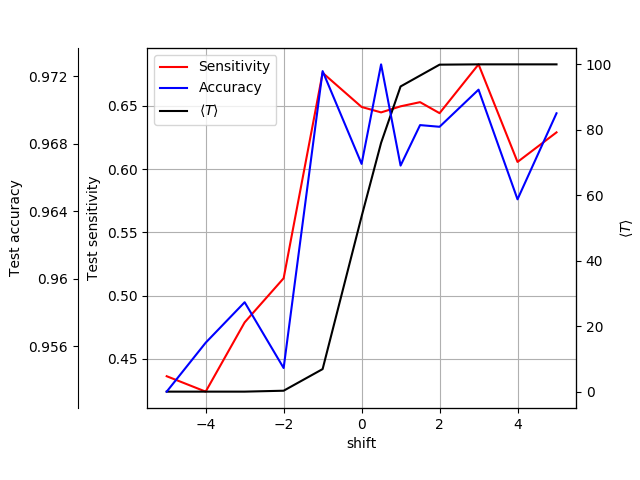}
    \caption{CNN with ReLU,MNIST}
    \label{fig:learning_vs_shift_cnn_mnist}
\end{subfigure}
\begin{subfigure}[b]{0.48\textwidth}
    \centering
        \includegraphics[width=\textwidth]{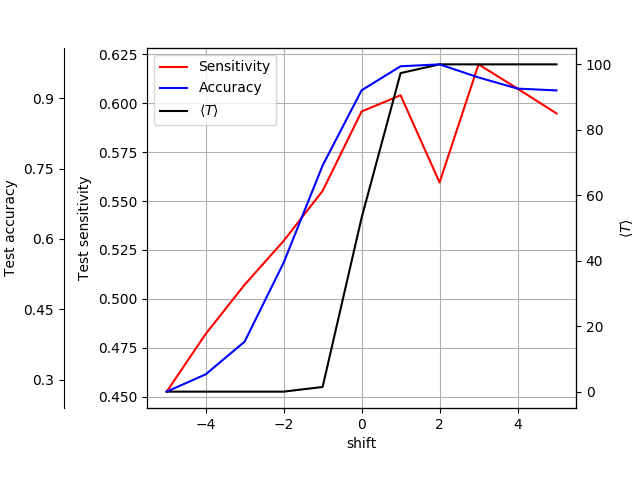}
    \caption{FCN with ReLU, MNIST}
    \label{fig:learning_vs_shift_fc_mnist}
\end{subfigure}
\begin{subfigure}[b]{0.48\textwidth}
    \centering
        \includegraphics[width=\textwidth]{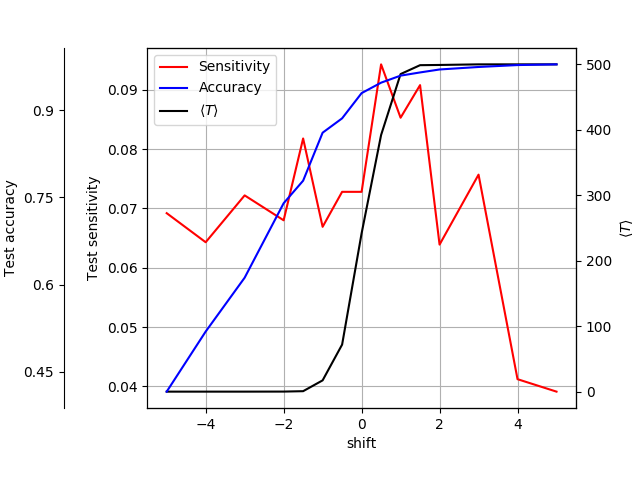}
    \caption{FCN with ReLU, EMNIST}
    \label{fig:learning_vs_shift_fc_emnist}
\end{subfigure}
\begin{subfigure}[b]{0.48\textwidth}
    \centering
        \includegraphics[width=\textwidth]{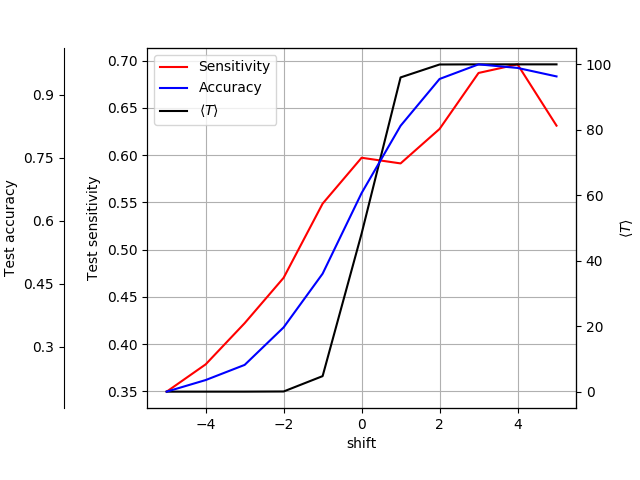}
    \caption{FCN with tanh, MNIST}
    \label{fig:learning_vs_shift_fc_emnist}
\end{subfigure}
    \caption{Test accuracy, sensitivity (at 99\% specificity), and $\langle T \rangle$ versus the shift hyperparameter (shift of the bias term in the last layer), for different architectures and datasets. The networks were trained with SGD (batch size 32, cross entropy loss), until reaching 100\% training accuracy. Accuracies and sensitivities are averages over 10 SGD runs with random initializations ($\sigma_b=0.0$, $\sigma_w=1.0$). The dataset had size $100$ for MNIST, and $500$ for EMNIST. Images are centered, so pixel values have their mean over the whole dataset substracted. The values of $\langle T\rangle$ are estimated from $100$ random parameter samples and evaluating on the same data we train the network on.}
    \label{fig:learning_vs_shift}
\end{figure}

These results suggest that looking at architectural changes that affect the entropy bias of $P(f)$ can have a significant effect on learning performance for class-imbalanced tasks. For example, for the tanh-activated FCN, increasing the shift hyperparameter above $0$ improves both the accuracy and sensitivity significantly. As we mentioned at the beginning, understanding the full effect of entropy bias, and other biases on $P(f)$, on learning is a large research project. In this paper, we focused on explaining properties of $P(f)$, while only showing preliminary results on the effect of these properties on learning.

\end{document}